\documentclass[a4paper,10pt]{article}


\usepackage[utf8]{inputenc}





\usepackage{hyperref}
\usepackage{amsmath}
\usepackage{amssymb}
\usepackage{amsthm}
\usepackage{thmtools,thm-restate}
\usepackage{bm}
\usepackage{algorithm} 
\usepackage{algpseudocode}
\usepackage{mathtools}
\usepackage{aligned-overset}
\usepackage{comment}
\usepackage{cleveref}

\usepackage{xcolor}
\usepackage{enumerate}
\usepackage{breqn}
\usepackage{subcaption}

\usepackage{doi}
\usepackage{authblk}
\usepackage{tikz}
\usepackage{charter}

\usepackage{macros}

\definecolor{labelkey}{rgb}{0,0.08,0.45}
\definecolor{refkey}{rgb}{0,0.6,0.0}
\definecolor{Brown}{rgb}{0.45,0.0,0.05}
\definecolor{dgreen}{rgb}{0.00,0.49,0.00}
\definecolor{dblue}{rgb}{0,0.08,0.75}

\hypersetup{
colorlinks,
linkcolor=dgreen,
citecolor=dblue
}

\usepackage[round]{natbib}


\topmargin     0.0cm
\oddsidemargin -0.2cm 
\textwidth     16.8cm
\headheight    0.0cm
\textheight    22.4cm
\parindent     6mm
\parskip       1pt
\tolerance     1000



\title{ { \sffamily General Tail Bounds for Non-Smooth Stochastic Mirror Descent
} }

\author[]{Khaled Eldowa}
\author[]{Andrea Paudice}

\affil[]{\footnotesize Department of Computer Science, University of Milan, Via Giovanni Celoria 18, 20133 Milano, Italy}

\date{}

\begin{document}
\maketitle

\begin{abstract}
In this paper, we provide novel tail bounds on the optimization error of Stochastic Mirror Descent for convex and Lipschitz objectives. Our analysis extends the existing tail bounds from the classical light-tailed Sub-Gaussian noise case to heavier-tailed noise regimes. We study the optimization error of the last iterate as well as the average of the iterates. We instantiate our results in two important cases: a class of noise with exponential tails and one with polynomial tails. A remarkable feature of our results is that they do not require an upper bound on the diameter of the domain. Finally, we support our theory with illustrative experiments that compare the behavior of the average of the iterates with that of the last iterate in heavy-tailed noise regimes.
\end{abstract}

 \vspace{1ex}
\noindent

\section{Introduction}
\emph{Stochastic Mirror Descent} (SMD) and its more popular Euclidean counterpart \emph{Stochastic (sub-)Gradient Descent} (SGD) are at the core of modern machine learning. For example, they are widely used for performing large-scale optimization tasks, as in the case of empirical (or regularized) risk minimization, and for minimizing the statistical risk in kernel methods. In this paper, we study the performance of SMD in the general problem of minimizing a (non-smooth) convex and Lipschitz function given only noisy oracle access to its (sub-)gradients.
SGD was first introduced by \cite{Ermoliev1969},
who studied the convergence of the iterates for convex Lipschitz objectives. Subsequent studies focused on deriving in-expectation bounds on the optimization error of the average of the iterates. Denoting with $T$ the number of iterations, these bounds are of the order of $1/\sqrt{T}$. In their seminal work, \citep{Nemirovski2009} introduced SMD as a non Euclidean generalization of SGD and showed that it enjoys the same $1/\sqrt{T}$ bound. The shortcoming of in-expectation bounds is that they do not offer guarantees on individual runs of the algorithm. This is especially limiting when multiple runs of the algorithm are not possible, as in large scale problems,
or when the data arrives in a stream. Tail bounds offer stronger guarantees that apply to individual runs of the algorithms. For a fixed confidence level $\delta \in(0,1)$, a straightforward application of Markov's inequality, gives a bound of the order $1/(\delta \sqrt{T})$ that holds with probability at least $1-\delta$. This bound is much worse than its in-expectation counterpart, even for moderately small $\delta$. Tighter tail bounds with only an overhead of order $\sqrt{\log(1/\delta)}$ have been obtained under a sub-Gaussian assumption on the noise \citep{Liu2023}. 

Recent works \citep{Zhang2020} show that in some settings, the sub-Gaussian assumption is not appropriate, and the noise is better modelled by heavier tailed distributions. Most works studying tail bounds for SMD (SGD) under heavy-tailed noise consider the extreme cases where the noise is only assumed to have finite variance or  lower order moments (e.g., \cite{Gorbunov2020,Nguyen2023}). Under these assumptions, it is necessary to employ some form of truncation of the (sub-)gradients to obtain a poly-logarithmic dependence on $1/\delta$. Instead, we consider less-studied intermediate regimes for the noise, including two classes of sub-Weibull and polynomially tailed distributions. The former is a class of random variables with exponentially decaying tails (including sub-Gaussian and sub-exponential distributions), which has been shown to be relevant in practical applications \citep{Vladimirova2020}, and has been studied in various machine learning and optimization problems \citep{Madden2020,Kim2022,Li2022,Li2023,Wood2023}. The latter class, which includes some Pareto and power law distributions, has also recently captured interest in the machine learning community \citep{Bakhshizadeh2023,Lou2022}. Moreover, we study the performance of SMD in its plain form (i.e., without truncation), which, in practice, is the more widely used approach. Also, truncation introduces at least one additional parameter, the truncation level, further complicating the tuning of the algorithm in practice. Ideally then, one would like to avoid truncation unless the noise is extremely heavy-tailed.

On a different thread, we notice that most results in the non-smooth case concern the average of the iterates generated by SMD during its execution. However, in practice, taking as solution just the last iterate is by far the preferred heuristic. Consequently, more recent works \citep{Shamir13,Harvey2019,Jain2021} have focused on developing an understanding of the theoretical performance of this approach. In this case, state-of-the-art tails bounds are of almost (up to $\log(T)$ factors)  the same order as for the average of the iterates, though the analyses are still restricted to the sub-Gaussian noise regime.

Motivated by these facts, we derive novel and general tail bounds for both the average of the iterates (\Cref{sec:avg}) and the last iterate (\Cref{sec:last}). In their most general form, our results require controlling the tails of certain martingales depending only on the noise. We then show how to instantiate these bounds in the two considered noise models. Unlike most tail bounds in the (non-smooth) convex and Lipschitz setting, our results do not require a bound on the diameter of the domain. On the technical side, we extend existing analysis techniques and concentration results to cope with the challenges posed by our more general problem setting. In particular, the combination of the heavy-tailed noise with the unbounded domain and the peculiar recurrences arising in the analysis of the last iterate. Finally, some of our results for the average of the iterates show an intriguing \emph{two-regime} phenomenon (also observed in \citep{Lou2022} in a different and more specific setting), where the terms accounting for the heavy-tailed behavior of the noise decay more quickly with the horizon $T$. As our results for the last iterate do not exhibit this behavior, we investigate further this separation in the experiments (\Cref{sec:exp}).

\section{Related Works}
In the case of \emph{sub-Gaussian noise}, the performance of SGD and SMD has been analyzed in \citep{Harvey2019,Jain2021} and \citep{Liu2023} respectively. In \citep{Harvey2019}, the authors consider the setting with a \emph{bounded domain}. They provide tail bounds for  the average of the iterates and the last iterate of the order $\sqrt{\log(1/\delta)/T}$ and  $\sqrt{\log(1/\delta)/T} \cdot \log(T)$ respectively. \cite{Jain2021} show that when the time horizon is known in advance, the last iterate enjoys the same tail bound as the average of the iterates as long as a carefully designed step-size schedule is used. Notably, this result does hold for \emph{unbounded domains}. \cite{Liu2023} consider the more general framework of SMD with \emph{unbounded domains}, and analyze the performance of the  average of the iterates. The authors prove tail bounds of the order of $\sqrt{\log(1/\delta)/T}$ and $\sqrt{\log(1/\delta)/T} \cdot \log(T)$ for the case of known and unknown $T$ respectively.

On the other extreme of the spectrum, another research line considers very general models where the noise is only assumed to posses moments of order at most $p \in (1, 2]$. All these works, consider modifications of the standard SMD where the oracle answers are pre-processed via some form of \emph{truncation}. Truncation schemes allow one to obtain bounds of order $\log(1/\delta)/T^{(p-1)/p}$, regardless of the specific distribution of the noise.
For $p=2$, \cite{Parletta2022} provide tail bounds for several averaging schemes under the assumption of a \emph{bounded domain}, where both the cases of known and unknown $T$ are considered. The \emph{unbounded domain} setting is analyzed in \citep{Gorbunov2021}, although only in the case when $T$ in known. Similar results have been obtained for \emph{smooth} convex objectives \citep{Nazin2019,Gorbunov2020,Holland2022,Nguyen2023}, where both \emph{bounded} and \emph{unbounded domains} have been considered. 
Our work is conceptually close to that of \cite{Lou2022}, which explores the limits of plain SGD in the specific problem of least-squares regression with linear models. In that paper, the authors derive tails bounds for the average of the iterates under polynomially-tailed noise. We recover similar results in our more general problem setting, including the \emph{two-regime} behavior highlighted therein.

\section{Problem Setting}
We consider the problem of minimizing a convex function $f \colon \Xs \rightarrow \R$, where the domain $\Xs \subseteq \R^d$ is a non-empty, closed, and convex set over which $f$ admits a minimum. 
For any $x \in \Xs$, let $\partial f(x)$ denote the sub differential at $x$.
Access to the function $f$ is provided through a noisy first-order oracle.
At each step $t$, the learner queries the oracle with a point $x_t \in \Xs$ and receives $\hat{g}_t \in \R^d$ such that $\hat{g}_t = g_t - \xi_t$, where $g_t \in \partial f(x_t)$ and $\E[\xi_t \, | \, \xi_1, \dots, \xi_{t-1}] = 0$.

In the following, we use $\|\cdot\|$ to refer to a fixed arbitrary norm in $\R^d$.
Let $\psi \:\colon \: \R^d \rightarrow (-\infty, +\infty]$ be a convex function, and define $\dom(\psi) \coloneqq \{x\in\R^d \,\colon\, \psi(x) < +\infty\}$. For any $y \in \dom(\psi)$ at which $\psi$ is differentiable, the Bregman divergence at $y$ induced by $\psi$ is defined as
\begin{equation*}
    \breg{x}{y} = \psi(x) - \psi(y) - \langle x - y, \nabla \psi(y) \rangle \,,
\end{equation*}
for any $x \in \dom(\psi)$. 
For some $\lambda \geq 0$, $\psi$ is said to be $\lambda$-strongly convex with respect to $\|\cdot\|$ if $\psi(x) \geq \psi(y) + \langle x - y, g \rangle + (\lambda / 2) \|x-y\|^2$ for any $x,y \in \dom(\psi)$ and $g \in \partial\psi(y)$. This directly implies that $\breg{x}{y} \geq (\lambda / 2) \|x-y\|^2$ if $\psi$ is differentiable at $y$. 
To specify an instance of the mirror descent framework (see \Cref{alg:smd}), one needs to select a regularizer function $\psi$, which we will assume to satisfy the following:\footnote{For a set $S \subseteq \R$, $\inter(S)$ and $\partial S$ refer to its interior and boundary respectively.}
\begin{assumption} \label{assum:reg}
    The regularizer function $\psi \:\colon\: \R^d \rightarrow (-\infty, +\infty]$ is closed, differentiable on $\inter(\dom(\psi))$, $1$-strongly convex with respect to $\|\cdot\|$, and satisfies $\Xs \subseteq \dom(\psi)$ and $\inter (\dom(\psi)) \neq \{\}$. Moreover, it satisfies at least one of the following:
    (i) $\lim_{t \rightarrow \infty} \|\nabla \psi(x_t)\|_2 \rightarrow \infty$, for any sequence $(x_t)_t$ in $\inter(\dom(\psi))$ with $\lim_{t \rightarrow \infty} x_t \rightarrow x \in \partial \dom(\psi)$;
    (ii) $\Xs \subseteq \inter(\dom(\psi))$.
\end{assumption}
This is a standard assumption (see \citep{Beck2003} or \citep[Section 6.4]{Orabona2023}) that serves to insure that the iterates $(x_t)_t$ returned by the mirror descent algorithm are well-defined.
\begin{algorithm}[t] 
    \caption{Stochastic Mirror Descent} 
    \label{alg:smd}
    \begin{algorithmic} 
        \State \textbf{input:} regularizer $\psi$ satisfying \Cref{assum:reg}, non-increasing sequence of positive learning rates $(\eta_t)_t$ 
        \State \textbf{initialization:} choose $x_1 \in \inter(\dom(\psi))$
        \For{$t = 1, \ldots$}
            \State 
            output $x_t$ and receive $\hat{g}_t$ 
            \State set $x_{t+1} \gets \argmin_{x \in \Xs} \langle \hat{g}_t, x \rangle + \frac{1}{\eta_t} \breg{x}{x_t}$
        \EndFor
    \end{algorithmic}
\end{algorithm}
Denote by $\|\cdot\|_*$ the dual norm of $\|\cdot\|$, that is $\|\cdot\|_* \coloneqq \sup_{\|w\| \leq 1} \langle \cdot, w \rangle$. The following assumption implies that $f$ is Lipschitz with respect to $\|\cdot\|$.
\begin{assumption} \label{assum:lip}
    There exists a constant $G > 0$ such that for all $x \in \Xs$ and $g \in \partial f(x)$, $\|g\|_* \leq G$.
\end{assumption}

Let $f^* = \min_{x \in \Xs}f(x)$ and $x^* \in  \argmin_{x \in \Xs}f(x)$. For any $x \in \Xs$, we define the optimization error at $x$ as $f(x) - f^*$.
For some time horizon $T$, our goal in this work is to prove high probability bounds on the optimization error of the average iterate $\Bar{x}_T = (1/T)\sum_{t=1}^T x_t$ and the last iterate $x_T$ produced by \Cref{alg:smd}. 
Towards that end, we impose some restrictions on the noise vectors $(\xi_t)_t$. 
For what follows, let $\F_{t}$ be the sigma algebra generated by $(\xi_1, \dots, \xi_{t-1})$. Moreover, we will use $\E_t[\cdot]$ to denote $\E[\cdot \,|\, \F_{t-1}]$.
The following assumption provides a bound on the conditional second moment of $\|\xi_t\|_*$.
\begin{assumption} \label{assum:variance}
There exists a constant $\sigma > 0$ such that for every step $t \geq 1$,  it holds that $\E_t\bsb{\|\xi_t\|^2_*} \leq \sigma^2$.
\end{assumption}
This assumption is sufficient for proving in-expectation bounds, and tail bounds, but only of the order $1/(\delta \sqrt{T})$. We only use this as a base assumption when stating general facts.
Instead, we will instantiate our results under two different (stronger) assumptions on the noise terms $(\xi_t)_t$. The first assumption involves the class of sub-Weibull random variables \citep{Vladimirova2020,Kuchibhotla2022}, which generalizes the notions of sub-Gaussian and sub-exponential random variables. For $\theta >0 $ and $ \wnorm > 0$, we say that a random variable $X$ is sub-Weibull$(\theta, \wnorm)$ if it satisfies
$
    \E \bsb{ \exp \brb{\brb{|X| / \wnorm}^{1/\theta}} } \leq 2 \,.
$
At $\theta=1/2$, we recover the definition of a sub-Gaussian random variable, and at $\theta=1$, we recover that of a sub-exponential random variable \cite[Chapter 2]{Vershynin2018}. Via Markov's inequality, one can show that $X$ being sub-Weibull$(\theta,\wnorm)$  implies that for $t \geq 0$,
$
    P(|X| \geq t) \leq 2 \exp \brb{ - \lrb{t / \wnorm}^{1/\theta}} \,.
$
In this work, our focus is on the heavy-tailed regime where $\theta \geq 1$, though we also consider the canonical case of $\theta=1/2$ for comparison. In particular, we will consider the following assumption:
\begin{assumption} \label{assum:weibull}
    For some $\theta \geq 1$, there exists a constant $\wnorm > 0$ such that for every step $t \geq 1$, 
    $\|\xi_t\|_*$ is sub-Weibull($\theta,\wnorm$) conditioned on $\F_{t-1}$; that is,
    \begin{equation*}
        \E \Bsb{ \exp \Brb{\brb{\|\xi_t\|_* / \wnorm}^{1/\theta}} \, \big| \, \F_{t-1}} \leq 2 \,.
    \end{equation*}
\end{assumption}

Alternatively, we also consider the following assumption. 
\begin{assumption} \label{assum:poly}
    For some $p > 4$, there exists a constant $\wnorm > 0$ such that for every step $t \geq 1$,
    \begin{equation*}
        \E \Bsb{ \brb{\|\xi_t\|_* / \pnorm}^{p} \, \big| \, \F_{t-1}} \leq 1   \,.
    \end{equation*}
\end{assumption}
The above implies, via Markov's inequality, that $X$ satisfies the following polynomially decaying tail bound:
$
    P(|X| \geq t)\leq  (\pnorm / t)^p 
$
for any $t>0$. We only consider $p>4$ as the analyses in the sequel require studying the concentration properties of terms involving $\|\xi_t\|^2_*$.

\section{Average Iterate Analysis} \label{sec:avg}
When one's concern is studying the error of the average of the iterates $\Bar{x}_T$ at some time horizon $T$, a fairly standard analysis under \Cref{assum:lip,assum:reg,assum:variance} yields that
\begin{align*}
      f(\Bar{x}_T) - f^* 
      \leq  \frac{1}{\eta_T T} \bigg( \breg{x^*}{x_{1}} + \sum_{t=1}^T \eta_t^2 (G^2 + \sigma^2) + \underbrace{\sum_{t=1}^T \eta_t \ban{ \xi_t, x_t - x^*}}_{\coloneqq U} + \underbrace{\sum_{t=1}^T \eta_t^2 \brb{\|\xi_t\|_*^2 - \E_t \|\xi_t\|_*^2}}_{\coloneqq V} \bigg) \,.
\end{align*}
It is easy to verify that $\E U = \E V = 0$, which immediately yields a bound on the error in expectation. Proving a high-probability bound, on the other hand, requires controlling both terms in high probability. For $V$, this solely depends on the assumed statistical properties of $\|\xi_t\|_*$. Whereas for $U$, one also needs to control the terms $\|x_t - x^*\|$. This presents a major obstacle if one would like to avoid scaling with a bound on the diameter of the domain in terms of $\|\cdot\|$, which might not exist in some cases. In the recent work of \cite{Liu2023}, a more careful analysis distills this problem, roughly speaking, to bounding a term of the form
\begin{equation*}
    \summ_{t=1}^T w_t \eta_t \ban{ \xi_t, x_t - x^*} - v_t \|x_t-x^*\|^2,
\end{equation*}
where $(w_t)_t$ and $(v_t)_t$ are two carefully chosen sequences of weights. Assuming that the terms $\|\xi_t\|_*$ are conditionally sub-Gaussian, as done in \citep{Liu2023}, and applying the standard Chernoff method to bound this term in high probability, this refinement has the effect of normalizing the vectors $x_t-x^*$. Unfortunately, this ``white-box'' approach does not readily extend beyond the light-tailed case. For instance, if the noise terms are sub-exponential, it is not clear how to deal with the additional hurdle that the moment-generating function of $\ban{ \xi_t, x_t - x^*}$ is only bounded in a constrained range, whose diameter is inversely proportional to $\|x_t-x^*\|$.

In the more recent work of \cite{Nguyen2023}, a different weighting scheme is proposed for the purpose of analyzing a clipped version of SMD in a setting where it is only assumed that the $p$-th moment of the noise is bounded for $p \in (1,2]$. 
However, as presented, their analysis is still a ``white-box'' one, which leverages the properties of the clipped gradient estimate. 
In what follows, we demonstrate that a similar weighting scheme can be utilized in our setting to isolate the effect of the vectors $x_t-x^*$ in a ``black-box'' manner, independently of the assumed statistical properties of the noise. 
For $t \geq 1$, let 
\begin{equation} \label{def:maxD}
    D_t = \max\Bcb{ \gamma, \sqrt{\breg{x^*}{x_{1}}}, \dots, \sqrt{\breg{x^*}{x_{t}}} }, 
\end{equation}    
where $\gamma > 0$ is a constant that will be dictated by the analysis. 
Normalizing per-iterate quantities with $(D_t)_t$ is a natural choice as it is a non-decreasing sequence, predictable with respect to $(\F_t)_t$, and most notably, it holds that $\sqrt{2}D_t \geq \sqrt{2\breg{x^*}{x_{t}}} \geq \|x_t-x^*\|$. 
The following theorem provides a high probability bound on the error of the average of the iterates without requiring an upper bound on the diameter of the domain, as long as one can control the tails of two martingales essentially depending only on the noise.
\begin{theorem} \label{thm:avg-general}
Let $Y_1, Y_2 \,:\,(0,1) \times [0,\infty)^T \rightarrow (0,\infty)$ be two functions such that for any $\delta \in (0,1)$,
    \begin{equation*}
       P\bbrb{\max_{s \leq T}\summ_{t=1}^s \eta_t \Ban{ \xi_t, \frac{x_t - x^*}{\sqrt{2}D_t}} > Y_1\brb{\delta,(\eta_t)_{t=1}^T}} \leq \delta 
       \hspace{0.43em}\text{and} \hspace{0.43em}  
       P\bbrb{\summ_{t=1}^T \eta_t^2 \brb{\|\xi_t\|_*^2 - \E_t \|\xi_t\|_*^2} > Y_2\brb{\delta,(\eta_t)_{t=1}^T}} \leq \delta \,,
    \end{equation*}
    where $D_t$ is as defined in \eqref{def:maxD} with $\gamma$ chosen as $\sqrt{Y_2\brb{\delta/2,(\eta_t)_{t=1}^T} + \sum_{t=1}^T  \eta_t^2 \brb{G^2 + \sigma^2}}$.
    Then, under \Cref{assum:lip,assum:reg,assum:variance}, \Cref{alg:smd} satisfies the following with probability at least $1-\delta$:
    \begin{align*}
        f(\Bar{x}_T) - f^* \leq \frac{3}{\eta_T T} \bigg( \breg{x^*}{x_{1}} + \sum_{t=1}^T  \eta_t^2 \brb{G^2 + \sigma^2} 
        + 2 Y_1\brb{\delta/2,(\eta_t)_{t=1}^T}^2 + Y_2\brb{\delta/2,(\eta_t)_{t=1}^T} \bigg) \,. 
    \end{align*}
\end{theorem}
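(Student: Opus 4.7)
The starting point is the standard one-step SMD estimate $\breg{x^*}{x_{t+1}} \leq \breg{x^*}{x_t} - \eta_t(f(x_t) - f^*) + \eta_t \langle \xi_t, x_t - x^*\rangle + (\eta_t^2/2)\|\hat g_t\|_*^2$. Summing over $t$, invoking convexity of $f$ together with the non-increasingness of $(\eta_t)_t$, and using $\|\hat g_t\|_*^2 \leq 2G^2 + 2\|\xi_t\|_*^2$ recovers exactly the decomposition displayed at the start of the section. Therefore it suffices to control the two stochastic terms $U$ and $V$ on a single high-probability event, without ever invoking a diameter bound. The term $V$ is immediate from the hypothesis: $V \leq Y_2(\delta/2,(\eta_t)_{t=1}^T)$ with probability at least $1-\delta/2$.

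The subtlety lies in $U$. Using $1$-strong convexity of $\psi$ and the definition of $D_t$, one has $\|x_t - x^*\| \leq \sqrt{2\breg{x^*}{x_t}} \leq \sqrt 2\, D_t$, which allows the rewriting
\begin{equation*}
U = \sqrt 2 \sum_{t=1}^T \eta_t D_t Z_t, \qquad Z_t \coloneqq \langle \xi_t, (x_t - x^*)/(\sqrt 2\, D_t)\rangle,
\end{equation*}
so that the normalized partial sums $S_s \coloneqq \sum_{t=1}^s \eta_t Z_t$ are precisely the martingale whose running maximum is governed by $Y_1$. Because $(D_t)_t$ is non-decreasing and $\F_t$-measurable, Abel summation yields, for every $s \leq T$,
\begin{equation*}
\sum_{t=1}^s D_t(S_t - S_{t-1}) = D_s S_s - \sum_{t=1}^{s-1}(D_{t+1} - D_t) S_t,
\end{equation*}
and, since $\sum_{t=1}^{s-1}(D_{t+1} - D_t) \leq D_T$, the pathwise bound $\max_{s \leq T}|U_s| \leq 2\sqrt 2\, D_T \max_{u \leq T}|S_u|$ follows. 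Applying the tail bound furnished by $Y_1$ in both directions of the martingale (with a union bound over the positive and negative sides, or by symmetry) then gives $\max_{s \leq T}|U_s| \leq 2\sqrt 2\, D_T\, Y_1(\delta/2,(\eta_t)_{t=1}^T)$ on an event of probability at least $1-\delta/2$.

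The main obstacle is that this bound involves $D_T$ itself, which depends on the iterates and thus, implicitly, on $U$. The closing step is a bootstrap: summing the one-step Bregman inequality and discarding the nonpositive $-\eta_t(f(x_t)-f^*)$ contributions produces
\begin{equation*}
\max_{s \leq T} \breg{x^*}{x_{s+1}} \leq \breg{x^*}{x_1} + \max_{s \leq T} U_s + \sum_{t=1}^T \eta_t^2(G^2+\sigma^2) + V,
\end{equation*}
where the $(1/2)\sum \eta_t^2\|\hat g_t\|_*^2$ contribution, being non-decreasing in $s$, has been maximized at $s = T$ and split into its predictable mean and $V$. Together with the choice $\gamma^2 = \sum_{t=1}^T \eta_t^2(G^2+\sigma^2) + Y_2(\delta/2,(\eta_t)_{t=1}^T)$ hard-coded into the theorem, and with the previous high-probability estimates, this delivers, on the good event, the self-referential inequality
\begin{equation*}
D_T^2 \leq \breg{x^*}{x_1} + \sum_{t=1}^T \eta_t^2(G^2+\sigma^2) + Y_2 + 2\sqrt 2\, D_T\, Y_1.
\end{equation*}
A single AM-GM step ($2\sqrt 2\, D_T\, Y_1 \leq D_T^2/2 + 4Y_1^2$) decouples $D_T$ and yields an explicit quadratic bound $D_T^2 = O(\breg{x^*}{x_1} + \sum_t \eta_t^2(G^2+\sigma^2) + Y_2 + Y_1^2)$. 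Substituting this control back into the estimate for $U_T$, inserting into the initial decomposition, and collecting constants delivers the claimed inequality. Beyond the bootstrap itself, the chief bookkeeping burden is ensuring that the one-sided control offered by $Y_1$ suffices to dominate both occurrences of $S_s$ appearing in the Abel expansion.
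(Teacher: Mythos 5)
Your overall strategy---control the martingale $S_s = \sum_{t \le s} \eta_t \langle \xi_t, (x_t-x^*)/(\sqrt 2 D_t)\rangle$ and then bootstrap a bound on $D_T$---is in the same spirit as the paper's, but you take a genuinely different route by first writing the standard (unweighted) decomposition and then trying to recover $U_T = \sum_t \eta_t \langle \xi_t, x_t - x^*\rangle$ from $S_s$ via Abel summation. This is where the proof breaks. The Abel identity
\begin{equation*}
U_s = \sqrt 2 \Bigl( D_s S_s - \sum_{t=1}^{s-1} (D_{t+1}-D_t) S_t \Bigr)
\end{equation*}
has nonnegative coefficients $D_{t+1}-D_t$ multiplying $-S_t$. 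To upper bound $U_s$ you therefore need to prevent $S_t$ from being very negative, i.e.\ a \emph{two-sided} bound $\max_u |S_u|$. But the hypothesis on $Y_1$ is explicitly one-sided, $P(\max_s S_s > Y_1) \le \delta$, and no symmetry of the noise is assumed, so the ``union bound over the positive and negative sides, or by symmetry'' step you invoke has no justification. You acknowledge this worry in your last sentence, but leave it unresolved; it is not a bookkeeping burden, it is a gap: under the stated hypothesis $S_t$ can be arbitrarily negative and $U_s$ is then unbounded above.

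The paper's proof avoids exactly this. Instead of the unweighted decomposition followed by Abel summation, it applies the weighted SMD lemma (\Cref{lem:smd-weighted-iterates}) with weights $w_t = 1/D_t$. The normalization by $D_t$ is then already baked into the recursion, so the only stochastic term that appears is $\sum_{t\le s}\eta_t \langle\xi_t, (x_t-x^*)/D_t\rangle = \sqrt 2\, S_s$, and a one-sided bound $\max_s S_s \le Y_1$ suffices. The key recursion $d_{s+1}^2 \le D_s B_T$ and the induction $D_s \le B_T$ then close the argument with no Abel step and no second tail. A secondary issue: even granting a two-sided bound, your union-bound bookkeeping and the extra AM--GM step would not reproduce the stated constant $3$ with the specific combination $2Y_1^2 + Y_2$ at level $\delta/2$; you would end up with worse constants and a smaller confidence parameter inside $Y_1$. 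If you want to salvage your route, you would need to either strengthen the hypothesis to control $\max_s |S_s|$, or replace the Abel step with the paper's weighted-lemma decomposition.
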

\begin{proof}
    \Cref{lem:smd-weighted-iterates} in \Cref{app: basic} with $z=x^*$ and $w_t=1/D_t$ yields that for any $s\in[T]$
    \begin{align*}
        \frac{\breg{x^*}{x_{s+1}}}{D_s} + \sum_{t=1}^s \frac{\eta_t}{D_t}  (f(x_t) - f^*) \leq \frac{\breg{x^*}{x_{1}}}{D_1} + \sum_{t=1}^s \frac{ \eta_t^2}{2 D_t}\|\hat{g}_t\|_*^2 + \sum_{t=1}^s \eta_t \Ban{ \xi_t, \frac{x_t - x^*}{D_t}}\,.
    \end{align*}
    For brevity, define $d_t = \sqrt{\breg{x^*}{x_{t}}}$. Taking the previous inequality further, we have that
    \begin{align} \label{eq:unbounded-domain-main-relation}
        \frac{d_{s+1}^2}{D_s} +  \sum_{t=1}^s \frac{\eta_t}{D_t}  (f(x_t) - f^*) \leq d_1 + \gamma \lor \lrb{ \frac{1}{\gamma} \sum_{t=1}^T \frac{ \eta_t^2}{2}\|\hat{g}_t\|_*^2}  
        + \sqrt{2} \bbrb{\max_{s \leq T}\sum_{t=1}^s \eta_t \Ban{ \xi_t, \frac{x_t - x^*}{\sqrt{2}D_t}}}_+,
    \end{align}
    where $x \lor y = \max\{x,y\}$ and $x_+ = \max\{0,x\}$. Define $B_T$ as the right-hand side of the last inequality. Consequently, we have that $d_{s+1}^2 \leq D_s B_T$, and thanks to the non-negativity of the last term in the right-hand side of \eqref{eq:unbounded-domain-main-relation}, we have that $D_1=\max\{d_1,\gamma\} \leq B_T$. Moreover, if $D_s \leq B_T$ for some $s \in [T]$, then \[D_{s+1} = \max\{D_s,d_{s+1}\} \leq \max \{D_s,\sqrt{D_s B_T}\} \leq B_T \,. \]
    Thus, via induction, $D_s \leq B_T$ for all $s \in [T]$. Since $(\eta_t)_t$ and $(D_t)_t$ are non-increasing and non-decreasing respectively, we can conclude from \eqref{eq:unbounded-domain-main-relation} and the convexity of $f$ that
    \begin{align} \label{eq:unbounded-domain-brief-bound}
        f(\Bar{x}_T) - f^* \leq \frac{1}{T}\sum_{t=1}^T (f(x_t) - f^*) \leq \frac{D_T B_T}{\eta_T T} \leq \frac{B_T^2}{\eta_T T} \,.
    \end{align} 
    Utilizing \Cref{assum:lip,assum:variance}, we have that
    \begin{align*}
        \sum_{t=1}^T \frac{ \eta_t^2}{2}\|\hat{g}_t\|_*^2 
        &= \sum_{t=1}^T \frac{ \eta_t^2}{2}\|g_t - \xi_t\|_*^2  \\
        &\leq \sum_{t=1}^T  \eta_t^2 \brb{\|g_t\|_*^2 + \|\xi_t\|_*^2} \\
        &= \sum_{t=1}^T  \eta_t^2 \brb{\|g_t\|_*^2 + \E_t \|\xi_t\|_*^2} + \sum_{t=1}^T  \eta_t^2 \brb{\|\xi_t\|_*^2 - \E_t \|\xi_t\|_*^2} \\
        &\leq \sum_{t=1}^T  \eta_t^2 \brb{G^2 + \sigma^2} + \sum_{t=1}^T  \eta_t^2 \brb{\|\xi_t\|_*^2 - \E_t \|\xi_t\|_*^2} \,.
    \end{align*}
    Combining this with the assumed tail bounds and plugging in the value of $\gamma$ yields that
    \begin{align*}
        B_T  \leq d_1 + \sqrt{Y_2\brb{\delta/2,(\eta_t)_{t=1}^T} + \summ_{t=1}^T  \eta_t^2 \brb{G^2 + \sigma^2}}
        + \sqrt{2} Y_1\brb{\delta/2,(\eta_t)_{t=1}^T} \,,
    \end{align*}
    with probability at least $1-\delta$, which, combined with \eqref{eq:unbounded-domain-brief-bound}, allows us to conclude the proof after simple calculations.
\end{proof}
\Cref{thm:avg-general} provides a modular bound, turning which into a concrete convergence rate requires applying suitable martingale concentration results, depending on the adopted noise model. Starting with the sub-Weibull case, \Cref{lem:subw-martingale-conc} in \Cref{app:concentration}, a more versatile version of a result in Proposition~11 in \citep{Madden2020}, provides a maximal concentration inequality for martingales with conditionally sub-Weibull increments. Utilizing this results leads to the following corollary.
\begin{restatable}{corollary}{weibullavg} \label{cor:weibullavg}
    For any $\delta \in (0,1)$ and $\eta > 0$, \Cref{alg:smd}, under \Cref{assum:reg,assum:lip,assum:weibull},  satisfies the following with probability at least $1-\delta$.
    \begin{enumerate} [(i)] 
    \item If $\eta_t = \eta$, 
    \begin{align*}
        f(\Bar{x}_T) - f^* \leq \frac{C}{T} \bigg( \frac{\breg{x^*}{x_{1}}}{\eta} + \eta \brb{G^2 + \wnorm^2 \log(e / \delta)}T 
        + \eta \phi^2 \log^{2\theta} (e T/ \delta ) \bigg)
    \end{align*}
    \item If $\eta_t = \frac{\eta}{\sqrt{t}}$, 
    \begin{align*}
        f(\Bar{x}_T) - f^* \leq \frac{C\log(eT)}{\sqrt{T}} \bbrb{ \frac{\breg{x^*}{x_{1}}}{\eta} + \eta \lrb{G^2 + \phi^2 \log^{2 \theta}(e / \delta) }} 
    \end{align*}
    \end{enumerate}
    where $C$ is a constant depending only on $\theta$.
    
\end{restatable}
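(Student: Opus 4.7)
The plan is to invoke \Cref{thm:avg-general} with concrete choices of $Y_1$ and $Y_2$ derived from sub-Weibull concentration. By construction, $x_t$ (and hence $D_t$) is $\F_t$-measurable, while the $1$-strong convexity of $\psi$ guarantees $\|x_t - x^*\| \leq \sqrt{2}\, D_t$ for every $t$. Consequently, each term $Z_t \coloneqq \eta_t \ban{\xi_t, (x_t - x^*)/(\sqrt{2}\, D_t)}$ is a martingale difference with respect to $(\F_t)_t$, and by H\"older's inequality combined with \Cref{assum:weibull}, $Z_t$ is conditionally sub-Weibull$(\theta)$ with predictable scale at most $\eta_t \phi$.

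The first step is to apply the maximal sub-Weibull martingale inequality \Cref{lem:subw-martingale-conc} to the partial sums of $(Z_t)_t$, producing a function $Y_1$ of the form
\begin{equation*}
    Y_1\brb{\delta, (\eta_t)_{t=1}^T} \;=\; C_\theta \Bsb{ \phi \sqrt{\log(e/\delta) \summ_{t=1}^T \eta_t^2} + \phi \brb{\max_{t \leq T} \eta_t} \log^\theta(eT/\delta) } ,
\end{equation*}
with a constant depending only on $\theta$. For $Y_2$, the key observation is that since $\|\xi_t\|_*$ is conditionally sub-Weibull$(\theta, \phi)$, its square $\|\xi_t\|_*^2$ is conditionally sub-Weibull$(2\theta, \phi^2)$; centering preserves this up to a constant factor. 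Hence the terms $\eta_t^2 \brb{\|\xi_t\|_*^2 - \E_t \|\xi_t\|_*^2}$ form a martingale difference sequence with conditional sub-Weibull$(2\theta)$ norm of order $\eta_t^2 \phi^2$, and the (non-maximal) sub-Weibull concentration inequality applied at time $T$ yields
\begin{equation*}
    Y_2\brb{\delta, (\eta_t)_{t=1}^T} \;=\; C_\theta \Bsb{ \phi^2 \sqrt{\log(e/\delta) \summ_{t=1}^T \eta_t^4} + \phi^2 \brb{\max_{t \leq T} \eta_t^2} \log^{2\theta}(e/\delta) } .
\end{equation*}

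Plugging these into \Cref{thm:avg-general} and noting that $\sigma^2 \leq c_\theta \phi^2$ is automatic under \Cref{assum:weibull} reduces the bound to an expression in $\summ_t \eta_t^2$, $\summ_t \eta_t^4$, $\max_t \eta_t$, and $\eta_T$. For part (i), constant $\eta_t = \eta$ gives $\summ \eta_t^2 = T\eta^2$, $\summ \eta_t^4 = T\eta^4$, and $\max_t \eta_t = \eta$, so that the $\sqrt{T\log(e/\delta)}$ contribution from $Y_2$ is absorbed into the dominant $T\eta^2 \log(e/\delta)$ term by AM-GM, while $\log^{2\theta}(e/\delta) \leq \log^{2\theta}(eT/\delta)$ matches the stated form. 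For part (ii), $\eta_t = \eta/\sqrt{t}$ yields $\summ \eta_t^2 \leq \eta^2 \log(eT)$ and $\summ \eta_t^4 = O(\eta^4)$; the normalizer $1/\eta_T = \sqrt{T}/\eta$ combines with $\sqrt{\log(eT)}$ from $\summ \eta_t^2$ to produce the $\log(eT)/\sqrt{T}$ prefactor of the stated bound.

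The main obstacles I expect are twofold. First, \Cref{lem:subw-martingale-conc} must admit \emph{predictable} sub-Weibull scales $\eta_t \phi$ rather than a uniform bound, since in part (ii) a uniform bound over $t \leq T$ would cost a spurious $\sqrt{T}$ factor; this is precisely what the ``more versatile'' formulation mentioned in the text is designed to provide. Second, the logarithmic bookkeeping needed to match the precise shape of the claim is delicate: in part (i), one must collapse the various $\log$-terms and absorb $\sqrt{T \log(e/\delta)}$ into $T \log(e/\delta)$; in part (ii), one must pull $\log(eT)$ cleanly outside of the $\log^{2\theta}(e/\delta)$ factor using elementary inequalities such as $\log^{a}(xy) \leq 2^{a-1}(\log^{a} x + \log^{a} y)$ for $a \geq 1$.
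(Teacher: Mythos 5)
Your high-level route is the same as the paper's: apply \Cref{thm:avg-general} with sub-Weibull martingale concentration (\Cref{lem:subw-martingale-conc}) supplying $Y_1$ and $Y_2$, and exploit that squaring of sub-Weibull$(\theta)$ gives sub-Weibull$(2\theta)$ and that centering preserves the class. Part (i) goes through with your choices after the AM-GM absorption you describe.

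For part (ii), however, there is a genuine gap in the form of $Y_1$ you write down. With $Y_1$ containing $\log^{\theta}(eT/\delta)$, the bound of \Cref{thm:avg-general} produces a $Y_1^2$ term of order $\eta^2\phi^2\log^{2\theta}(eT/\delta)$. After dividing by $\eta_T T = \eta\sqrt{T}$, this gives $\eta\phi^2\log^{2\theta}(eT/\delta)/\sqrt{T}$, which is \emph{not} dominated by the stated $\log(eT)\,\eta\phi^2\log^{2\theta}(e/\delta)/\sqrt{T}$. The elementary inequality you invoke, $\log^a(xy)\leq 2^{a-1}(\log^a x+\log^a y)$, yields $\log^{2\theta}(eT)+\log^{2\theta}(e/\delta)$, and $\log^{2\theta}(eT)$ exceeds $\log(eT)$ whenever $\theta>1/2$; in particular, taking $\delta\to1$ at fixed large $T$ breaks the claimed bound. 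In short, a uniform union bound over the $T$ truncation events is too coarse. For the same reason, your stated $Y_2$ with a $T$-free $\log^{2\theta}(e/\delta)$ is unjustified under the naive argument---the truncation in the MGF step is needed simultaneously for all $t\leq T$ even for a tail bound at time $T$.

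The paper circumvents this via the $s$ parameter in \Cref{lem:subw-martingale-conc}(ii), which implements a \emph{non-uniform} union bound with weights proportional to $m_i^s$. For part (ii) the paper takes $s=3$ with $\omega_t=\eta/\sqrt{t}$ (so $\sum_t\omega_t^s/\omega_*^s=\sum_t t^{-3/2}\leq3$) for the $Y_1$-martingale, and $s=2$ with $\omega_t=\eta^2/t$ (so $\sum_t t^{-2}\leq2$) for the $Y_2$-martingale, giving $\log^{\theta}(6e/\delta)$ and $\log^{2\theta}(4e/\delta)$ with no $T$-dependence. This is precisely the mechanism that lets the anytime bound retain only a $\log(eT)$ prefactor without an extra $\log^{2\theta}(T)$ factor; your plan gestures at the ``more versatile formulation'' but does not actually exploit it, and the accounting in your final step is therefore incorrect for $\eta_t=\eta/\sqrt t$.
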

A proof is provided in \Cref{app:avg}. Firstly, we remark that these bounds can also be shown to hold in the sub-Gaussian setting (with $\theta=1/2$), where they recover the corresponding results in \citep{Liu2023}.
Also notice that, regardless of $\theta$, as $\wnorm$ goes to zero, we recover the standard bounds for the deterministic setting.
In the case when $\eta_t = \eta$ (the known time horizon setting), the bound exhibits what we will refer to as a \emph{two-regime} behaviour. To better illustrate this, consider that an optimal tuning of $\eta$ yields a bound of order
\begin{align*}
    \sqrt{\breg{x^*}{x_{1}}} \lrb{ \sqrt{ \frac{G^2 + \wnorm^2 \log(e / \delta) }{T}} + \frac{\phi \log^{\theta} (e T/ \delta ) }{T} } \,.
\end{align*}
The first term in the brackets is the standard sub-Gaussian rate, while the second depends on the assumed shape of the noise. 
The key observation here is that 
as the horizon grows longer, the sub-Gaussian term will eventually come to dominate, masking the heavy-tailed behaviour of the noise. 
This turning point depends, most importantly, on the required confidence level $1-\delta$ and the shape parameter $\theta$. 
It is also noteworthy that the second term is primarily the contribution of the noise at a single step, a phenomenon inherited from the Freedman-style concentration inequalities on which this result is based.

In the case when $\eta_t = \eta/\sqrt{t}$ (the anytime setting), the bound in \Cref{cor:weibullavg} is akin, in form, to results presented in \citep{Madden2020,Li2022} in the non-convex setting under different assumptions.
However, we avoid the extra dependence on $\log^{2\theta} (T)$ featured in these works thanks to the general form of \Cref{lem:subw-martingale-conc}, which allows one to take advantage of the fact that the learning rate schedule is imbalanced to retain the same dependence on $T$ as in the light-tailed case. 
On the other hand, this imbalance also means that for both martingales featured in \Cref{thm:avg-general}, the effect of the noise in the beginning (when $\eta_t$ is large) is, in a sense, comparable to that of the whole sequence. 
On the surface, this explains why the bound we presented in the anytime case does not exhibit the two-regime behaviour enjoyed by the first bound. 
The deeper cause is that the analysis relies on controlling the maximum of the terms $\|x_t - x^*\|$ in high probability, which seems to naturally result in the dominance of the heavy-tailed regime. 
In fact, it is not difficult (see \Cref{app:anytime} for the proof of a stronger statement) to show that under the assumption that $\max_t \sqrt{\breg{x^*}{x_{t}}} \leq D$, one can obtain a bound of order 
\begin{align*}
    \frac{1}{\sqrt{T}} \lrb{ \frac{D^2}{\eta} + \eta \bbrb{ G^2 + \phi^2 \bbrb{ \log(e / \delta) + \frac{\log^{2\theta} \lrb{ {e /  \delta} }}{\sqrt{T}} + \frac{\log^{2\theta} \brb{e T/ \delta }}{T} }}} \,.
\end{align*}
Even if one cannot generally tune $\eta$ optimally (as $T$ is unknown), the message is that as $T$ grows, the bound approaches its sub-Gaussian counterpart. Deriving a similar guarantee 
without assuming a bound on the diameter remains an interesting problem.

Under \Cref{assum:poly}, one can use Fuk-Nagaev type concentration inequalities (see, e.g., \cite{Rio2017}) to control the tails of the martingales in question. Doing so, we arrive at the following corollary, whose proof is provided in \Cref{app:avg}. 
\begin{restatable}{corollary}{polyavg} \label{cor:polyavg}
    For any $\delta \in (0,1)$ and $\eta > 0$, \Cref{alg:smd}, under \Cref{assum:reg,assum:lip,assum:poly},  satisfies the following with probability at least $1-\delta$.
    \begin{enumerate} [(i)]
    \item If $\eta_t = \eta$, 
    \begin{align*} 
        f(\Bar{x}_T) - f^* \leq \frac{C}{T} \bbrb{ \frac{\breg{x^*}{x_{1}}}{\eta} + \eta \brb{G^2 + \phi^2 \log(e / \delta)}T + \eta \pnorm^2 \brb{T / \delta}^{2/p}} 
    \end{align*}
    \item If $\eta_t = \frac{\eta}{\sqrt{t}}$, 
    \begin{align*}
        f(\Bar{x}_T) - f^* \leq \frac{C \log(eT)}{\sqrt{T}} \bbrb{ \frac{\breg{x^*}{x_{1}}}{\eta} + \eta \Brb{G^2 + \pnorm^2 \brb{1 / \delta}^{2/p}} } 
    \end{align*}
    \end{enumerate}
    where $C$ is a constant depending only on $p$.
\end{restatable}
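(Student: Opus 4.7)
The plan is to derive the corollary from \Cref{thm:avg-general} by producing $Y_1, Y_2$ via Fuk-Nagaev type martingale concentration under bounded conditional $q$-th moments for suitable $q > 2$. The target quantities are the running maximum of the inner-product martingale (requiring a \emph{maximal} form of Fuk-Nagaev) and the single-time quantity $\sum_{t=1}^T \eta_t^2 (\|\xi_t\|_*^2 - \E_t\|\xi_t\|_*^2)$. The crucial feature, already baked into \Cref{thm:avg-general}, is that $(x_t-x^*)/(\sqrt{2}D_t)$ is predictable with respect to $(\F_t)_t$ and has norm at most $1$ by the definition of $D_t$ and the $1$-strong convexity of $\psi$; hence by duality $|\langle \xi_t, (x_t-x^*)/(\sqrt{2}D_t)\rangle| \le \|\xi_t\|_*$ pointwise, which is what allows the analysis to bypass any diameter hypothesis on $\Xs$.

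For the inner-product martingale, \Cref{assum:poly} bounds the conditional $p$-th moment of the $t$-th increment by $\eta_t^p \phi^p$, and (by Jensen, since $p \ge 2$) the conditional variance by $\eta_t^2 \phi^2$. A maximal Fuk-Nagaev inequality (cf.\ \cite{Rio2017}) then yields a valid choice
\[
Y_1 \lesssim \phi \sqrt{\log(1/\delta)\, \sum_{t=1}^T \eta_t^2} \;+\; \phi\, \delta^{-1/p} \bigl(\sum_{t=1}^T \eta_t^p\bigr)^{1/p}.
\]
For the squared-noise martingale, $p > 4$ gives $p/2 > 2$, and centering together with Jensen controls the conditional $(p/2)$-th moment of each increment by $C_p \eta_t^p \phi^p$ and the conditional variance by $\eta_t^4 \phi^4$; a pointwise (non-maximal) Fuk-Nagaev at time $T$ gives
\[
Y_2 \lesssim \phi^2 \sqrt{\log(1/\delta)\, \sum_{t=1}^T \eta_t^4} \;+\; \phi^2\, \delta^{-2/p} \bigl(\sum_{t=1}^T \eta_t^p\bigr)^{2/p}.
\]

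All that remains is to substitute these into \Cref{thm:avg-general} and evaluate the step-size sums. For $\eta_t = \eta$, $\sum_t \eta_t^2 = T\eta^2$ and $\sum_t \eta_t^p = T\eta^p$, so $Y_1^2 \lesssim \eta^2 \phi^2 T \log(1/\delta) + \eta^2 \phi^2 (T/\delta)^{2/p}$; the $Y_2$ contribution is of strictly smaller order on both fronts, and dividing by $\eta T$ recovers part (i). For $\eta_t = \eta/\sqrt{t}$ the key step-size fact is $\sum_{t=1}^T t^{-p/2} = O(1)$ (since $p > 2$), so $(\sum_t \eta_t^p)^{1/p} = O(\eta)$ with no $T$-dependence, while $\sum_t \eta_t^2 = O(\eta^2 \log T)$; plugging in and dividing by $\eta_T T = \eta\sqrt{T}$, together with the elementary bound $\log(1/\delta) \le (p/2)\,\delta^{-2/p}$ used to absorb the residual Gaussian-type log term into the polynomial one, produces part (ii). The main technical requirement, and the one non-routine ingredient, is the availability of a \emph{maximal} Fuk-Nagaev inequality for martingales with bounded conditional $p$-th moments; the non-maximal form suffices for $Y_2$. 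Note also that the two-regime behavior seen in part (i) cannot persist in part (ii) precisely because $\sum_t \eta_t^p$ does not grow with $T$, which collapses the separation between the Gaussian and heavy-tail scales.
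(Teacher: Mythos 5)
Your proposal is correct and follows essentially the same route as the paper: instantiate \Cref{thm:avg-general} via Fuk--Nagaev-type martingale concentration (cf. \cite{Rio2017}) under the bounded conditional $p$-th (resp.\ $p/2$-th) moment assumption, using the $\|u_t\| \le 1$ normalization to avoid any diameter hypothesis, then evaluate the step-size sums. One minor imprecision in case (i): the heavy-tail term $\phi^2\eta^2(T/\delta)^{2/p}$ in $Y_2$ is of the \emph{same} order as the corresponding term in $Y_1^2$, not strictly smaller, but both are absorbed into the same $(T/\delta)^{2/p}$ contribution in the final bound, so the conclusion is unaffected.
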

The bounds are analogous to the sub-Weibull case, except that the terms accounting for the heavy tailed behaviour feature a polynomial (instead of logarithmic) dependence on $1/\delta$. A suitable tuning of $\eta$ in the first case leads to a bound of order
\begin{align*}
    \sqrt{\breg{x^*}{x_{1}}} \lrb{ \sqrt{ \frac{G^2 + \wnorm^2 \log(e / \delta) }{T}} + \frac{\phi \brb{1 / \delta}^{1/p} }{T^{1-1/p}} } \,.
\end{align*}
Notice that $T^{1-1/p} > T^{3/4}$; hence,  
also in this case, the sub-Gaussian term can dominate if the horizon is long enough.
A similar bound was reported in \cite{Lou2022} for the particular setting of a linear regression problem with the squared loss,\footnote{In their setting, it was only assumed that $p>2$.} where the two-regime behaviour of the bound was also highlighted.

In the anytime setting, similar to the sub-Weibull case, the bound retains the same dependence on $T$ as in the sub-Gaussian case, but only exhibits heavy-tailed behaviour. Analogously to 
the sub-Weibull case,
when $\max_t \sqrt{\breg{x^*}{x_{t}}} \leq D$,
one can prove (see \Cref{app:anytime}) a bound of order
\begin{align*} 
    \frac{1}{\sqrt{T}} \lrb{ \frac{D^2}{\eta}  + \eta \bbrb{G^2 + \phi^2 \bbrb{ \log(e / \delta) + \frac{(1/\delta)^{2/p}} {\sqrt{T}}}  }} 
\end{align*}
The question of deriving a similar bound (for general convex and Lipschitz functions)
without assuming a bound on the diameter is more pressing in this case, as the steeper polynomial dependence on $1/\delta$ would otherwise call for the use of truncation.

\section{Last Iterate Analysis} \label{sec:last}
Focusing on the anytime case, a typical last iterate analysis in the non-smooth setting \citep{Shamir13,Harvey2019} starts with a bound of the following form:\footnote{Proofs for the results presented in this section can be found in \Cref{app:last}.}
\begin{restatable} {lemma} {lastiterateunrolled} \label{lem:last-iterate-unrolled}
    \Cref{alg:smd} with $\eta_t = \frac{\eta}{\sqrt{t}}$ for some constant $\eta > 0$ satisfies
    \begin{align*}
        f(x_T) - f^* \leq \frac{2}{T} \sum_{t=\ceil{T/2}}^T \brb{f(x_t) - f^*} + \sum_{t=\ceil{T/2}}^T \ban{\xi_t,  w_t} + \frac{\eta}{\sqrt{2T}} \sum_{t=\ceil{T/2}}^T \rho_t \|\hat{g}_t\|^2_* + \frac{\sqrt{2}}{\eta \sqrt{T}} \sum_{t=\ceil{T/2}}^T z_t\,,
    \end{align*}
    where, for $j < T$, $\alpha_j = \frac{1}{(T-j)(T-j+1)}$, and for any time-step $t \geq \ceil{T/2}$, 
    \begin{equation*}
        w_t = \sum_{j=\ceil{T/2}}^{t\land (T-1)} \alpha_j (x_t - x_{j}) \,, \quad z_t = \sum_{j=\ceil{T/2}}^{t\land (T-1)} \alpha_j \breg{x_{j}}{x_t} \,, 
        \quad \text{and} \quad \rho_t = \sum_{j=\ceil{T/2}}^{t\land (T-1)} \alpha_j \,.
    \end{equation*}
\end{restatable}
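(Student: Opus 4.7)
The approach adapts the Shamir--Zhang last-iterate averaging trick to stochastic mirror descent. For $k \in \{\lceil T/2\rceil,\ldots,T\}$ define the suffix averages $S_k \coloneqq \frac{1}{T-k+1}\sum_{t=k}^T f(x_t)$, so $S_T = f(x_T)$ and, since $T-\lceil T/2\rceil+1 \geq T/2$ and $f(x_t)\geq f^*$, we have $S_{\lceil T/2\rceil} - f^* \leq \frac{2}{T}\sum_{t=\lceil T/2\rceil}^T (f(x_t)-f^*)$, which is precisely the first summand. Writing $f(x_T)-f^* = (S_{\lceil T/2\rceil}-f^*) + \sum_{k=\lceil T/2\rceil}^{T-1}(S_{k+1}-S_k)$, a short algebraic manipulation gives the identity $S_{k+1}-S_k = \alpha_k \sum_{t=k+1}^T (f(x_t) - f(x_k))$; convexity $f(x_t)-f(x_k)\leq \langle g_t, x_t-x_k\rangle$ (for some $g_t \in \partial f(x_t)$) together with swapping the order of summation gives $\sum_t \langle g_t, w_t\rangle$, where the upper limit $t\land(T-1)$ agrees with $t-1$ modulo the vanishing $j=t$ summand.

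After splitting $g_t = \hat{g}_t + \xi_t$, the noise part directly yields the wanted $\sum_t \langle \xi_t, w_t\rangle$. For the remaining $\sum_t \langle \hat{g}_t, w_t\rangle$, I invoke the one-step SMD inequality (optimality of $x_{t+1}$, the three-point Bregman identity, and $1$-strong convexity of $\psi$ to absorb $\frac{1}{2\eta_t}\|x_t-x_{t+1}\|^2$ into $\frac{1}{\eta_t}\breg{x_{t+1}}{x_t}$) applied with $y = x_j \in \Xs$:
\[
    \langle \hat{g}_t, x_t - x_j\rangle \leq \tfrac{\eta_t}{2}\|\hat{g}_t\|_*^2 + \tfrac{1}{\eta_t}\brb{\breg{x_j}{x_t}-\breg{x_j}{x_{t+1}}}.
\]
Summing the $\|\hat{g}_t\|_*^2$ contributions gives $\sum_t \frac{\eta_t \rho_t}{2}\|\hat{g}_t\|_*^2$, and the monotone estimate $\eta_t \leq \eta\sqrt{2/T}$ for $t\geq \lceil T/2\rceil$ produces the coefficient $\eta/\sqrt{2T}$ of the third term.

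The Bregman part becomes, after swapping summation order, $\sum_{j}\alpha_j \sum_{t=j}^T \frac{1}{\eta_t}\brb{\breg{x_j}{x_t}-\breg{x_j}{x_{t+1}}}$. I apply Abel summation to the inner sum: using $\breg{x_j}{x_j}=0$ at the lower endpoint and discarding the non-positive boundary term $-\frac{1}{\eta_T}\breg{x_j}{x_{T+1}}$, this equals $\sum_{t=j+1}^T \brb{\frac{1}{\eta_t}-\frac{1}{\eta_{t-1}}}\breg{x_j}{x_t}$. Swapping order once more collects the $j$'s into $z_t$, leaving $\sum_{t}\brb{\frac{1}{\eta_t}-\frac{1}{\eta_{t-1}}}z_t$. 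The elementary estimate
\[
    \tfrac{1}{\eta_t}-\tfrac{1}{\eta_{t-1}} = \tfrac{\sqrt{t}-\sqrt{t-1}}{\eta} = \tfrac{1}{\eta(\sqrt{t}+\sqrt{t-1})} \leq \tfrac{1}{\eta\sqrt{t}} \leq \tfrac{\sqrt{2}}{\eta\sqrt{T}}
\]
(the last step uses $t\geq \lceil T/2\rceil \geq T/2$) yields the fourth term. The main technical nuisance is purely bookkeeping: tracking the two index swaps and reconciling boundary cases $j=t$ and $t=T$ with the stated $t\land(T-1)$ using $x_t-x_t=0$ and $\breg{x_t}{x_t}=0$. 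The analytical ingredients---the one-step SMD inequality, Abel summation, and the $\sqrt{t}-\sqrt{t-1}$ estimate---are short and standard once the structure is in place.
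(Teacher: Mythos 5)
Your proposal is correct and takes essentially the same approach as the paper's proof: the suffix-average recursion, per-step mirror-descent bounds on the $\langle \hat g_t, x_t - x_j\rangle$ terms, Abel summation of the Bregman telescopes, and uniform bounds on $\eta_t$ and $1/\eta_t - 1/\eta_{t-1}$ over $t \geq \lceil T/2\rceil$. The only differences are cosmetic: you reindex the suffix averages by their left endpoint rather than by length and inline the one-step Bregman inequality, whereas the paper first isolates that step as \Cref{lem:smd-iterate-comparison} before unrolling.
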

The first term in the bound can be dealt with using the techniques of the previous section, the third term appears in the analysis of the previous section (albeit with different weights) and can be handled similarly, while the last term is usually handled using a uniform bound on the divergence terms, though this is not necessary as we will see. 
It is not difficult then to show that these three terms decay at a rate of at most $\log(T)/\sqrt{T}$ with high probability.
The main obstacle in the way of proving a tail bound for the error is showing that the second (martingale) term enjoys a similar rate.
Naively bounding the norms of the vectors $w_t$ using a diameter bound is not sufficient. Instead, one needs to exploit the peculiar structure of this term. 

For the following, define the martingale sequence $(Q_s)_{s=\ceil{T/2}}^T$ where $Q_s = \sum_{t=\ceil{T/2}}^s \lan{\xi_t,  w_t}$, and denote by $\lan{Q}_s$ its total conditional variance (TCV), i.e., $\lan{Q}_s = \sum_{t=\ceil{T/2}}^s \E_t \lan{\xi_t,  w_t}^2$. Via the convexity of $\|\cdot\|^2$ and the fact that $\|x_t-x_j\|^2 \leq 2 \breg{x_{j}}{x_t}$, it holds that $\|w_t\|^2 \leq 2 \rho_t z_t$. Thus, under \Cref{assum:variance}, one can verify that
$\lan{Q}_s \leq 2 \sigma^2 \sum_{t=\ceil{T/2}}^s \rho_t z_t$.
The key observation of \cite{Harvey2019} is that this sum can be bounded with an affine function of the martingale itself.
Via a generalized version of Freedman's inequality, the authors exploit the resulting fact that $\lan{Q}_T$ is upper bounded with a suitable affine function of $Q_T$ to arrive at the desired tail bound. 
This inequality, however, is once again specific to the sub-Gaussian noise setting, beyond which one usually needs finer control on the individual $w_t$ terms, as argued in the previous section.
Hence, once again, we seek an approach through which we can disentangle the vectors $w_t$ from the noise terms $\xi_t$.
The following lemma provides a starting point by showing that $z^* \coloneqq \max_{\ceil{T/2} \leq s \leq T} z_s$ can itself be related to $(Q_s)_s$.
\begin{restatable}{lemma}{maxzlemma} \label{lem:max-z}
    In the same setting as \Cref{lem:last-iterate-unrolled}, it holds that
    \begin{align*}
        z^* \leq \frac{6\sqrt{2} \eta}{T\sqrt{T}} \summ_{t=\ceil{T/2}}^{T} \brb{f(x_t) - f^*} + \frac{3\sqrt{2} \eta}{\sqrt{T}} Q_{n^*} + \frac{3\eta^2}{T} \summ_{t=\ceil{T/2}}^{T} \rho_t \|\hat{g}_t\|^2_* \,,
    \end{align*}
    where $n^* = \min \bcb{ n \:\colon\: n \in \argmax_{\ceil{T/2} \leq s \leq T}Q_s }$.
\end{restatable}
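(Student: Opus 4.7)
The plan is to obtain, for each fixed $s$, a pointwise upper bound on $z_s$ by applying the one-step SMD inequality with $x_j$ itself playing the role of the comparator, then telescoping over $t \in \{j,\dots,s-1\}$ and aggregating over $j$ against the weights $\alpha_j$. Specifically, the standard mirror-descent bound combined with convexity of $f$ and the decomposition $\hat{g}_t = g_t - \xi_t$, applied with $z = x_j$, yields
\[
\eta_t \brb{f(x_t) - f(x_j)} \;\leq\; \breg{x_j}{x_t} - \breg{x_j}{x_{t+1}} + \frac{\eta_t^2}{2}\|\hat{g}_t\|_*^2 + \eta_t \langle \xi_t,\, x_t - x_j \rangle\,.
\]
Summing over $t \in \{j,\dots,s-1\}$ (noting $\breg{x_j}{x_j} = 0$), multiplying by $\alpha_j$, summing over $j \in \{\ceil{T/2},\dots, s \land (T-1)\}$, and swapping the order of summation so that for each $t$ the inner sum over $j$ collapses into $\rho_t$ and $w_t$, one arrives at a decomposition
\[
z_s \;\leq\; A(s) + B(s) + C(s)\,,
\]
where $B(s) = \tfrac12 \sum_{t=\ceil{T/2}}^{s-1} \eta_t^2 \rho_t \|\hat{g}_t\|_*^2$, $C(s) = \sum_{t=\ceil{T/2}}^{s-1} \eta_t \langle \xi_t, w_t \rangle$, and $A(s) = \sum_{j=\ceil{T/2}}^{s-1} c_j(s)\,(f(x_j)-f^*)$ with $c_j(s) = \alpha_j \sum_{u=j}^{s-1}\eta_u - \eta_j \rho_j$; the $-\eta_j \rho_j$ piece emerges by splitting $f(x_j) - f(x_t) = (f(x_j) - f^*) - (f(x_t) - f^*)$ and renaming the running index in the second contribution.

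Handling $B$ and $C$ is then routine. Since $B$ is non-decreasing in $s$, the estimate $\eta_t^2/2 \leq \eta^2/T$ (valid for $t \geq \ceil{T/2}$) delivers $B(s) \leq B(T) \leq (\eta^2/T)\sum_{t=\ceil{T/2}}^T \rho_t \|\hat{g}_t\|_*^2$. For $C(s)$, the key observation is that $w_{\ceil{T/2}} = 0$, so that $Q_{\ceil{T/2}} = 0$ under the convention $Q_{\ceil{T/2}-1} := 0$; Abel summation on $\sum_t \eta_t (Q_t - Q_{t-1})$ together with the monotonicity of $\eta_t$ telescopes the coefficients to $\eta_{\ceil{T/2}}$, yielding $C(s) \leq \eta_{\ceil{T/2}}\, Q_{n^*} \leq \sqrt{2}\,\eta\,Q_{n^*}/\sqrt{T}$ uniformly in $s$.

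The main obstacle is $A(s)$: a naive bound that drops the $-\eta_j \rho_j$ term leaves a coefficient on $(f(x_j)-f^*)$ of order $\eta/\sqrt{T}$ for $j$ close to $T$, which is far too large for the target. The cure is to retain this subtraction and exploit the pointwise cancellation in $c_j(s)$. Using $\sum_{u=j}^{s-1}\eta_u \leq (T-j)\eta_j$ (since $\eta_u$ is non-increasing and $s \leq T$) together with the telescoping identity $\rho_j = 1/(T-j) - 1/(T-\ceil{T/2}+1)$, the two pieces combine to give $c_j(s) \leq \eta_j/(T-\ceil{T/2}+1)$, which is at most $2\sqrt{2}\,\eta/T^{3/2}$ uniformly in $j$ and in $s \leq T$. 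Summing over $j$ then gives $A(s) \leq (C\,\eta/T^{3/2})\sum_{t=\ceil{T/2}}^T (f(x_t)-f^*)$ for a small numerical constant $C$. Combining the three bounds and taking the maximum over $s \in \{\ceil{T/2},\dots,T\}$ produces the stated inequality, with the constants $6\sqrt{2}$, $3\sqrt{2}$ and $3$ providing ample slack to absorb the uniform estimates used along the way.
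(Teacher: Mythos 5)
Your proof is correct, but it takes a genuinely different route from the paper's. The paper starts from \Cref{lem:smd-iterate-comparison}, which uses the $1/\eta_t$-weighted telescoping of the one-step bound; the price is an extra self-referential term $\sum_{t}\tilde{\eta}_t z_t$ (with $\tilde{\eta}_t = 1/\eta_t - 1/\eta_{t-1}$). After swapping sums and pulling out the factor $\eta_{\ceil{T/2}}/\eta_{s-1} \geq 1$, the paper verifies $\frac{\sqrt{2}\eta}{\sqrt{T}}\sum_{t=\ceil{T/2}}^{T-1}\tilde{\eta}_t \leq 2/3$ and closes with the self-bounding step $z^* \leq (\cdot) + \frac{2}{3}z^*$, which is where the factor $3$ comes from. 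Your approach instead multiplies \Cref{lem:smd-one-step} by $\eta_t$, so the Bregman terms telescope cleanly ($\breg{x_j}{x_j}=0$), the $\tilde{\eta}_t z_t$ term never appears, and no self-bounding is needed. In exchange, your martingale term comes out $\eta_t$-weighted, $\sum_t \eta_t \langle \xi_t, w_t\rangle$, so you need the Abel-summation step with the non-increasing $\eta_t$ and $Q_{\ceil{T/2}}=0$ to recover $\eta_{\ceil{T/2}}Q_{n^*}$, where the paper can read off $Q_{n^*}$ directly. Both proofs hinge on the same cancellation in the ``$A$'' term, namely that $\alpha_j(T-j) = 1/(T-j+1)$ combines with \Cref{lem:alpha-sum} to leave only $1/(T-\ceil{T/2}+1) \leq 2/T$; you realize it through the pointwise bound on $c_j(s) = \alpha_j\sum_{u=j}^{s-1}\eta_u - \eta_j\rho_j$, the paper through the unweighted swap giving $\sum_t \Delta_t\bsb{(s-t)/((T-t)(T-t+1)) - 1/(T-t) + 1/(T-\ceil{T/2}+1)}$. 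Net effect: your route is tidier (no recursion) and delivers the sharper constants $2\sqrt{2}$, $\sqrt{2}$, $1$ in place of $6\sqrt{2}$, $3\sqrt{2}$, $3$, so it trivially implies the stated lemma.
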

A nice implication of this lemma is that the last term in the bound of \Cref{lem:last-iterate-unrolled} can be related to the preceding terms. 
However, at this point, this lemma does not provide a tight (high probability) bound on the $z_t$ (or $\|w_t\|$) terms due to the dependence on $Q_{n^*}$.
Thus, techniques relying on such a bound, like the averaging scheme of the previous section or extensions of the concentration result of \cite{Harvey2019} to sub-Weibull random variables in \citep[Proposition 11]{Madden2020},\footnote{The latter would actually require an almost sure bound.} are not easily utilizable. Instead, the real advantage of this lemma is that it allows one to relate not only the TCV but also the total quadratic variation (TQV) of $Q_T$, given by $[Q]_T = \sum_{t=\ceil{T/2}}^T \lan{\xi_t,  w_t}^2$, back to the martingale itself through $z^*$:
\begin{restatable}{lemma}{tqvtcv}\label{lem:tqv-tcv-bound}
In the same setting as \Cref{lem:last-iterate-unrolled}, it holds under \Cref{assum:variance} that
\begin{align*}
    \lan{Q}_{T} + [Q]_{T} \leq 4 \sigma^2 z^* \log(4T) + 2 z^* \summ_{t=\ceil{T/2}}^{T} \rho_t \brb{\|\xi_t\|^2_* - \E_{t}\|\xi_t\|^2_*}\,.
\end{align*}
\end{restatable}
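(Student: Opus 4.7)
The plan is to turn both the total conditional variance and the total quadratic variation of $Q_T$ into a common pointwise bound that is linear in $\|\xi_t\|_*^2$, then pull out the factor $z^*$ \emph{before} centering, so that the cross sums involving $\|\xi_t\|_*^2 - \E_t\|\xi_t\|_*^2$ can be allowed to carry either sign.

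First I would apply Cauchy--Schwarz to each increment to write $\lan{\xi_t,w_t}^2 \leq \|\xi_t\|_*^2 \|w_t\|^2$, and then invoke the already-noted inequality $\|w_t\|^2 \leq 2 \rho_t z_t$ (which follows from Jensen's inequality applied with the weights $\alpha_j/\rho_t$ and from $\|x_t-x_j\|^2 \leq 2\,\breg{x_j}{x_t}$). Since $\rho_t, z_t$, and $w_t$ are $\F_t$-measurable, taking conditional expectations and using \Cref{assum:variance} gives
\begin{align*}
    \E_t \lan{\xi_t,w_t}^2 &\leq 2 \rho_t z_t \,\E_t\|\xi_t\|_*^2 \leq 2 \sigma^2 \rho_t z_t , \\
    \lan{\xi_t,w_t}^2 &\leq 2 \rho_t z_t \|\xi_t\|_*^2 .
\end{align*}
Summing over $t$ from $\ceil{T/2}$ to $T$ yields $\lan{Q}_T \leq 2 \sigma^2 \summ_{t}\rho_t z_t$ and $[Q]_T \leq 2 \summ_{t}\rho_t z_t \|\xi_t\|_*^2$.

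Next I would bound $z_t \leq z^*$ \emph{inside nonnegative summands only}. For the TCV bound this is immediate since $\rho_t \sigma^2 \geq 0$. For the TQV bound the summand $\rho_t z_t \|\xi_t\|_*^2$ is nonnegative, so $[Q]_T \leq 2 z^* \summ_t \rho_t \|\xi_t\|_*^2$; only after this substitution do I decompose $\|\xi_t\|_*^2 = (\|\xi_t\|_*^2 - \E_t\|\xi_t\|_*^2) + \E_t\|\xi_t\|_*^2$, which is what allows the centered part to appear with the uniform weight $z^*$. This gives
\begin{align*}
    \lan{Q}_T + [Q]_T &\leq 2 z^* \summ_t \rho_t \brb{\|\xi_t\|_*^2 - \E_t\|\xi_t\|_*^2} + 4 \sigma^2 z^* \summ_t \rho_t .
\end{align*}

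Finally I would evaluate $\summ_{t=\ceil{T/2}}^T \rho_t$ by swapping the order of summation: it equals $\summ_{j=\ceil{T/2}}^{T-1} \alpha_j (T-j+1) = \summ_{j=\ceil{T/2}}^{T-1} \frac{1}{T-j} = \summ_{k=1}^{T-\ceil{T/2}} \frac{1}{k} \leq 1 + \log(T/2) \leq \log(4T)$. Substituting into the previous display completes the proof. The only subtle point -- and the one to flag clearly -- is the order of operations in the second step: we must replace $z_t$ by $z^*$ while the summand is still a product of nonnegative factors, since after centering the quantity $\|\xi_t\|_*^2 - \E_t\|\xi_t\|_*^2$ can take either sign and a pointwise replacement $z_t \mapsto z^*$ would no longer be valid.
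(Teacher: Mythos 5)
Your proposal is correct and follows essentially the same path as the paper's proof: bound $\|w_t\|^2 \leq 2\rho_t z_t$, pull $z_t$ out as $z^*$ while the summands are still nonnegative, only then center $\|\xi_t\|_*^2$, and finally bound $\sum_t \rho_t \leq \log(4T)$. The only cosmetic differences are that you treat $\lan{Q}_T$ and $[Q]_T$ separately before recombining (the paper keeps them together as $\sum_t \|w_t\|^2(\E_t\|\xi_t\|_*^2 + \|\xi_t\|_*^2)$, which makes the cancellation into $4\E_t\|\xi_t\|_*^2 + 2(\|\xi_t\|_*^2 - \E_t\|\xi_t\|_*^2)$ slightly cleaner), and that you evaluate $\sum_t \rho_t$ by swapping the double sum directly rather than invoking the paper's auxiliary Lemma~\ref{lem:alpha-double-sum}, but your computation reproduces the same harmonic-sum bound.
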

The sum in the second term occurs also when bounding the third term in the bound of \Cref{lem:last-iterate-unrolled}, and has been encountered in the average iterate analysis. Notice that, trivially, the left hand side of \Cref{lem:tqv-tcv-bound} is also a bound for the sum of the TCV and TQV at any step, particularly at $n^*$. Being able to bound this sum allows one to derive powerful concentration results with few assumptions. In the next proposition, we extend one such result, Theorem 2.1 in \citep{Bercu2008}, in the spirit of Theorem 3.3 in \citep{Harvey2019}.
\begin{restatable}{proposition}{chickenegg} \label{thm:chicken-egg}
Let $(M_t)_{t=0}^n$ be a square integrable martingale adapted to filtration $(\F_t)_{t=0}^n$ with $M_0 = 0$. Then, for all $x, \beta > 0$ and $\alpha \geq 0$,
\begin{align*}
    P\lrb{\bigcup_{t=1}^n \bcb{M_t \geq x \:\text{and}\: \lan{M}_t + [M]_t \leq \alpha M_t + \beta } } \leq \exp\bbrb{-\min\bbcb{\frac{x^2}{8\beta}, \frac{x}{6\alpha}}} \,. 
\end{align*}
\end{restatable}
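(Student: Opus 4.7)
The plan is to closely follow the template of Theorem~3.3 in \citep{Harvey2019}, replacing Freedman's inequality with Bercu and Touati's self-normalized exponential inequality (Theorem~2.1 of \citep{Bercu2008}). The latter states that for any $y, v > 0$,
\[
P\brb{M_t \geq y, \; \lan{M}_t + [M]_t \leq v} \leq \exp\brb{-y^2/(2v)},
\]
which extends via a standard stopping-time argument (stopping at the first $t$ where either $M_t \geq y$ or $\lan{M}_t + [M]_t$ exceeds $v$) to the maximal form $P(\bigcup_{t=1}^n \{M_t \geq y, \; \lan{M}_t + [M]_t \leq v\}) \leq \exp(-y^2/(2v))$. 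The main obstacle is that in our event the cap on $\lan{M}_t + [M]_t$ is the random quantity $\alpha M_t + \beta$, not a fixed deterministic level, so the inequality cannot be applied directly.

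To bridge this gap, I would employ dyadic peeling on the scale of $M_t$. Specifically, set $x_k = 2^k x$ and $v_k = 2\alpha x_k + \beta$ for $k = 0, 1, 2, \ldots$. On the event of interest, at the witnessing time $t$ we have $M_t \geq x$ and $\lan{M}_t + [M]_t \leq \alpha M_t + \beta$; choosing $k$ such that $x_k \leq M_t < x_{k+1}$ forces $M_t \geq x_k$ and $\lan{M}_t + [M]_t < \alpha x_{k+1} + \beta = v_k$. Hence, writing $A_k := \bigcup_{t=1}^n \{M_t \geq x_k, \; \lan{M}_t + [M]_t \leq v_k\}$, the event of interest is contained in $\bigcup_{k \geq 0} A_k$.

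I would then apply the maximal Bercu-Touati inequality to each $A_k$, obtaining $P(A_k) \leq \exp(-x_k^2/(2 v_k))$. A case analysis on whether $2\alpha x_k$ or $\beta$ dominates $v_k$ shows $x_k^2/(2v_k) \geq 2^k \cdot q$, where $q = \min(x^2/(c_1 \beta), x/(c_2 \alpha))$ for explicit constants $c_1, c_2$. A union bound then yields $P(\text{event}) \leq \sum_{k \geq 0} \exp(-2^k q)$, and a routine geometric-summation bound gives $\sum_{k\geq 0} \exp(-2^k q) \leq 2 \exp(-q)$ whenever $q \geq \log 2$ (the statement being vacuous otherwise). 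Absorbing the factor of $2$ into a slight relaxation of $c_1, c_2$ then delivers the claimed constants $1/8$ and $1/6$.

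The main technical hurdle is bookkeeping the constants through the peeling to match precisely $1/8$ and $1/6$ in the final exponent; this amounts to optimizing the peeling ratio against the per-level exponent from Bercu-Touati and the geometric sum over $k$. The degenerate case $\alpha = 0$ is handled directly by Bercu-Touati applied with $v = \beta$, which in fact gives the stronger bound $\exp(-x^2/(2\beta))$.
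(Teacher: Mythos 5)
Your proposal takes a genuinely different route from the paper, and unfortunately it has a gap that I do not think can be repaired without a substantial change of strategy.

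The paper does not peel. It works directly with the exponential supermartingale $V_t(\lambda) = \exp\bigl(\lambda M_t - \tfrac{\lambda^2}{2}(\lan{M}_t + [M]_t)\bigr)$ from Lemma~B.1 of \citep{Bercu2008}. The trick (borrowed from Theorem~3.3 of \citep{Harvey2019}) is to observe that on the event $\{M_t \geq x,\ \lan{M}_t+[M]_t \leq \alpha M_t + \beta\}$ one has, for any $\lambda \in (0,1/(2\alpha))$ and the matched constant $c=c(\lambda,\alpha)$ with $(\lambda + c\lambda^2\alpha)^2 = 2c\lambda^2$,
\[
\mathbb{I}\{A_t\} \le \exp\bigl(-\lambda x + c\lambda^2\beta\bigr)\,V_t(\tilde\lambda), \qquad \tilde\lambda = \lambda + c\lambda^2\alpha,
\]
so the random cap $\alpha M_t + \beta$ is absorbed \emph{inside the exponent of a single supermartingale} rather than handled by a union over deterministic levels. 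A maximal-inequality lemma then gives $P(\bigcup_t A_t) \leq \exp(-\lambda x + 2\lambda^2\beta)$ and the choice $\lambda = \min\{x/(4\beta), 1/(3\alpha)\}$ delivers exactly the constants $1/8$ and $1/6$, with no leftover multiplicative factor. Your peeling idea bypasses that trick but pays for it.

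The concrete gap in your argument is in the constants, and it shows up already before the union bound. With dyadic caps $v_k = 2\alpha x_k + \beta$, the level-$0$ exponent from Bercu--Touati is $x^2/(2(2\alpha x + \beta)) \geq \min\{x^2/(4\beta),\, x/(8\alpha)\}$, so in your notation $q = \min\{x^2/(4\beta),\, x/(8\alpha)\}$. The linear part here has denominator $8\alpha$, which is already \emph{strictly worse} than the target denominator $6\alpha$: whenever the regime is linear (say $\beta \leq 3\alpha x/4$, so that $q' = x/(6\alpha)$), you would need $q - \log 2 \geq q'$, i.e.\ $x/(8\alpha) - \log 2 \geq x/(6\alpha)$, which is impossible for all $x,\alpha>0$. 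Shrinking the peeling ratio $r$ does not help: the cap at level $0$ is always $\alpha r x + \beta$ with $r>1$, so the linear denominator stays strictly above $4\alpha$, and smaller $r$ worsens the geometric tail. The additional factor of $2$ from $\sum_{k\geq 0}\exp(-2^k q)\leq 2\exp(-q)$ makes things worse still and, contrary to what you claim, cannot be ``absorbed'': the assertion ``the statement is vacuous otherwise'' is false, because the target $\exp\bigl(-\min\{x^2/(8\beta), x/(6\alpha)\}\bigr)$ is strictly below $1$ for every $x,\beta>0$ and $\alpha\geq 0$, so there is no regime in which the bound is trivially satisfied. To fix the approach you would essentially have to abandon peeling and reproduce the supermartingale reparametrization trick.
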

Utilizing this tool, together with the preceding lemmas, we arrive at the following general bound for the last iterate.
\begin{restatable}{theorem} {thmlast} \label{thm:last}
    Let $\Xi_1, \Xi_2: (0,1) \rightarrow (0, \infty)$ be two functions such that for any $\delta \in (0,1)$,
    \begin{equation*}
       P\lrb{\frac{1}{\sqrt{T}} \summ_{t=1}^T \brb{f(x_t) - f^*} > \Xi_1(\delta)} \leq \delta 
       \quad \text{and} \quad P\lrb{\summ_{t=\ceil{T/2}}^{T} \rho_t \brb{\|\xi_t\|^2_* - \E_{t}\|\xi_t\|^2_*} > \Xi_2(\delta)} \leq \delta \,.
    \end{equation*}
    Then, under \Cref{assum:lip,assum:reg,assum:variance}, \Cref{alg:smd} with $\eta_t = \frac{\eta}{\sqrt{t}}$ satisfies the following with probability at least $1-\delta$:
    \begin{align*}
        f(x_T) - f^* \leq \frac{35}{\sqrt{T}} \bigg( 2 \Xi_1(\delta/3) + \sqrt{2}\eta G^2 \log(4T) 
        + 9\sqrt{2}\eta \Brb{\Xi_2(\delta /3) + 2 \sigma^2 \log(4T)} \log (3/\delta) \bigg) \,.
    \end{align*}    
\end{restatable}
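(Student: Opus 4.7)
The plan is to combine Lemma~\ref{lem:last-iterate-unrolled} with Lemmas~\ref{lem:max-z} and \ref{lem:tqv-tcv-bound} and Proposition~\ref{thm:chicken-egg}, using the supplied tail functions $\Xi_1,\Xi_2$ to control the observable pieces and the chicken-egg-style inequality to break the circular dependence between the martingale $Q_T$ and the divergence sequence $(z_t)_t$. I would write the bound of Lemma~\ref{lem:last-iterate-unrolled} as $f(x_T)-f^*\le A+Q_T+C+D$, where $A=\tfrac{2}{T}\sum_{t=\ceil{T/2}}^T(f(x_t)-f^*)$, $C=\tfrac{\eta}{\sqrt{2T}}\sum_{t=\ceil{T/2}}^T\rho_t\|\hat g_t\|_*^2$, and $D=\tfrac{\sqrt 2}{\eta\sqrt T}\sum_{t=\ceil{T/2}}^T z_t$, and define the good event $\mathcal E$ as the intersection of $\{\tfrac{1}{\sqrt T}\sum_{t=1}^T(f(x_t)-f^*)\le \Xi_1(\delta/3)\}$ with $\{\sum_t \rho_t(\|\xi_t\|_*^2-\E_t\|\xi_t\|_*^2)\le \Xi_2(\delta/3)\}$, on which a union bound guarantees $\Pr(\mathcal E)\ge 1-2\delta/3$.

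On $\mathcal E$, the terms $A$ and $C$ are routine. Using $f(x_t)\ge f^*$ to extend $A$'s sum to all of $[T]$ yields $A\le \tfrac{2}{\sqrt T}\Xi_1(\delta/3)$; for $C$, splitting $\|\hat g_t\|_*^2\le 2G^2+2\|\xi_t\|_*^2$, separating the conditional mean $\sum_t\rho_t\,\E_t\|\xi_t\|_*^2$ from its deviation, and invoking Assumption~\ref{assum:variance}, the telescoping estimate $\sum_t\rho_t=O(\log T)$, and the $\Xi_2$ bound yields $C=O\bigl(\tfrac{\eta}{\sqrt T}[(G^2+\sigma^2)\log T+\Xi_2(\delta/3)]\bigr)$. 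For $D$, the crude inequality $\sum_t z_t\le (T/2+1)z^*$ reduces the problem to controlling $z^*$, and Lemma~\ref{lem:max-z}, combined with the two previous bounds on its $A$- and $C$-type sub-expressions, gives on $\mathcal E$
\[
z^* \le \gamma_0 + \gamma_1 Q_{n^*}, \qquad \gamma_1 = \tfrac{3\sqrt 2\,\eta}{\sqrt T}, \qquad \gamma_0 = O\!\left(\tfrac{\eta\,\Xi_1(\delta/3) + \eta^2[(G^2+\sigma^2)\log T + \Xi_2(\delta/3)]}{T}\right).
\]

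The hard part will be $Q_T$ itself. Lemma~\ref{lem:tqv-tcv-bound} bounds $\langle Q\rangle_T+[Q]_T\le K z^*$ on $\mathcal E$ with $K=4\sigma^2\log(4T)+2\Xi_2(\delta/3)$, but substituting the bound on $z^*$ couples this variance proxy back to the martingale's own running maximum, producing on $\mathcal E$ the affine-in-martingale inequality $\langle Q\rangle_T+[Q]_T\le \alpha Q_{n^*}+\beta$ with deterministic $\alpha=K\gamma_1$ and $\beta=K\gamma_0$ -- precisely the hypothesis of Proposition~\ref{thm:chicken-egg}. Since the TCV and TQV are non-decreasing in $t$, on the event $\{Q_T\ge x\}\cap\mathcal E$ the choice $t=n^*$ gives $Q_{n^*}\ge Q_T\ge x$ together with $\langle Q\rangle_{n^*}+[Q]_{n^*}\le \alpha Q_{n^*}+\beta$, so the proposition delivers $\Pr(\{Q_T\ge x\}\cap\mathcal E)\le \exp(-\min\{x^2/(8\beta),\,x/(6\alpha)\})$. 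Picking $x=\sqrt{8\beta\log(3/\delta)}+6\alpha\log(3/\delta)$ makes this at most $\delta/3$; a final union bound then yields, with probability at least $1-\delta$, $Q_T\le Q_{n^*}\lesssim \eta[\sigma^2\log T+\Xi_2(\delta/3)]\log(1/\delta)/\sqrt T$. Feeding this back through $z^*\le \gamma_0+\gamma_1 Q_{n^*}$ gives $z^*=O(1/T)$ and hence $D=O(1/(\eta\sqrt T))$; summing $A+Q_T+C+D$ and tracking the explicit constants produces the claimed bound.
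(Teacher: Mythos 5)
Your proposal follows essentially the same route as the paper: start from Lemma~\ref{lem:last-iterate-unrolled}, condition on the good event where $\Xi_1$ and $\Xi_2$ hold, use Lemma~\ref{lem:max-z} to express $z^*$ as an affine function of $Q_{n^*}$, feed that through Lemma~\ref{lem:tqv-tcv-bound} to obtain a deterministic-coefficient inequality $\langle Q\rangle_{n^*}+[Q]_{n^*}\le\alpha Q_{n^*}+\beta$ on that event, and then invoke Proposition~\ref{thm:chicken-egg} with a three-way union bound. The only cosmetic difference is the order of substitution (the paper first replaces $Q_T$ by $Q_{n^*}$ in the main decomposition and plugs the $z^*$ bound in before applying the chicken--egg inequality, whereas you bound $Q_{n^*}$ first and feed it back), and your sketch has a small arithmetic slip at the end: $D$ should come out of order $\Xi_1/\sqrt T+\eta(\cdots)/\sqrt T$ rather than $1/(\eta\sqrt T)$, since $\sum_t z_t\le T z^*$ and $z^*=O(\eta\Xi_1/T+\eta^2(\cdots)/T)$, but this does not affect the soundness of the argument.
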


To obtain a concrete bound, one needs a tail bound for the error of the average iterate and a similar bound for a by-now-familiar martingale term. The following corollary provides concrete bounds for our two noise models.
\begin{restatable} {corollary} {corlast} \label{cor:last}
    For any $\delta \in (0,1)$ and $\eta > 0$, \Cref{alg:smd} with $\eta_t = \frac{\eta}{\sqrt{t}}$ satisfies the following with probability at least $1-\delta$, where $C_1$ and $C_2$ are constant depending solely on, respectively, $\theta$ and $p$.
    \begin{enumerate} [(i)]
    \item Under \Cref{assum:reg,assum:lip,assum:weibull}, 
    \begin{align*}
        f(x_T) - f^* \leq \frac{C_1\log(eT)}{\sqrt{T}} \bbrb{ \frac{\breg{x^*}{x_{1}}}{\eta} + \eta \lrb{G^2 + \phi^2 \log^{2 \theta + 1}(e / \delta) }} 
    \end{align*}
    \item Under \Cref{assum:reg,assum:lip,assum:poly}, 
    \begin{align*}
         f(x_T) - f^* \leq \frac{C_2 \log(eT)}{\sqrt{T}} \bbrb{ \frac{\breg{x^*}{x_{1}}}{\eta}  
         + \eta \Brb{G^2 + \pnorm^2 \brb{1 / \delta}^{2/p} \log(e / \delta)} } 
    \end{align*}
    \end{enumerate}    
\end{restatable}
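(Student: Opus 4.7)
The plan is to apply \Cref{thm:last}, which reduces the task to furnishing explicit tail bounds $\Xi_1$ and $\Xi_2$ under each of \Cref{assum:weibull,assum:poly}. Throughout, I will use the fact that \Cref{assum:weibull} (resp.\ \Cref{assum:poly} with $p>4$) implies \Cref{assum:variance} with $\sigma$ of order $\phi$.

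For $\Xi_1$, I would revisit the proof of \Cref{thm:avg-general} and notice that inequality~\eqref{eq:unbounded-domain-brief-bound} already establishes the stronger statement $\frac{1}{T}\summ_{t=1}^T (f(x_t)-f^*) \leq B_T^2/(\eta_T T)$, rather than only a bound on $f(\bar x_T)-f^*$. Consequently, \Cref{cor:weibullavg}(ii) and \Cref{cor:polyavg}(ii) apply verbatim to $\frac{1}{T}\summ_t (f(x_t)-f^*)$; multiplying by $\sqrt T$ then yields a valid $\Xi_1(\delta)$ carrying a $\log^{2\theta}(e/\delta)$ dependence in the sub-Weibull case and a $(1/\delta)^{2/p}$ dependence in the polynomial case. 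For $\Xi_2$, I would first exploit the telescoping identity $\alpha_j = \frac{1}{T-j}-\frac{1}{T-j+1}$ to extract the predictable-weight estimates $\rho_t \leq 1$ and $\sum_{t=\ceil{T/2}}^T \rho_t \leq \log(eT)$ (the latter by switching the order of summation). I would then apply \Cref{lem:subw-martingale-conc} in the sub-Weibull case, noting that $\|\xi_t\|_*^2$ is conditionally sub-Weibull with parameter $2\theta$, or a Fuk--Nagaev inequality in the polynomial case, noting that $\|\xi_t\|_*^2$ has $p/2$-th conditional moment at most $\phi^2$. Both routes yield a Bernstein-type term of order $\phi^2 \sqrt{\log(eT)\log(1/\delta)}$ together with a shape-dependent tail of order $\phi^2 \log^{2\theta}(eT/\delta)$ or $\phi^2 (1/\delta)^{2/p}$ respectively.

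The step I expect to be most delicate is the final consolidation inside \Cref{thm:last}. The key mechanism is that $\Xi_2(\delta/3)$ is multiplied by an \emph{additional} factor $\log(3/\delta)$, which is precisely what promotes $\log^{2\theta}$ to $\log^{2\theta+1}$ in part~(i) and attaches the $\log(e/\delta)$ factor to $(1/\delta)^{2/p}$ in part~(ii). Care will be needed to split joint logarithms of the form $\log^{2\theta}(eT/\delta)$ using $\log(ab)\leq \log a + \log b$ so that any $T$-dependence is absorbed into the $\log(eT)$ prefactor while the $\delta$-dependence accumulates the correct exponent. The remaining contributions---the $G^2\log(4T)$ term, the variance contribution $\sigma^2 \log(4T)\log(3/\delta)$, and the $\Xi_1$ contribution itself---are either of lower order in $1/\delta$ or already fit the $\frac{C\log(eT)}{\sqrt T}$ envelope declared in the statement, and can be absorbed into a constant depending only on $\theta$ (or $p$).
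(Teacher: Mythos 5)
Your high-level plan matches the paper's: supply $\Xi_1$ from the average-iterate corollaries (valid because \eqref{eq:unbounded-domain-brief-bound} already controls $\frac{1}{T}\sum_t(f(x_t)-f^*)$), supply $\Xi_2$ from a concentration lemma applied to the martingale $\sum_t\rho_t(\|\xi_t\|_*^2-\E_t\|\xi_t\|_*^2)$, feed both into \Cref{thm:last}, and let the extra $\log(3/\delta)$ factor there promote $\log^{2\theta}$ to $\log^{2\theta+1}$ (resp.\ attach $\log(e/\delta)$ to $(1/\delta)^{2/p}$). You correctly identified that last step as the crucial consolidation.

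However, your expected form of $\Xi_2$ is too loose and would not produce the stated corollary. You anticipate a shape-dependent tail of order $\phi^2\log^{2\theta}(eT/\delta)$, which is what comes out of \Cref{lem:subw-martingale-conc} under the naive uniform union bound (the $s=0$ option). After multiplying by $\log(3/\delta)$ inside \Cref{thm:last} and splitting the joint logarithm via $\log(ab)\le\log a+\log b$, this yields a factor of order $\log^{2\theta}(eT)\log(e/\delta)$. Since $2\theta\ge 2$, the $\log^{2\theta}(eT)$ factor cannot be absorbed into the single $\log(eT)$ prefactor in the statement, so the bound you would obtain is strictly weaker. The paper avoids this by first proving $\sum_{t=\lceil T/2\rceil}^T\rho_t^2\le 3$ (a constant, not $\log(eT)$ --- this is \Cref{lem:alpha-double-sum}, and your inequality $\sum_t\rho_t\le\log(eT)$ is the wrong moment for the concentration), and then invoking the $s>0$ option of \Cref{lem:shortcut-subw-conc}, specifically $s=2$. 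With $\omega_t=\rho_t$, $\max_t\rho_t=\tfrac12$ and $\sum_t\rho_t^2\le3$, the logarithm's argument becomes $2e\sum_t\rho_t^2/(\max_t\rho_t^2\,\delta)=O(1/\delta)$, entirely free of $T$, so $\Xi_2(\delta)=O(\phi^2\log^{2\theta}(e/\delta))$. The same observation is needed in the polynomial case, where $\sum_t\rho_t^{p/2}\le\sum_t\rho_t^2\le3$ makes $\Xi_2(\delta)=O(\phi^2(1/\delta)^{2/p})$ with no $T$-dependence. Your plan therefore has a genuine gap at the $\Xi_2$ estimate: without the constant bound on $\sum_t\rho_t^2$ (and without the non-trivial choice $s>0$), the tail you carry is larger by a factor of $\log^{2\theta-1}(eT)$.
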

\begin{figure*}[ht]
\centering
\begin{subfigure}[b]{0.34\textwidth}
\includegraphics[width=\textwidth]{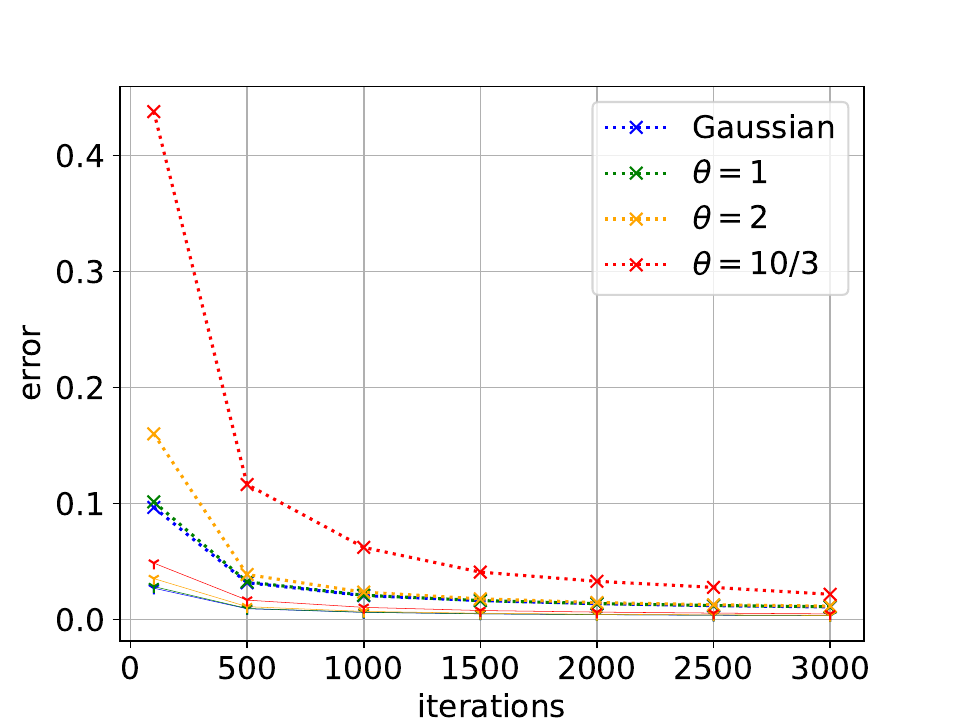}
\caption{}
\label{fig:a}
\end{subfigure}
\begin{subfigure}[b]{0.34\textwidth}
\includegraphics[width=\textwidth]{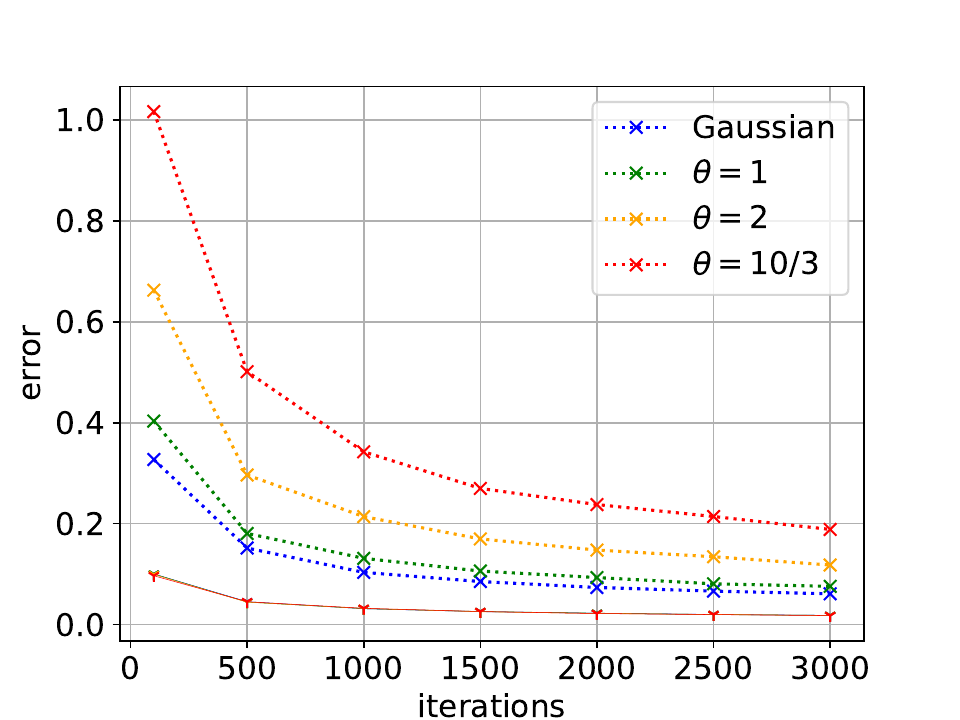}
\caption{}
\label{fig:b}
\end{subfigure}
\begin{subfigure}[b]{0.34\textwidth}
\includegraphics[width=\textwidth]{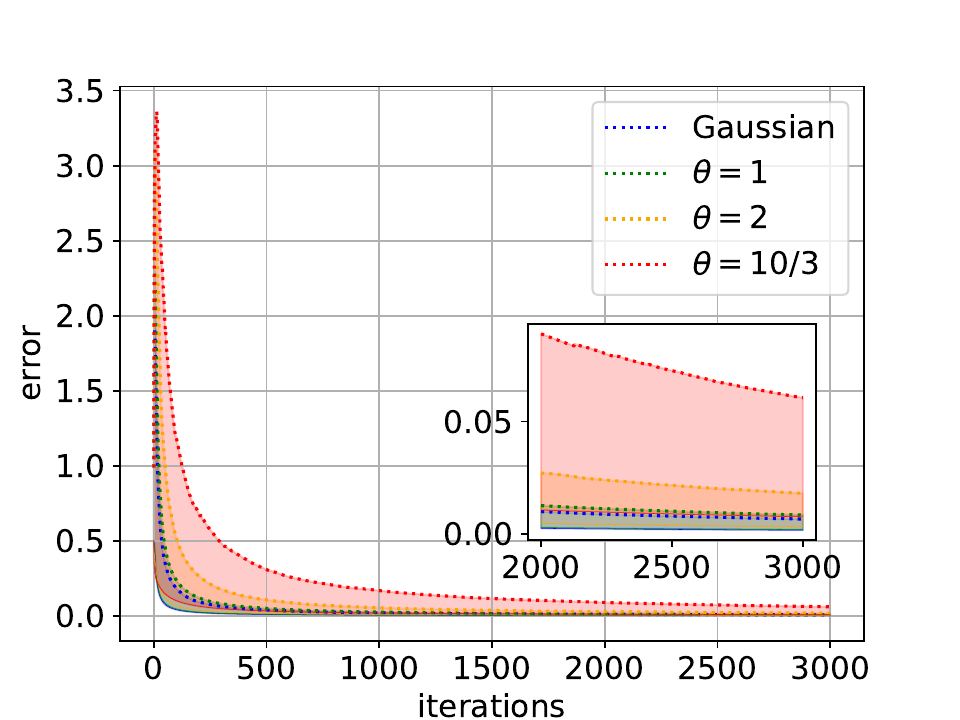}
\caption{}
\label{fig:c}
\end{subfigure}
\begin{subfigure}[b]{0.34\textwidth}
\includegraphics[width=\textwidth]{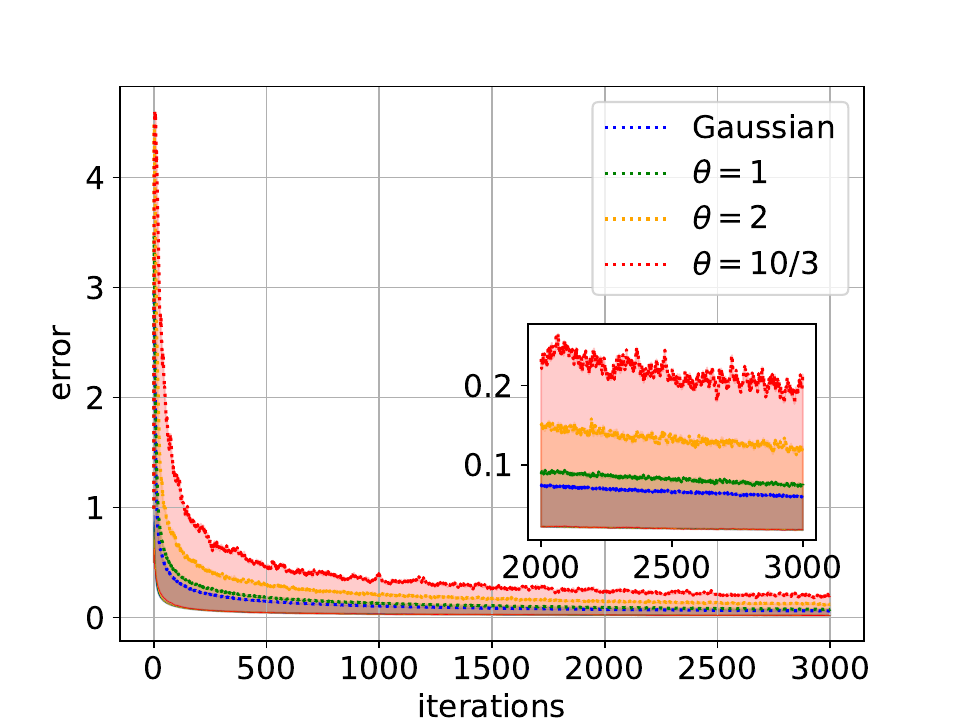}
\caption{}
\label{fig:d}
\end{subfigure}
\caption{The performance of the average iterate and the last iterate are reported in the figures on the left and the right columns respectively. Solid lines show the average of the error across runs and dotted lines show its $99$-percentile. The top row refers to the case of $\eta=1/T$, while the bottom row refers to the case of $\eta=1/\sqrt{t}$. 
The zoomed plots highlight the performance in the last 1k iterations.}
\label{fig:anytime}
\end{figure*}
Firstly, these bounds retain the same decay rate in $T$ as that in the deterministic case, whose bounds are recovered as the noise vanishes. Compared to their counterparts in the average case, both bounds contain an extra $\log(1/\delta)$ factor, an artifact of the general disentanglement technique we adopt. While this means that the exact sub-Gaussian rate is not recovered, this factor is arguably negligible for heavier noise. Although we focused on the anytime learning rates $\eta/\sqrt{t}$, similar results can be straightforwardly verified to hold when using a constant learning rate. Interestingly, for either schedule, the bounds obtainable from this analysis do not assume the two-regime form. The main obstacle for this is encountered as early as the fairly standard \Cref{lem:last-iterate-unrolled}, and is manifested in the third term therein. 
This term leads to the dominance of the heavy-tailed regime, primarily through the contribution of the noise in the final iterates, where $\rho_t$ is $\Theta(1)$.
Beyond the standard step-size choices, extending the analysis of the scheme proposed by \cite{Jain2021} to heavy-tailed noise is an interesting problem.

\section{Experiments} \label{sec:exp}
We present two experiments comparing the performance of the average of the iterates with that of the last iterate when using \Cref{alg:smd} to minimize $f(x) = |x|$ over $\mathbb{R}$ 
with $\psi(x) = 1/2 \|x\|_2^2$ (i.e., classical SGD). 
For the noise, we consider the Gaussian distribution with variance $1$ and three different Weibull distributions with $\theta=1, 2, 10/3$ respectively.
For a fair comparison, the Weibull distributions are scaled to have unit variance. In each experiment, we run the algorithm for $3$k iterations, repeated $20$k times. We report the average and the $99$-percentile of the optimization errors. In the first experiment, we use $1/\sqrt{T}$ as a fixed step-size and run the algorithm for seven values of $T$ ranging from $100$ to $3$k, reporting only the errors at the end of each run. The results for the average iterate and the last iterate are reported in figures (a) and (b) respectively. While in both plots the average error is almost the same across noise levels (due to the normalization), the $99$-percentile curves show a significant difference in behaviour between the two plots. In particular, for the average iterate, the curves for the heavy-tailed noise distributions approach the Gaussian level as the horizon grows, as predicted by the two-regime bounds.
Whereas for the last iterate, the different noise levels exhibit a clear separation for all values of $T$, indicating higher sensitivity to heavy-tailed noise.

In the second experiment, we set $\eta_t=1/\sqrt{t}$ and report the evolution of the error through the $3$k iterations for the average iterate and the last iterate in figures (c) and (d) respectively. We observe once again that the $99$-percentile curves for the last iterate remain well separated across the entire run. On the other hand, in the average iterate case, the very small scale of the $y$-axis in the zoomed plot and the steeper slope of the $99$-percentile curves (with respect to the Gaussian one) seem to hint towards a two-regime behaviour in the anytime case as well.

\bibliographystyle{plainnat}
\bibliography{refs}

\newpage
\appendix
\section{Basic Results for Stochastic Mirror Descent} 
The following lemma is a standard result for mirror descent; see, for example, \cite[Lemma 6.9]{Orabona2023}.
\label{app: basic}
\begin{lemma} \label{lem:smd-one-step}
    For any $z \in \Xs$, the iterates $(x_t)_t$ output by \Cref{alg:smd} satisfy
    \begin{equation*}
        f(x_t) - f(z) \leq \frac{1}{\eta_t} \breg{z}{x_t} - \frac{1}{\eta_t} \breg{z}{x_{t+1}} + \langle \xi_t, x_t - z \rangle + \frac{\eta_t}{2} \|\hat{g}_t\|_*^2 \,.
    \end{equation*}
\end{lemma}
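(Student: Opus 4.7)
The statement is the standard one-step progress bound for mirror descent; the plan is to derive it from the first-order optimality condition of the proximal update combined with the three-point identity for Bregman divergences, closed off by convexity of $f$.

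First I would invoke \Cref{assum:reg}, which ensures that the minimizer $x_{t+1} = \argmin_{x \in \Xs}\{\langle \hat{g}_t, x\rangle + \frac{1}{\eta_t}\breg{x}{x_t}\}$ is well-defined and belongs to $\inter(\dom(\psi))$, so $\psi$ is differentiable at $x_{t+1}$. The first-order optimality condition then reads
\[
\Bigl\langle \hat{g}_t + \frac{1}{\eta_t}\bigl(\nabla \psi(x_{t+1}) - \nabla \psi(x_t)\bigr),\, z - x_{t+1}\Bigr\rangle \geq 0, \quad \forall z \in \Xs,
\]
which rearranges to $\langle \hat{g}_t, x_{t+1} - z\rangle \leq \frac{1}{\eta_t}\langle \nabla \psi(x_{t+1}) - \nabla \psi(x_t),\, z - x_{t+1}\rangle$. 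I would then apply the three-point identity
\[
\langle \nabla \psi(x_{t+1}) - \nabla \psi(x_t),\, z - x_{t+1}\rangle = \breg{z}{x_t} - \breg{z}{x_{t+1}} - \breg{x_{t+1}}{x_t},
\]
which follows by a direct expansion of the definition of $\breg{\cdot}{\cdot}$.

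Next, I would split $\langle \hat{g}_t, x_t - z\rangle = \langle \hat{g}_t, x_t - x_{t+1}\rangle + \langle \hat{g}_t, x_{t+1} - z\rangle$ and absorb the $-\breg{x_{t+1}}{x_t}$ term as follows: Fenchel--Young gives $\langle \hat{g}_t, x_t - x_{t+1}\rangle \leq \frac{\eta_t}{2}\|\hat{g}_t\|_*^2 + \frac{1}{2\eta_t}\|x_t - x_{t+1}\|^2$, while $1$-strong convexity of $\psi$ (part of \Cref{assum:reg}) yields $\breg{x_{t+1}}{x_t} \geq \frac{1}{2}\|x_{t+1} - x_t\|^2$. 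Combining these with the previous inequality, the quadratic terms in $\|x_t - x_{t+1}\|^2$ cancel, leaving
\[
\langle \hat{g}_t, x_t - z\rangle \leq \frac{1}{\eta_t}\bigl(\breg{z}{x_t} - \breg{z}{x_{t+1}}\bigr) + \frac{\eta_t}{2}\|\hat{g}_t\|_*^2.
\]

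To conclude, I would use convexity of $f$ to write $f(x_t) - f(z) \leq \langle g_t, x_t - z\rangle$ for any $g_t \in \partial f(x_t)$, and then substitute $g_t = \hat{g}_t + \xi_t$ to split off the noise inner product, producing the stated bound. I do not anticipate any genuine obstacle: the whole argument is routine once the optimality condition is written in terms of $\nabla \psi$ rather than a subgradient of $\psi$, which is precisely the role of \Cref{assum:reg}.
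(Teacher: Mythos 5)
Your proof is correct and follows essentially the same route as the paper: first-order optimality of the proximal step, the three-point identity for Bregman divergences, Fenchel--Young together with $1$-strong convexity to absorb $\breg{x_{t+1}}{x_t}$, then convexity of $f$ and the split $g_t = \hat{g}_t + \xi_t$. The paper's only cosmetic difference is that it first applies the dual-norm Cauchy--Schwarz inequality and then the scalar bound $ax - x^2/2 \le a^2/2$, which is just Fenchel--Young written out in two steps.
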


\begin{proof}
    Since $x_{t+1}$ is the minimizer of the convex function $\Phi_t(x) = \langle \hat{g}_t, x \rangle + \frac{1}{\eta_t} \breg{x}{x_t}$ in $\Xs$, it satisfies that for any $z \in \Xs$,
    \begin{equation} \label{eq:smd-optimality}
        \langle \hat{g}_t + (1/\eta_t) \nabla \psi(x_{t+1}) - (1/\eta_t) \nabla \psi(x_t), x_{t+1} - z \rangle = \langle \nabla \Phi_t(x_{t+1}), x_{t+1} - z  \rangle \leq 0 \,. 
    \end{equation}
    Hence,
    \begin{align*}
        \eta_t \langle \hat{g}_t, x_{t} - z \rangle &= \eta_t \langle \hat{g}_t, x_{t} - x_{t+1} \rangle + \eta_t \langle \hat{g}_t, x_{t+1} - z \rangle \\
        &= \eta_t \langle \hat{g}_t, x_{t} - x_{t+1} \rangle + \langle \eta_t \hat{g}_t +  \nabla \psi(x_{t+1}) -  \nabla \psi(x_t), x_{t+1} - z \rangle +  \langle \nabla \psi(x_{t}) -  \nabla \psi(x_{t+1}), x_{t+1} - z \rangle \\
        &\stackrel{(a)}{\leq} \eta_t \langle \hat{g}_t, x_{t} - x_{t+1} \rangle + \langle \nabla \psi(x_{t}) -  \nabla \psi(x_{t+1}), x_{t+1} - z \rangle \\
        &\stackrel{(b)}{=} \eta_t \langle \hat{g}_t, x_{t} - x_{t+1} \rangle + \breg{z}{x_t} - \breg{z}{x_{t+1}} - \breg{x_{t+1}}{x_t} \\
        &\stackrel{(c)}{\leq} \breg{z}{x_t} - \breg{z}{x_{t+1}} + \eta_t \|\hat{g}_t\|_* \|x_t-x_{t+1}\| - \frac{1}{2} \|x_t-x_{t+1}\|^2 \\
        &\stackrel{(d)}{\leq} \breg{z}{x_t} - \breg{z}{x_{t+1}} + \frac{\eta^2_t}{2} \|\hat{g}_t\|_*^2 \,,
    \end{align*}
    where (a) holds via \eqref{eq:smd-optimality}, (b) holds via \citep[Lemma 4.1]{Beck2003}, (c) holds by the $1$-strong convexity of $\psi$ and the fact that (by the definition of the dual norm) $\|\hat{g}_t\|_* = \sup_{x \in \R^d \setminus \{\bm{0}\}} \langle \hat{g}_t, x / \|x\| \rangle $, and (d) holds since $ax - (1/2)x^2 \leq (1/2)a^2$ for $x, a \in \R$. After dividing by $\eta_t$, the lemma follows using that $\hat{g}_t = g_t - \xi_t$ and the fact that $\langle g_t, x_{t} - z \rangle \geq f(x_t) - f(z)$ as $g_t \in \partial f(x_t)$.
\end{proof}

\begin{lemma} \label{lem:smd-weighted-iterates}
    For any $z \in \Xs$ and any non-increasing sequence of positive weights $(w_t)_t$, \Cref{alg:smd} satisfies that for any $s \geq 1$,
    \begin{equation*}
        w_s \breg{z}{x_{s+1}} + \sum_{t=1}^s w_t \eta_t (f(x_t) - f(z)) \leq  w_1 \breg{z}{x_{1}} + \sum_{t=1}^s \frac{w_t \eta_t^2}{2}\|\hat{g}_t\|_*^2 + \sum_{t=1}^s w_t \eta_t \ban{ \xi_t, x_t - z} \,.
    \end{equation*}
\end{lemma}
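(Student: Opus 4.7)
The plan is to reduce this to the per-step guarantee already established in \Cref{lem:smd-one-step} and then handle the weights by a summation-by-parts argument that exploits the monotonicity of $(w_t)_t$ together with the non-negativity of the Bregman divergence.

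Concretely, I would start by multiplying the inequality of \Cref{lem:smd-one-step} by $w_t \eta_t$ (which is positive) to obtain
\begin{equation*}
    w_t \eta_t \brb{f(x_t) - f(z)} \leq w_t \breg{z}{x_t} - w_t \breg{z}{x_{t+1}} + w_t \eta_t \ban{\xi_t, x_t - z} + \frac{w_t \eta_t^2}{2} \|\hat{g}_t\|_*^2 \,,
\end{equation*}
and then sum this from $t=1$ to $t=s$. The noise term and the gradient-norm term on the right-hand side directly produce the corresponding sums in the statement, so the only remaining work is to show that
\begin{equation*}
    \summ_{t=1}^{s} \brb{w_t \breg{z}{x_t} - w_t \breg{z}{x_{t+1}}} \leq w_1 \breg{z}{x_1} - w_s \breg{z}{x_{s+1}} \,.
\end{equation*}

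This is the one place where a little care is needed. The left-hand side is not a telescoping sum because of the weights, so I would reindex the second sum (shifting $t \mapsto t-1$) to rewrite it as
\begin{equation*}
    w_1 \breg{z}{x_1} - w_s \breg{z}{x_{s+1}} + \summ_{t=2}^{s} (w_t - w_{t-1}) \breg{z}{x_t} \,.
\end{equation*}
The assumption that $(w_t)_t$ is non-increasing makes each factor $w_t - w_{t-1}$ non-positive, and the Bregman divergence is non-negative (it is bounded below by $(1/2)\|z - x_t\|^2$ thanks to \Cref{assum:reg}), so the correction sum is $\leq 0$ and the stated inequality follows. Substituting this bound back and rearranging (moving the $w_s \breg{z}{x_{s+1}}$ term to the left-hand side) gives the conclusion.

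I do not anticipate a genuine obstacle here since the result is essentially a bookkeeping extension of the standard one-step lemma; the only subtle point is remembering that the monotonicity of $(w_t)_t$ has to go in the correct direction for the Abel-type manipulation to produce a useful inequality, which is exactly why the hypothesis states that the weights are non-increasing.
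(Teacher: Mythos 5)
Your proof is correct and follows essentially the same route as the paper's: multiply the one-step bound from \Cref{lem:smd-one-step} by $w_t\eta_t$, sum, and apply an Abel-type rearrangement to the weighted Bregman differences, discarding the $\sum_{t=2}^s (w_t - w_{t-1})\breg{z}{x_t} \le 0$ correction using the monotonicity of $(w_t)_t$ and non-negativity of the divergence. No meaningful differences from the paper's argument.
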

\begin{proof}
    Since both $\eta_t$ and $w_t$ are non-negative, it follows from \Cref{lem:smd-one-step} that
    \begin{equation*}
        w_t \eta_t \brb{f(x_t) - f(z)} \leq w_t \breg{z}{x_t} - w_t \breg{z}{x_{t+1}} + w_t \eta_t \langle \xi_t, x_t - z \rangle + \frac{w_t \eta_t^2}{2} \|\hat{g}_t\|_*^2\,.
    \end{equation*}
    Using that $(w_t)_t$ is a non-increasing sequence, we have that
    \begin{align*}
        \sum_{t=1}^s w_t \brb{\breg{z}{x_t} - \breg{z}{x_{t+1}}} &= w_1 \breg{z}{x_{1}} - w_s \breg{z}{x_{s+1}} + \sum_{t=2}^s \breg{z}{x_{t}} (w_t - w_{t-1}) \\
        &\leq  w_1 \breg{z}{x_{1}} - w_s \breg{z}{x_{s+1}} \,,
    \end{align*}
    which entails that
    \begin{equation*}
        \sum_{t=1}^s w_t \eta_t (f(x_t) - f(z)) \leq  w_1 \breg{z}{x_{1}} - w_s \breg{z}{x_{s+1}} + \sum_{t=1}^s w_t \eta_t \ban{ \xi_t, x_t - z} + \sum_{t=1}^s \frac{w_t \eta_t^2}{2}\|\hat{g}_t\|_*^2 \,.
    \end{equation*}
\end{proof}

\begin{lemma} \label{lem:smd-iterate-comparison}
    Let $j$ and $r$ be two time indices such that $j \leq r$, and define $\tilde{\eta}_t = \frac{1}{ \eta_t} -  \frac{1}{ \eta_{t-1}}$ and $\tilde{\eta}_1 = \frac{1}{\eta_1}$. Then, \Cref{alg:smd} satisfies that
    \begin{align*}
        \frac{1}{ \eta_r} \breg{x_j}{x_{r+1}} + \sum_{t=j}^r \brb{f(x_t) - f(x_j)} \leq \sum_{t=j}^r \langle \xi_t, x_t - x_j \rangle + \frac{1}{2} \sum_{t=j}^r \eta_t \|\hat{g}_t\|_*^2 + \sum_{t=j}^r \tilde{\eta}_t \breg{x_j}{x_t}  \,.
    \end{align*}
\end{lemma}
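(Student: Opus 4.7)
The plan is to apply \Cref{lem:smd-one-step} with the choice $z = x_j$ and then sum the resulting inequality from $t = j$ to $t = r$. For each such $t$, the one-step lemma gives
\[
f(x_t) - f(x_j) \leq \frac{1}{\eta_t}\breg{x_j}{x_t} - \frac{1}{\eta_t}\breg{x_j}{x_{t+1}} + \langle \xi_t, x_t - x_j\rangle + \frac{\eta_t}{2}\|\hat{g}_t\|_*^2.
\]
Summing over $t \in \{j, \dots, r\}$, the noise terms $\langle \xi_t, x_t - x_j\rangle$ and the squared-dual-norm terms $\frac{\eta_t}{2}\|\hat{g}_t\|_*^2$ already appear in the form claimed by the lemma. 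The only real work is to rewrite the telescoped Bregman differences into a sum weighted by $\tilde{\eta}_t$.

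For the rewrite, set $D_t = \breg{x_j}{x_t}$ and perform Abel summation:
\[
\sum_{t=j}^r \frac{1}{\eta_t}(D_t - D_{t+1}) = \frac{D_j}{\eta_j} - \frac{D_{r+1}}{\eta_r} + \sum_{t=j+1}^r \left(\frac{1}{\eta_t} - \frac{1}{\eta_{t-1}}\right) D_t.
\]
Since $\psi$ is differentiable at $x_j \in \inter(\dom(\psi))$, we have $D_j = \breg{x_j}{x_j} = 0$, so the first boundary term drops. Moving the $-D_{r+1}/\eta_r$ term to the left gives $\frac{1}{\eta_r}\breg{x_j}{x_{r+1}}$ on the LHS, matching the claim. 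The remaining sum can be written as $\sum_{t=j+1}^r \tilde{\eta}_t \breg{x_j}{x_t}$, and extending the index to start at $t=j$ is free because $\breg{x_j}{x_j} = 0$, producing $\sum_{t=j}^r \tilde{\eta}_t \breg{x_j}{x_t}$ as in the statement.

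There is no substantive obstacle here; the entire argument is a one-step lemma plus summation by parts, with the small bookkeeping point being that $\breg{x_j}{x_j} = 0$ kills the boundary term at $t=j$. Note that monotonicity of $(\eta_t)_t$ is not needed for the inequality to hold as stated, though it is exactly the assumption that makes the $\tilde{\eta}_t$ weights non-negative, which is what later applications of the lemma rely on.
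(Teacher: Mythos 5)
Your proof is correct and follows essentially the same route as the paper: apply \Cref{lem:smd-one-step} with $z = x_j$, sum from $t=j$ to $r$, and perform summation by parts on the telescoped Bregman terms, using $\breg{x_j}{x_j} = 0$ to drop the boundary term at $t=j$ and to extend the $\tilde{\eta}_t$-weighted sum down to $t=j$. Your closing observation that monotonicity of $(\eta_t)_t$ is irrelevant to this identity and only matters for the sign of the $\tilde{\eta}_t$ in later applications is accurate.
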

\begin{proof}
    For $t \geq j$, \Cref{lem:smd-one-step} implies that 
    \begin{align*}
         f(x_t) - f(x_j) \leq \frac{1}{\eta_t} \breg{x_j}{x_t} - \frac{1}{\eta_t} \breg{x_j}{x_{t+1}} + \langle \xi_t, x_t - x_j \rangle + \frac{\eta_t}{2} \|\hat{g}_t\|_*^2  \,.
    \end{align*}
    Notice that,
    \begin{align*}
        \sum_{t=j}^r \bbrb{ \frac{1}{ \eta_t}\breg{x_j}{x_t} - \frac{1}{ \eta_t} \breg{x_j}{x_{t+1}}} &= \frac{1}{ \eta_j} \breg{x_j}{x_{j}} - \frac{1}{ \eta_r} \breg{x_j}{x_{r+1}} + \sum_{t={j+1}}^r \bbrb{ \frac{1}{ \eta_t} -  \frac{1}{ \eta_{t-1}}} \breg{x_j}{x_t} \\
        &= -\frac{1}{ \eta_r} \breg{x_j}{x_{r+1}} + \sum_{t=j}^r \tilde{\eta}_t \breg{x_j}{x_t}\,,
    \end{align*}
    where we have used that $\breg{x_j}{x_j} = 0$. Thus, we conclude that
    \begin{align*}
         \sum_{t=j}^r \brb{f(x_t) - f(x_j)} \leq \sum_{t=j}^r \langle \xi_t, x_t - x_j \rangle + \frac{1}{2} \sum_{t=j}^r \eta_t \|\hat{g}_t\|_*^2- \frac{1}{ \eta_r} \breg{x_j}{x_{r+1}} + \sum_{t=j}^r \tilde{\eta}_t \breg{x_j}{x_t} \,.
    \end{align*}
\end{proof}

\section{Proofs of Section \ref{sec:avg}} \label{app:avg}
Before proving \Cref{cor:weibullavg,cor:polyavg}, we state two lemmas specializing \Cref{lem:subw-martingale-conc,lem:fuk-nagaev-martingale-conc} in \Cref{app:concentration} to the two martingales we encounter when analyzing SMD.
\begin{lemma} \label{lem:shortcut-subw-conc}
    Let $(\omega_t)_{t=1}^T$ be a sequence of positive (deterministic) weights with $\omega_*$ denoting their maximum. Additionally, let $(u_t)_{t=1}^T$ be a sequence of vectors in $\R^d$ such that $u_t$ is $\F_{t-1}$-measurable and $\|u_t\| \leq 1$. Then, under \Cref{assum:weibull}, the following holds for any $\delta \in (0,1)$ and $s \geq 0$.
    \begin{enumerate}[(i)]
        \item 
        \begin{equation*}
             P\lrb{\max_{k\in[T]}\sum_{t=1}^k \omega_t \ban{ \xi_t, u_t}  \geq  \phi \sqrt{C_1  \sum_{t=1}^T \omega_t^2 \log(2 / \delta)} + 4 \phi \omega_* C_2 \log^{\theta} \lrb{ \frac{2 e \sum_{t=1}^T \omega_t^s}{\omega_*^s \delta} } } \leq \delta  \,,
        \end{equation*}
        where $C_1 = 2^{3 \theta + 1} \Gamma(3\theta + 1)$ and $C_2 = \max \bcb{ 1 ,  (s \theta - s)^{\theta-1}}$.
        \item
        \begin{equation*}
        P\lrb{\max_{k\in[T]}\sum_{t=1}^k \omega_t \brb{\|\xi_t\|_*^2 - \E_t \|\xi_t\|_*^2} \geq   C_3 \phi^2 \sqrt{C_1  \sum_{t=1}^T \omega_t^2 \log(2 / \delta)} + 4 C_2 C_3 \phi^2 \omega_* \log^{2\theta} \lrb{ \frac{2 e \sum_{t=1}^T \omega_t^s}{\omega_*^s \delta} }  } \leq \delta  \,,
        \end{equation*}
        where $C_1 = 2^{6 \theta + 1} \Gamma(6\theta + 1)$, $C_2 = \max \bcb{ 1 ,  (2s \theta - s)^{2\theta-1}}$, and $C_3 = 2^{2\theta+1}\Gamma(2\theta+1) / \ln^{2\theta}(2)$.
    \end{enumerate}
\end{lemma}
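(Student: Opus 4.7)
The plan is to reduce both parts of the lemma to a single application of the general martingale concentration inequality \Cref{lem:subw-martingale-conc} (the extended Freedman-style bound for sub-Weibull increments from \Cref{app:concentration}). In both cases the task splits into: (a) verifying that the martingale increments in question are conditionally sub-Weibull with the correct shape parameter and a scale that depends on $\omega_t$ and $\phi$; (b) specializing the general bound, which has a $\sqrt{\sum_t \omega_t^2}$ "sub-Gaussian" contribution and an $\omega_*\log^{\theta}$-type "tail" contribution, with a free parameter $s$ appearing inside the logarithm that lets one interpolate between $\omega_*^{-1}\sum_t\omega_t$ and a trivial $T$ inside the log.

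For part (i), since $u_t$ is $\F_{t-1}$-measurable with $\|u_t\|\leq 1$, the definition of the dual norm gives $|\omega_t \langle \xi_t, u_t\rangle| \leq \omega_t \|\xi_t\|_* \|u_t\| \leq \omega_t \|\xi_t\|_*$. Together with \Cref{assum:weibull}, this implies that $\omega_t\langle \xi_t,u_t\rangle$ is conditionally sub-Weibull$(\theta,\omega_t\phi)$ given $\F_{t-1}$, and its conditional expectation vanishes because $\E_t[\xi_t]=0$. Therefore $(\sum_{t=1}^k\omega_t\langle \xi_t,u_t\rangle)_{k\leq T}$ is a sub-Weibull martingale with the correct scale, and the announced inequality is exactly what \Cref{lem:subw-martingale-conc} delivers once one plugs the per-step scale $\omega_t\phi$ into its Gaussian and tail terms and identifies the constants $C_1=2^{3\theta+1}\Gamma(3\theta+1)$ and $C_2=\max\{1,(s\theta-s)^{\theta-1}\}$.

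For part (ii), the increments are $\omega_t(\|\xi_t\|_*^2-\E_t\|\xi_t\|_*^2)$. The first step is to note that squaring a sub-Weibull$(\theta,\phi)$ random variable yields a sub-Weibull$(2\theta,\phi^2)$ random variable at the level of the tail integral, and converting back to the Orlicz-type normalization used in \Cref{assum:weibull} costs the constant $C_3=2^{2\theta+1}\Gamma(2\theta+1)/\ln^{2\theta}(2)$; concretely this is obtained by bounding $\E_t \exp(( \|\xi_t\|_*^2/(C_3\phi^2))^{1/(2\theta)})$ via the moment bounds of $\|\xi_t\|_*$ under \Cref{assum:weibull}. Centering preserves the sub-Weibull property with only a constant-factor change in the scale, which can be absorbed into $C_3$. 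Hence $\omega_t(\|\xi_t\|_*^2-\E_t\|\xi_t\|_*^2)$ is a conditionally sub-Weibull$(2\theta,C_3\omega_t\phi^2)$ martingale increment, and the claim now follows by applying \Cref{lem:subw-martingale-conc} with shape parameter $2\theta$ in place of $\theta$ and scale $C_3\omega_t\phi^2$ in place of $\omega_t\phi$, which mechanically produces the constants $C_1=2^{6\theta+1}\Gamma(6\theta+1)$ and $C_2=\max\{1,(2s\theta-s)^{2\theta-1}\}$ stated in the lemma.

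The only real obstacle is bookkeeping. First, one must verify that the Orlicz-norm-to-moment conversion needed to pass from $\|\xi_t\|_*$ to $\|\xi_t\|_*^2$ (and then to its centered version) yields the precise constant $C_3$ rather than a generic universal constant; this uses the explicit moment formula $\E[|X|^p]\leq 2\Gamma(p\theta+1)\phi^p$ valid for any sub-Weibull$(\theta,\phi)$ variable, specialized to $p=2$ and $p=1$ for the centering term. Second, one must check that the free parameter $s$ inside the logarithm of \Cref{lem:subw-martingale-conc} appears exactly as $\sum_t\omega_t^s/\omega_*^s$, i.e.\ that the general lemma is stated with weights $\omega_t$ already included; this is why the proof shortcuts cleanly rather than requiring a separate reweighting step. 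Once these two bookkeeping items are settled, both bounds follow by a single invocation of \Cref{lem:subw-martingale-conc} with no further computation.
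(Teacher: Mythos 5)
Your proposal takes essentially the same route as the paper: show each increment is conditionally sub-Weibull with the right shape parameter and scale, then invoke \Cref{lem:subw-martingale-conc}(ii). One small correction to your bookkeeping in part (ii): squaring costs \emph{nothing}, since $\E_t[\exp((\|\xi_t\|_*^2/\phi^2)^{1/(2\theta)})]=\E_t[\exp((\|\xi_t\|_*/\phi)^{1/\theta})]\leq 2$ is an exact identity of the exponential moments, so no passage through tail bounds is needed; the entire constant $C_3$ comes from the centering step (\Cref{lem:weibull-centering} applied at shape $2\theta$, giving $c_{2\theta}=2^{2\theta+1}\Gamma(2\theta+1)/\ln^{2\theta}(2)$), and if you actually detour via tail integrals and back as you describe, you would obtain a strictly larger constant than the one stated.
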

\begin{proof}
    \emph{(i)} Since $\|u_t\| \leq 1$, the definition of the dual norm implies that $ |\omega_t \ban{ \xi_t, u_t}| \leq \omega_t \|u_t\| \|\xi_t\|_* \leq \omega_t \|\xi_t\|_*$, yielding that $ \omega_t \ban{ \xi_t, u_t}$ is sub-Weibull($\theta, \omega_t \wnorm$) conditioned on $\F_{t-1}$. The result then follows from \Cref{lem:subw-martingale-conc}(ii).
    
    \emph{(ii)} Using the definition of the sub-Weibull property, one can easily verify that if a random variable X is sub-Weibull($\theta, \wnorm$); then, $X^2$ is sub-Weibull($2\theta, \wnorm^2$). Using this along with \Cref{lem:weibull-centering} yields that $\omega_t \brb{\|\xi_t\|_*^2 - \E_t \|\xi_t\|_*^2}$ is sub-Weibull($2\theta, c_\theta \omega_t \wnorm^2$) conditioned on $\F_{t-1}$, where $c_\theta =  2^{2\theta+1}\Gamma(2\theta+1) / \ln^{2\theta}(2)$. Hence, the result once more follows from \Cref{lem:subw-martingale-conc}(ii).
\end{proof}

\begin{lemma} \label{lem:shortcut-poly-conc}
    Let $(\omega_t)_{t=1}^T$ be a sequence of positive (deterministic) weights with $\omega_*$ denoting their maximum. Additionally, let $(u_t)_{t=1}^T$ be a sequence of vectors in $\R^d$ such that $u_t$ is $\F_{t-1}$-measurable and $\|u_t\| \leq 1$. Then, under \Cref{assum:poly}, the following holds for any $\delta \in (0,1)$.
    \begin{enumerate}[(i)]
        \item 
        \begin{equation*}
            P\lrb{ \max_{k\in[T]}\sum_{t=1}^k \omega_t \ban{ \xi_t, u_t} > \phi \sqrt{2 \sum_{t=1}^T \omega_t^2 \log(1/\delta)} + \brb{2+(p/3)} \phi \lrb{\sum_{t=1}^T \omega_t^p / \delta}^{1/p} } \leq \delta \,.
        \end{equation*}
        \item
        \begin{equation*}
            P\lrb{\max_{k\in[T]}\sum_{t=1}^k \omega_t \brb{\|\xi_t\|_*^2 - \E_t \|\xi_t\|_*^2} >  2 \pnorm^2 \sqrt{2 \sum_{t=1}^T \omega_t^2 \log(1/\delta)} + 2\brb{2+(p/6)} \pnorm^2 \lrb{\sum_{t=1}^T \omega_t^{p/2} / \delta}^{2/p}} \leq \delta \,.
        \end{equation*}
    \end{enumerate}
\end{lemma}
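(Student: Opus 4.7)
The plan is to derive both parts as direct applications of a Fuk--Nagaev--type maximal concentration inequality for martingales with finite conditional $q$-th moments, which I assume is exactly what \Cref{lem:fuk-nagaev-martingale-conc} in \Cref{app:concentration} provides. Such an inequality has the shape $P(\max_{k\le T} S_k \ge \sqrt{2 V \log(1/\delta)} + c_q (M/\delta)^{1/q}) \le \delta$, where $V$ bounds the total conditional variance, $M$ bounds $\sum_t \E_t|X_t|^q$, and $c_q = 2 + q/3$. Both claims then reduce to (a) verifying that the summands are martingale differences with respect to $(\F_t)_t$, and (b) producing matching bounds on their conditional second and $q$-th moments from \Cref{assum:poly}.

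For part (i), the summands $X_t \coloneqq \omega_t \langle \xi_t, u_t \rangle$ are $\F_t$-measurable with $\E_t X_t = 0$, since $u_t$ is $\F_{t-1}$-measurable and $\E_t \xi_t = 0$. The definition of the dual norm together with $\|u_t\| \le 1$ gives the pointwise bound $|X_t| \le \omega_t \|\xi_t\|_*$. Raising to the $p$-th power and using \Cref{assum:poly} yields $\E_t |X_t|^p \le \omega_t^p \pnorm^p$; by Jensen's inequality one also gets $\E_t X_t^2 \le \omega_t^2 (\E_t \|\xi_t\|_*^p)^{2/p} \le \omega_t^2 \pnorm^2$. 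Summing then gives $V \le \pnorm^2 \sum_t \omega_t^2$ and $M \le \pnorm^p \sum_t \omega_t^p$, and plugging into the Fuk--Nagaev inequality at exponent $q = p$ reproduces exactly the stated bound, with the constant $2 + p/3$ arising as $c_p$.

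For part (ii), the summands $Y_t \coloneqq \omega_t(\|\xi_t\|_*^2 - \E_t \|\xi_t\|_*^2)$ are martingale differences by construction. Here the inequality is applied at the reduced exponent $q = p/2$. The $L^{p/2}$ triangle inequality together with \Cref{assum:poly} gives $\E_t |Y_t|^{p/2} \le 2^{p/2} \omega_t^{p/2} \E_t \|\xi_t\|_*^p \le 2^{p/2} \omega_t^{p/2} \pnorm^p$; after the exponent $2/p$ is applied in the tail term, the $2^{p/2}$ turns into the overall factor of $2$ appearing in front of $(2+p/6)\pnorm^2$. The assumption $p > 4$ enters precisely to control the conditional variance: $\E_t\|\xi_t\|_*^4 \le (\E_t\|\xi_t\|_*^p)^{4/p} \le \pnorm^4$, whence $\E_t Y_t^2 \le \E_t[\omega_t^2\|\xi_t\|_*^4] \le \omega_t^2 \pnorm^4$, so $V \le \pnorm^4 \sum_t \omega_t^2$, producing the $\pnorm^2 \sqrt{2\sum_t \omega_t^2 \log(1/\delta)}$ term up to the constants built into the inequality. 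The main obstacle is really just careful bookkeeping: the substantive point is that the Fuk--Nagaev bound must accept conditional moment hypotheses (rather than almost-sure or marginal bounds), and the hypothesis $p>4$ is exactly what is needed to secure the Gaussian-type term in (ii); for $p \in (2,4]$ the squared summand need not have finite conditional second moment and this route would break down.
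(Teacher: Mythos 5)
Your proposal matches the paper's own proof almost line for line: both parts reduce to checking the conditional $q$-th moment hypothesis of the Fuk--Nagaev maximal inequality (\Cref{lem:fuk-nagaev-martingale-conc}), with $q=p$ and $\pnormuni_t = \omega_t\pnorm$ in (i), and $q=p/2$ and $\pnormuni_t = 2\omega_t\pnorm^2$ in (ii), the factor $2$ coming from the centering estimate $\|X-\E X\|_{p/2}\le 2\|X\|_{p/2}$ (the paper's \Cref{lem:centering-norms}). The paper does not re-derive the Gaussian-type variance term inside this lemma; that work lives inside \Cref{lem:fuk-nagaev-martingale-conc}, which uses $\E_t[X_i^2]\le(\E_t|X_i|^p)^{2/p}\le\pnormuni_i^2$ before invoking Rio's Corollary 3.2, so your explicit Jensen bound on $\E_t X_t^2$ and $\E_t Y_t^2$ is redundant here but correct. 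One small inaccuracy in your closing remark: for $p=4$ the conditional second moment of the squared summand is still finite ($\E_t\|\xi_t\|_*^4\le\pnorm^4$); the reason $p=4$ is excluded is that the Fuk--Nagaev result requires exponent strictly greater than $2$, so $q=p/2>2$ forces $p>4$.
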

\begin{proof}
    \emph{(i)}  From the definition of the dual norm and the fact that $\|u_t\| \leq 1$, we have that
    \begin{equation*}
        \E \Bsb{ \babs{ \omega_t \ban{  \xi_t, u_t}}^{p} \, \big| \, \F_{t-1}} \leq \omega_t^p \E \Bsb{ \|u_t\|^p \|\xi_t\|_*^{p} \, \big| \, \F_{t-1}} \leq \omega_t^p \E \Bsb{  \|\xi_t\|_*^{p} \, \big| \, \F_{t-1}} \leq \brb{\omega_t \pnorm}^p   \,,
    \end{equation*}
    where the last inequality follows form \Cref{assum:poly}. The result then follows from \Cref{lem:fuk-nagaev-martingale-conc}. 
    
    \emph{(ii)} On the other hand,
    \begin{equation*}
        \E \Bsb{ \babs{ \omega_t \brb{\|\xi_t\|_*^2 - \E_t \|\xi_t\|_*^2}}^{p/2} \, \big| \, \F_{t-1}} \leq 2^{p/2} \omega_t^{p/2} \E \Bsb{ \|\xi_t\|_*^{p} \, \big| \, \F_{t-1}} \leq \brb{2 \omega_t \pnorm^2}^{p/2}   \,,
    \end{equation*}
    where the first inequality follows from \Cref{lem:centering-norms} and the second follows from \Cref{assum:poly}. Consequently, the result follows once more from \Cref{lem:fuk-nagaev-martingale-conc}.
\end{proof}

\subsection{Proof of Corollary \ref{cor:weibullavg}}
\weibullavg*
\begin{proof}
    For $t \in [T]$, let $u_t = (x_t - x^*)/(\sqrt{2}D_t)$, while for $k \in [T]$, we define 
    \[W_k = \sum_{t=1}^k \eta_t \ban{ \xi_t, u_t} \qquad \text{and} \qquad V_k = \sum_{t=1}^k \eta_t^2 \brb{\|\xi_t\|_*^2 - \E_t \|\xi_t\|_*^2} \,.\]
    As argued before, it holds that $\sqrt{2}D_t \geq \|x_t-x^*\|$, implying that $\|u_t\| \leq 1$.
    For what follows, we will use $C,C_1,C_2,\dots$ to denote positive constants---depending only on $\theta$---whose values may change between steps.
    
    \paragraph{Case \emph{(i)}: $\eta_t = \eta$} ~\\
    Starting with $(W_k)$, we invoke \Cref{lem:shortcut-subw-conc}(i) with $s=0$ and $\omega_t=\eta$ obtaining that
    \begin{equation*}
        P\bbrb{\max_{k \in [T]} W_k \geq C_1 \eta \wnorm \sqrt{ T \log(2 / \delta)} + C_2 \eta \wnorm  \log^{\theta} \brb{2 e T/ \delta }} \leq \delta \,.
    \end{equation*}
    For $(V_k)$, we invoke \Cref{lem:shortcut-subw-conc}(ii) with $s=0$ and $\omega_t = \eta^2$ to get that
    \begin{equation*}
        P\bbrb{ \max_{k \in [T]} V_k \geq C_1 \eta^2 \phi^2 \sqrt{T \log(2 / \delta)} + C_2 \eta^2 \phi^2  \log^{2\theta} \brb{2 e T/ \delta }} \leq \delta \,.
    \end{equation*}
    With these tail bounds, \Cref{thm:avg-general} implies that
    \begin{align*}
        \frac{\eta T}{3} \brb{f(\Bar{x}_T) - f^*} &\leq \breg{x^*}{x_{1}} + \eta^2 \brb{G^2 + C_1 \wnorm^2} T + C_2 \eta^2 \wnorm^2 \Brb{ T \log(4 / \delta) +  \log^{2\theta} \brb{4 e T/ \delta }} \\
        ~
        &\hspace{20em}+ C_3 \eta^2 \phi^2 \Brb{ \sqrt{  T \log(4 / \delta)} +  \log^{2\theta} \brb{4 e T/ \delta }} \\
        ~
        &\leq \breg{x^*}{x_{1}} + \eta^2 \brb{G^2 + C_1 \wnorm^2} T + C_2 \eta^2 \wnorm^2 \Brb{ T \log(4 / \delta) +  \log^{2\theta} \brb{4 e T/ \delta }} \,,
    \end{align*}
    where we have used the fact that \Cref{assum:weibull} implies \Cref{assum:variance} with $\sigma^2 = 2 \Gamma(2\theta + 1) \wnorm^2$ thanks to \Cref{lem:subw-moments}. Subsequently, we have that
    \begin{align*}
        f(\Bar{x}_T) - f^* \leq \frac{C}{T} \bbrb{ \frac{1}{\eta} \breg{x^*}{x_{1}} + \eta \brb{G^2 + \wnorm^2 \log(e / \delta)}T + \eta \phi^2 \log^{2\theta} (e T/ \delta )} \,.
    \end{align*}

    \paragraph{Case \emph{(ii)}: $\eta_t = \eta/\sqrt{t}$} ~\\
    For $(W_k)$, we use \Cref{lem:shortcut-subw-conc}(i) with $s=3$ and $\omega_t=\eta/\sqrt{t}$, while for $(V_k)$, we use the \Cref{lem:shortcut-subw-conc}(ii) with $s=2$ and $\omega_t = \eta^2 / t$ yielding that 
    \begin{equation*}
             P\lrb{\max_{k\in[T]} W_k  \geq C_1 \eta \phi \sqrt{\sum_{t=1}^T (1/t) \log(2 / \delta)} + C_2 \eta \phi  \log^{\theta} \lrb{ {2 e \sum_{t=1}^T (1/t)^{3/2} / \delta} } } \leq \delta  \,,
        \end{equation*}
    and
    \begin{equation*}
        P\lrb{\max_{k\in[T]} V_k \geq   C_1 \eta^2 \phi^2 \sqrt{\sum_{t=1}^T (1/t)^2 \log(2 / \delta)} + C_2 \eta^2 \phi^2 \log^{2\theta} \lrb{ {2 e \sum_{t=1}^T (1/t)^2 /  \delta} }  } \leq \delta  \,.
        \end{equation*}
    Combining this with the facts that
    \begin{equation*}
        \sum_{t=1}^T \frac{1}{t} \leq \log(eT)\,, \qquad \sum_{t=1}^T \frac{1}{t^{3/2}} \leq 3\,, \qquad \text{and} \qquad \sum_{t=1}^T \frac{1}{t^2} \leq 2 \,,
    \end{equation*}
    implies via \Cref{thm:avg-general} that
    \begin{align*}
        \frac{\eta \sqrt{T}}{3} \brb{f(\Bar{x}_T) - f^*} &\leq \breg{x^*}{x_{1}} + \eta^2 \brb{G^2 + C_1 \phi^2} \log(eT) + C_2 \eta^2 \wnorm^2 \Brb{ \log(eT) \log(4 / \delta) +  \log^{2\theta} \brb{12 e/ \delta }} \\
        ~
        &\hspace{20em}+ C_3 \eta^2 \phi^2 \Brb{ \sqrt{  \log(4 / \delta)} +  \log^{2\theta} \brb{8 e/ \delta }} \\
        ~
        &\leq \breg{x^*}{x_{1}} + \eta^2 \brb{G^2 + C_1 \phi^2} \log(eT) + C_2 \eta^2 \wnorm^2 \Brb{ \log(eT) \log(4 / \delta) +  \log^{2\theta} \brb{12 e/ \delta }} \\
        ~
        &\leq \breg{x^*}{x_{1}} + \eta^2 \brb{G^2 + C_1 \phi^2} \log(eT) + C_2 \eta^2 \wnorm^2  \log(eT) \log^{2\theta} \brb{12 e/ \delta } \,,
    \end{align*}
    where we have again used \Cref{lem:subw-moments} to bound $\E_t \|\xi_t\|_*^2$ in terms of $\wnorm^2$ (in place of $\sigma^2$) under \Cref{assum:weibull}.
    Hence, we conclude that
    \begin{align*}
        f(\Bar{x}_T) - f^* \leq \frac{C\log(eT)}{\sqrt{T}} \bbrb{ \frac{1}{\eta} \breg{x^*}{x_{1}} + \eta \lrb{G^2 + \phi^2 \log^{2 \theta}(e / \delta) }} \,.
    \end{align*}
\end{proof}

\subsection{Proof of Corollary \ref{cor:polyavg}}
\polyavg*
\begin{proof}
    Similar to the proof of \Cref{cor:weibullavg}, we define $u_t = (x_t - x^*)/(\sqrt{2}D_t)$ (which satisfies $\|u_t\| \leq 1$), and consider once again the two martingale terms 
    \[W_k = \sum_{t=1}^k \eta_t \ban{ \xi_t, u_t} \qquad \text{and} \qquad V_k = \sum_{t=1}^k \eta_t^2 \brb{\|\xi_t\|_*^2 - \E_t \|\xi_t\|_*^2} \,.\]
    For $(W_k)$, we use \Cref{lem:shortcut-poly-conc}(i) with $\omega_t=\eta_t$, while for $(V_k)$, we use the \Cref{lem:shortcut-poly-conc}(ii) with $\omega_t = \eta_t^2$ yielding that
    \begin{equation*}
        P\lrb{ \max_{k \in [T]} W_k >  \pnorm \sqrt{2 \sum_{t=1}^T \eta_t^2 \log(1/\delta)} + \brb{2+(p/3)} \pnorm \lrb{\sum_{t=1}^T \eta_t^p / \delta}^{1/p}} \leq \delta \,,
    \end{equation*}
    and
    \begin{equation*}
        P\lrb{ \max_{k \in [T]} V_k >  2 \pnorm^2 \sqrt{2 \sum_{t=1}^T  \eta_t^4 \log(1/\delta)} + 2 \brb{2+(p/6)} \pnorm^2 \lrb{\sum_{t=1}^T \eta_t^{p} / \delta}^{2/p}} \leq \delta \,.
    \end{equation*}
    
    For what follows, we will use $C$ to denote a positive constant---depending only on $p$---whose value may change between steps.
    
    \paragraph{Case \emph{(i)}: $\eta_t = \eta$} ~\\
    \Cref{thm:avg-general} with the tail bounds above yields that
    \begin{align*}
        \frac{\eta T}{3} \brb{f(\Bar{x}_T) - f^*} &\leq \breg{x^*}{x_{1}} + \eta^2 \brb{G^2 + \pnorm^2} T + 4 \eta^2 \pnorm^2 \Brb{ 2 T \log(2/\delta) + \brb{2+(p/3)}^2  \brb{2T / \delta}^{2/p}} \\
        ~
        &\hspace{19em}+ 2 \eta^2 \pnorm^2  \Brb{ \sqrt{2 T \log(2/\delta)} +  \brb{2+(p/6)} \brb{2T / \delta}^{2/p}} \\
        ~
        &\leq \breg{x^*}{x_{1}} + \eta^2 \brb{G^2 + \phi^2} T + 6 \eta^2 \pnorm^2 \Brb{ 2 T \log(2/\delta) + \brb{2+(p/3)}^2  \brb{2T / \delta}^{2/p}} \,,
    \end{align*}
    where we have used the fact that \Cref{assum:poly} implies \Cref{assum:variance} with $\sigma^2 = \pnorm^2$. Subsequently, we have that
    \begin{align*}
        f(\Bar{x}_T) - f^* &\leq \frac{3}{T} \bbrb{ \frac{1}{\eta} \breg{x^*}{x_{1}} + \eta \Brb{G^2 + \phi^2 \brb{ 1 + 12 \log(2 / \delta) }}T + 6 \eta \pnorm^2 \brb{2+(p/3)}^2  \brb{2T / \delta}^{2/p}} \\
        &\leq \frac{C}{T} \bbrb{ \frac{1}{\eta} \breg{x^*}{x_{1}} + \eta \brb{G^2 + \phi^2 \log(e / \delta)}T + \eta \pnorm^2 \brb{T / \delta}^{2/p}} \,.
    \end{align*}

    \paragraph{Case \emph{(ii)}: $\eta_t = \eta/\sqrt{t}$} ~\\
    Using that
    \begin{equation*}
        \sum_{t=1}^T \eta_t^2 = \eta^2 \sum_{t=1}^T \frac{1}{t} \leq \eta^2 \log(eT) \,, \qquad \sum_{t=1}^T \eta_t^4 = \eta^4 \sum_{t=1}^T \frac{1}{t^2} \leq 2 \eta^4 \,, \qquad  \text{and} \qquad \sum_{t=1}^T \eta_t^p = \eta^p \sum_{t=1}^T \frac{1}{t^{p/2}} \leq 2 \eta^p
    \end{equation*}
    as $p > 4$ and $t \geq 1$, \Cref{thm:avg-general} implies that
    \begin{align*}
        \frac{\eta \sqrt{T}}{3} \brb{f(\Bar{x}_T) - f^*} &\leq \breg{x^*}{x_{1}} + \eta^2 \brb{G^2 + \pnorm^2} \log(eT) + 4 \eta^2 \pnorm^2 \Brb{ 2 \log(eT) \log(2/\delta) + \brb{2+(p/3)}^2 \brb{4 / \delta}^{2/p}} \\
        ~
        &\hspace{20em}+ 4 \eta^2 \pnorm^2 \Brb{ \sqrt{ \log(2/\delta)} + \brb{2+(p/6)} \brb{4 / \delta}^{2/p}} \\
        ~
        &\leq \breg{x^*}{x_{1}} + \eta^2 \brb{G^2 + \pnorm^2} \log(eT) + 8 \eta^2 \pnorm^2 \Brb{ 2 \log(2/\delta) + \brb{2+(p/3)}^2 \brb{4 / \delta}^{2/p}} \log(eT) \\
        ~
        &\leq \breg{x^*}{x_{1}} + \eta^2 \brb{G^2 + \pnorm^2} \log(eT) + 8 \eta^2 \pnorm^2 \Brb{ p (2/\delta)^{2/p} + \brb{2+(p/3)}^2 \brb{4 / \delta}^{2/p}} \log(eT)  \,,
    \end{align*}
    where we have used that $\log(2/\delta) \leq (p/2) (2/\delta)^{2/p}$, and once again used $\pnorm^2$ in place of $\sigma^2$ by virtue of \Cref{assum:poly}.
    Hence, we conclude that
    \begin{align*}
        f(\Bar{x}_T) - f^* &\leq \frac{3\log(eT)}{\sqrt{T}} \bbrb{ \frac{1}{\eta} \breg{x^*}{x_{1}} + \eta \Brb{G^2 + \pnorm^2 + 16 \brb{2+p}^2 \pnorm^2 \brb{4 / \delta}^{2/p}} } \\
        &\leq \frac{C \log(eT)}{\sqrt{T}} \bbrb{ \frac{1}{\eta} \breg{x^*}{x_{1}} + \eta \Brb{G^2 + \pnorm^2 \brb{1 / \delta}^{2/p}} } \,.
    \end{align*}
\end{proof}

\section{Bounds for the Average Iterate Under a Bounded Domain Assumption} \label{app:anytime}
In this section, we consider again the case when $\eta_t = \eta / \sqrt{t}$ and prove, under a bounded domain assumption,
error bounds for the average iterate that assume a two-regime form. We start with following standard error bound.
\begin{lemma} \label{lem:bounded-avg}
    Assume that there exits $D>0$ such that $\sqrt{\breg{x}{y}} \leq D$ for any $(x,y) \in \dom(\psi) \times \inter(\dom(\psi))$. Then, under \Cref{assum:reg,assum:lip,assum:variance}, \Cref{alg:smd} satisfies
    \begin{equation*}
        f(\bar{x}_T) - f^* \leq \frac{1}{T} \lrb{ \frac{D^2}{\eta_T} +  \sum_{t=1}^T \eta_t (G^2 + \sigma^2) + \sum_{t=1}^T \langle \xi_t, x_t - x^* \rangle + \sum_{t=1}^T \eta_t \brb{\|\xi_t\|_*^2 - \E_t\|\xi_t\|_*^2} } \,.
    \end{equation*}
\end{lemma}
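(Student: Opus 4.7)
The plan is to apply the one-step inequality of \Cref{lem:smd-one-step} with $z=x^*$, sum over $t=1,\dots,T$, and then use Abel summation together with the bounded-divergence assumption to handle the telescoping term $\sum_t \brb{\breg{x^*}{x_t} - \breg{x^*}{x_{t+1}}}/\eta_t$. Specifically, since $(\eta_t)_t$ is non-increasing, one has
\begin{equation*}
    \summ_{t=1}^T \lrb{\frac{1}{\eta_t}\breg{x^*}{x_t} - \frac{1}{\eta_t}\breg{x^*}{x_{t+1}}} = \frac{\breg{x^*}{x_1}}{\eta_1} - \frac{\breg{x^*}{x_{T+1}}}{\eta_T} + \summ_{t=2}^T \breg{x^*}{x_t} \lrb{\frac{1}{\eta_t} - \frac{1}{\eta_{t-1}}} \,,
\end{equation*}
and bounding each $\breg{x^*}{x_t} \leq D^2$ makes the right-hand side telescope to at most $D^2/\eta_T$. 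This is the one step where the bounded-divergence hypothesis is genuinely used, and it is the reason the bound scales with $1/\eta_T$ rather than needing a normalization by $D_t$ as in \Cref{thm:avg-general}.

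Next, I would dispose of the $\sum_t (\eta_t/2)\|\hat{g}_t\|_*^2$ term by writing $\hat{g}_t = g_t - \xi_t$ and applying the elementary inequality $\|a-b\|_*^2 \leq 2\|a\|_*^2 + 2\|b\|_*^2$, so that
\begin{equation*}
    \summ_{t=1}^T \frac{\eta_t}{2}\|\hat{g}_t\|_*^2 \leq \summ_{t=1}^T \eta_t \brb{\|g_t\|_*^2 + \|\xi_t\|_*^2} \leq \summ_{t=1}^T \eta_t (G^2 + \sigma^2) + \summ_{t=1}^T \eta_t \brb{\|\xi_t\|_*^2 - \E_t\|\xi_t\|_*^2} \,,
\end{equation*}
where the second inequality uses \Cref{assum:lip} to bound $\|g_t\|_*^2 \leq G^2$ and adds and subtracts $\E_t\|\xi_t\|_*^2 \leq \sigma^2$ (guaranteed by \Cref{assum:variance}) to isolate the martingale-difference contribution of $\|\xi_t\|_*^2$.

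Combining these two bounds with the noise-linear term $\sum_t \lan{\xi_t,x_t-x^*}$, which already appears in \Cref{lem:smd-one-step} and needs no further manipulation, and finally invoking Jensen's inequality $f(\bar{x}_T) - f^* \leq \frac{1}{T}\sum_{t=1}^T (f(x_t)-f^*)$ via the convexity of $f$, completes the proof. There is no real obstacle here; the only subtlety worth flagging is that the Abel summation step is what allows the $D^2/\eta_T$ term to replace what would otherwise be a dependence on $\sum_t \eta_t^2 \|\hat{g}_t\|_*^2$ with weights $1/\eta_t$, and it is precisely where the uniform bound on the divergence is invoked—hence why this lemma cannot be stated without the bounded-domain hypothesis.
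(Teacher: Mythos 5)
Your proof is correct and follows essentially the same steps as the paper's own proof of \Cref{lem:bounded-avg}: apply \Cref{lem:smd-one-step} with $z = x^*$, sum and Abel-rearrange the telescoping divergence terms (using $\breg{x^*}{x_t} \leq D^2$ and monotonicity of $(\eta_t)_t$ to collapse to $D^2/\eta_T$), split $\|\hat{g}_t\|_*^2 \leq 2\|g_t\|_*^2 + 2\|\xi_t\|_*^2$ and center the noise term, then invoke convexity of $f$. There is no substantive difference from the paper's argument.
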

\begin{proof}
    \Cref{lem:smd-one-step} with $z=x^*$ yields that
    \begin{equation*}
        f(x_t) - f^* \leq \frac{1}{\eta_t} \breg{x^*}{x_t} - \frac{1}{\eta_t} \breg{x^*}{x_{t+1}} + \langle \xi_t, x_t - x^* \rangle + \frac{\eta_t}{2} \|\hat{g}_t\|_*^2 \,.
    \end{equation*}
    Summing this inequality we obtain that
    \begin{align*}
        \sum_{t=1}^T \brb{f(x_t) - f^*} &\leq \sum_{t=1}^T \frac{1}{\eta_t} \breg{x^*}{x_t} - \sum_{t=1}^T \frac{1}{\eta_t} \breg{x^*}{x_{t+1}} + \sum_{t=1}^T \langle \xi_t, x_t - x^* \rangle + 
        \frac{1}{2} \sum_{t=1}^T \eta_t \|\hat{g}_t\|_*^2 \\
        &= \frac{1}{\eta_1} \breg{x^*}{x_1} - \frac{1}{\eta_T} \breg{x^*}{x_{T+1}} + \sum_{t=2}^T \breg{x^*}{x_t} \lrb{\frac{1}{\eta_t} - \frac{1}{\eta_{t-1}}} 
        \\&\hspace{25em}+ \sum_{t=1}^T \langle \xi_t, x_t - x^* \rangle + 
        \frac{1}{2} \sum_{t=1}^T \eta_t \|\hat{g}_t\|_*^2 \\
        &\leq \frac{D^2}{\eta_1} + D^2 \sum_{t=2}^T \lrb{\frac{1}{\eta_t} - \frac{1}{\eta_{t-1}}} + \sum_{t=1}^T \langle \xi_t, x_t - x^* \rangle + 
        \frac{1}{2} \sum_{t=1}^T \eta_t \|\hat{g}_t\|_*^2 \\
        &= \frac{D^2}{\eta_T} + \sum_{t=1}^T \langle \xi_t, x_t - x^* \rangle + 
        \frac{1}{2} \sum_{t=1}^T \eta_t \|\hat{g}_t\|_*^2 \,.
    \end{align*}
    The required result then follows using the fact that $f(\bar{x}_T) - f^* \leq \frac{1}{T} \sum_{t=1}^T \brb{f(x_t) - f^*}$ and that 
    \[
        \|\hat{g}_t\|_*^2 = \|g_t - \xi_t\|_*^2 
        \leq 2\brb{\|g_t\|_*^2 + \|\xi_t\|_*^2} 
        =  2\brb{\|g_t\|_*^2 + \E_t\|\xi_t\|_*^2} + 2 \brb{\|\xi_t\|_*^2 - \E_t\|\xi_t\|_*^2}
        \leq 2\brb{G^2 + \sigma^2} + 2 \brb{\|\xi_t\|_*^2 - \E_t\|\xi_t\|_*^2} \,,
    \]
    where we used \Cref{assum:lip,assum:variance} in the last step.
\end{proof}
We then state the two following corollaries specializing the result of the last lemma under \Cref{assum:weibull,assum:poly} respectively.

\begin{corollary}\label{cor:weibullavg-bounded}
    Assume that there exits $D>0$ such that $\sqrt{\breg{x}{y}} \leq D$ for any $(x,y) \in \dom(\psi) \times \inter(\dom(\psi))$.
    Then, for any $\delta \in (0,1)$ and $\eta > 0$, \Cref{alg:smd} with $\eta_t = \frac{\eta}{\sqrt{t}}$ satisfies, under \Cref{assum:reg,assum:lip,assum:weibull}, that with probability at least $1-\delta$,
    \begin{align*}
        f(\Bar{x}_T) - f^* &\leq \frac{C_1}{\sqrt{T}} \bbrb{ \frac{D^2}{\eta} + \eta G^2 + \wnorm D \bbrb{ \sqrt{\log(e / \delta)} + \frac{\log^{\theta} \brb{e T/ \delta }}{\sqrt{T}}} 
        + \eta \phi^2 \bbrb{ 1 + \sqrt{\frac{\log(eT) \log(e / \delta)}{T}} + \frac{\log^{2\theta} \lrb{ {e /  \delta} }}{\sqrt{T}} }} \\
        &\leq \frac{C_2}{\sqrt{T}} \bbrb{ \frac{D^2}{\eta} + \eta G^2  
        + \eta \phi^2 \bbrb{ \log(e / \delta) + \frac{\log^{2\theta} \lrb{ {e /  \delta} }}{\sqrt{T}} + \frac{\log^{2\theta} \brb{e T/ \delta }}{T} }} \,,
    \end{align*}
    where $C_1$ and $C_2$ are constants depending only on $\theta$.
\end{corollary}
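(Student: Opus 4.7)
The plan is to start from Lemma \ref{lem:bounded-avg} specialized to $\eta_t = \eta/\sqrt{t}$. Using $1/\eta_T = \sqrt{T}/\eta$ and $\sum_{t=1}^T \eta_t \leq 2\eta\sqrt{T}$, the deterministic portion contributes $D^2/\eta + 2\eta(G^2 + \sigma^2)$ after dividing by $T$ and factoring out $1/\sqrt{T}$. I would then invoke Lemma \ref{lem:subw-moments} to convert $\sigma^2$ to $O(\phi^2)$ under Assumption \ref{assum:weibull}, producing the $\eta G^2 + \eta\phi^2$ contribution.

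For the martingale $M_1 = \sum_{t=1}^T \langle \xi_t, x_t - x^*\rangle$, I would use the bounded-Bregman hypothesis to write $\|x_t - x^*\| \leq \sqrt{2\breg{x^*}{x_t}} \leq \sqrt{2}D$, so $u_t := (x_t - x^*)/(\sqrt{2}D)$ is $\F_{t-1}$-measurable with $\|u_t\|\leq 1$. Then Lemma \ref{lem:shortcut-subw-conc}(i) applies with constant weights $\omega_t = \sqrt{2}D$ (so $\omega_* = \sqrt{2}D$) and $s=0$, giving, with probability at least $1-\delta/2$, a bound of order $\phi D\sqrt{T\log(1/\delta)} + \phi D \log^\theta(eT/\delta)$. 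After division by $T$ and extraction of $1/\sqrt{T}$, this matches the $\wnorm D[\sqrt{\log(e/\delta)} + \log^\theta(eT/\delta)/\sqrt{T}]$ piece of the target.

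For $M_2 = \sum_{t=1}^T \eta_t(\|\xi_t\|_*^2 - \E_t\|\xi_t\|_*^2)$, I would apply Lemma \ref{lem:shortcut-subw-conc}(ii) with $\omega_t = \eta/\sqrt{t}$ and $\omega_* = \eta$. The crucial tuning is $s=3$: this makes $\sum_{t=1}^T (\omega_t/\omega_*)^s = \sum t^{-3/2}$ bounded by a constant independent of $T$, thus preventing a stray $\log^{2\theta}(T)$ from polluting the tail term. Together with $\sum \omega_t^2 = \eta^2 \sum 1/t \leq \eta^2\log(eT)$, this yields $M_2 \lesssim \phi^2 \eta\sqrt{\log(eT)\log(1/\delta)} + \phi^2 \eta \log^{2\theta}(e/\delta)$ with probability at least $1-\delta/2$. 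Dividing by $T$ and factoring $1/\sqrt{T}$ produces the $\eta\phi^2[\sqrt{\log(eT)\log(e/\delta)/T} + \log^{2\theta}(e/\delta)/\sqrt{T}]$ piece, while the $\eta\phi^2\cdot 1$ term in the first bound comes from the deterministic $\eta\sigma^2$ contribution. A union bound over the two events completes the first inequality.

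The second inequality is a clean simplification via Young's inequality: $\wnorm D\sqrt{\log(e/\delta)} \leq \tfrac{1}{2}(D^2/\eta + \eta\wnorm^2\log(e/\delta))$ and $\wnorm D \log^\theta(eT/\delta)/\sqrt{T} \leq \tfrac{1}{2}(D^2/\eta + \eta\wnorm^2\log^{2\theta}(eT/\delta)/T)$ absorb both cross terms into $D^2/\eta$ and the $\phi^2$ summands, while $\sqrt{\log(eT)\log(e/\delta)/T} \leq \log(e/\delta) + \log(eT)/T$ (AM--GM) lets the residue be swallowed by $\log(e/\delta)$ and by $\log^{2\theta}(eT/\delta)/T$ (using $\theta\geq 1/2$). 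The main obstacle is the choice of $s$ in the second martingale step: $s=2$ would leave a $\log\log(eT)$ inside the $\log^{2\theta}$ argument, and carelessly chosen $s$ would break the $\log^{2\theta}(e/\delta)/\sqrt{T}$ shape; the bookkeeping between $D$ and $D^2$ in the final Young-step simplification is the other subtle point.
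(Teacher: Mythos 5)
Your proposal is correct and follows essentially the same route as the paper: Lemma~\ref{lem:bounded-avg} plus the two martingale concentration bounds of Lemma~\ref{lem:shortcut-subw-conc} with exactly the same parameter choices ($s=0$ for the linear martingale, $s=3$ for the quadratic one so that $\sum t^{-3/2}$ stays $O(1)$), followed by Young's inequality to obtain the second display. The only cosmetic difference is that you feed $\omega_t = \sqrt{2}D$ into Lemma~\ref{lem:shortcut-subw-conc}(i) while the paper uses $\omega_t = 1$ and multiplies the resulting bound by $\sqrt{2}D$ afterwards; these are the same thing.
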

\begin{proof}
     For $t \in [T]$, let $u_t = (x_t - x^*)/(\sqrt{2}D)$, while for $k \in [T]$, we define 
    \[W_k = \sum_{t=1}^k \ban{ \xi_t, u_t} \qquad \text{and} \qquad V_k = \sum_{t=1}^k \eta_t \brb{\|\xi_t\|_*^2 - \E_t \|\xi_t\|_*^2} \,.\]
    Since $\sqrt{2}D \geq \sqrt{2\breg{x^*}{x_t}} \geq \|x_t-x^*\|$, it holds that $\|u_t\| \leq 1$.
    For what follows, we will use $C,C_1,C_2,\dots$ to denote positive constants---depending only on $\theta$---whose values may change between steps. For the first martingale $(W_k)$, we invoke \Cref{lem:shortcut-subw-conc}(i) with $s=0$ and $\omega_t=1$ obtaining that
    \begin{equation*}
        P\bbrb{\max_{k \in [T]} W_k \geq C_1 \wnorm \sqrt{ T \log(2 / \delta)} + C_2 \wnorm  \log^{\theta} \brb{2 e T/ \delta }} \leq \delta \,.
    \end{equation*}
    while for $(V_k)$, we use the \Cref{lem:shortcut-subw-conc}(ii) with $s=3$ and $\omega_t = \eta / \sqrt{t}$ yielding that 
    \begin{equation*}
        P\lrb{\max_{k\in[T]} V_k \geq   C_1 \eta \phi^2 \sqrt{\sum_{t=1}^T (1/t) \log(2 / \delta)} + C_2 \eta \phi^2 \log^{2\theta} \lrb{ {2 e \sum_{t=1}^T (1/t)^{3/2} /  \delta} }  } \leq \delta  \,.
    \end{equation*}
    Since $\sum_{t=1}^T \frac{1}{\sqrt{t}} \leq 2\sqrt{T}$, $\sum_{t=1}^T (1/t) \leq \log(eT)$, and $\sum_{t=1}^T (1/t)^{3/2} \leq 3$, \Cref{lem:bounded-avg} implies via a union bound that with probability at least $1-\delta$,
    \begin{align*}
        T\brb{f(\bar{x}_T) - f^*} &\leq   \frac{D^2 \sqrt{T}}{\eta} +  C_1 \eta (G^2 + \wnorm^2) \sqrt{T} + C_2 \wnorm D \Brb{ \sqrt{ T \log(4 / \delta)} + \log^{\theta} \brb{4 e T/ \delta }} 
        \\&\hspace{20em}+ C_3 \eta \phi^2 \Brb{ \sqrt{\log(eT) \log(4 / \delta)} + \log^{2\theta} \lrb{ {12 e /  \delta} } } \,,
    \end{align*}
    where we have used that \Cref{assum:weibull} implies \Cref{assum:variance} with $\sigma^2 = 2 \Gamma(2\theta + 1) \wnorm^2$ thanks to \Cref{lem:subw-moments}. This proves the first inequality in the statement. Going further, we can use the fact that $2ab = \inf_{r > 0} a^2/r + r b^2$ for any $a,b > 0$ to get that
    \begin{align*}
        2 \wnorm D \Brb{ \sqrt{ T \log(4 / \delta)} + \log^{\theta} \brb{4 e T/ \delta }} &\leq \frac{D^2}{\eta_T} + \eta_T \phi^2 \Brb{ \sqrt{ T \log(4 / \delta)} + \log^{\theta} \brb{4 e T/ \delta }}^2 \\
        &\leq \frac{D^2 \sqrt{T}}{\eta} + \frac{2 \eta}{\sqrt{T}} \phi^2 \Brb{ T \log(4 / \delta) + \log^{2\theta} \brb{4 e T/ \delta }} \,,
    \end{align*}
    implying that
    \begin{align*}
        T\brb{f(\bar{x}_T) - f^*} &\leq   C_1 \frac{D^2 \sqrt{T}}{\eta} +  C_2 \eta (G^2 + \wnorm^2) \sqrt{T} + C_3 \eta \phi^2 \bbrb{ \sqrt{T} \log(4 / \delta) + \frac{\log^{2\theta} \brb{4 e T/ \delta }}{\sqrt{T}}} 
        \\&\hspace{20em}+ C_4 \eta \phi^2 \Brb{ \sqrt{\log(eT) \log(4 / \delta)} + \log^{2\theta} \lrb{ {12 e /  \delta} } } \\
        &\leq C_1 \frac{D^2 \sqrt{T}}{\eta} +  C_2 \eta (G^2 + \wnorm^2) \sqrt{T} + C_3 \eta \phi^2 \bbrb{ \sqrt{T} \log(4 / \delta) + \frac{\log^{2\theta} \brb{4 e T/ \delta }}{\sqrt{T}} + \log^{2\theta} \lrb{ {12 e /  \delta} }} \,.
    \end{align*}
\end{proof}

\begin{corollary}\label{cor:poly-bounded}
    Assume that there exits $D>0$ such that $\sqrt{\breg{x}{y}} \leq D$ for any $(x,y) \in \dom(\psi) \times \inter(\dom(\psi))$.
    Then, for any $\delta \in (0,1)$ and $\eta > 0$, \Cref{alg:smd} with $\eta_t = \frac{\eta}{\sqrt{t}}$ satisfies, under \Cref{assum:reg,assum:lip,assum:poly}, that with probability at least $1-\delta$,
    \begin{align*}
        f(\Bar{x}_T) - f^* &\leq \frac{C_1}{\sqrt{T}} \bbrb{ \frac{D^2}{\eta} + \eta G^2 + \wnorm D \bbrb{ \sqrt{\log(e/\delta)} + \frac{\lrb{1/ \delta}^{1/p}}{T^{1/2-1/p}}} 
        + \eta \phi^2 \bbrb{ 1 + \sqrt{ \frac{\log(eT) \log(e / \delta)}{T}} + \frac{\lrb{1 / \delta}^{2/p}}{\sqrt{T}} } } \\
        &\leq \frac{C_2}{\sqrt{T}} \bbrb{ \frac{D^2}{\eta} + \eta G^2 
        + \eta \phi^2 \bbrb{ \log(e/\delta) +  \frac{\lrb{1 / \delta}^{2/p}}{\sqrt{T}}} } \,,
    \end{align*}
    where $C_1$ and $C_2$ are constants depending only on $p$.
\end{corollary}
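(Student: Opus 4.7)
The plan is to follow exactly the same route used for Corollary \ref{cor:weibullavg-bounded}, only swapping the sub-Weibull maximal inequality for the Fuk--Nagaev-type inequality of Lemma \ref{lem:shortcut-poly-conc}. Concretely, I would start from Lemma \ref{lem:bounded-avg}, which under \Cref{assum:reg,assum:lip,assum:variance} gives the pointwise decomposition
\begin{equation*}
    T\brb{f(\bar{x}_T) - f^*} \leq \frac{D^2}{\eta_T} + \sum_{t=1}^T \eta_t (G^2 + \sigma^2) + \sum_{t=1}^T \ban{\xi_t, x_t - x^*} + \sum_{t=1}^T \eta_t \brb{\|\xi_t\|_*^2 - \E_t\|\xi_t\|_*^2} \,.
\end{equation*}
Since \Cref{assum:poly} implies \Cref{assum:variance} with $\sigma^2 = \pnorm^2$, the deterministic portion already contributes the $D^2/\eta + \eta G^2 + \eta \phi^2$ terms in the bound once we recall $1/\eta_T = \sqrt{T}/\eta$.

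Next I would normalize the two martingales exactly as in the sub-Weibull case. Setting $u_t = (x_t - x^*)/(\sqrt{2}D)$ (which satisfies $\|u_t\| \leq 1$ by the bounded divergence assumption), I rewrite the linear noise term as $\sqrt{2}D \cdot W_T$ with $W_k = \sum_{t=1}^k \ban{\xi_t, u_t}$, and leave the quadratic one as $V_k = \sum_{t=1}^k \eta_t \brb{\|\xi_t\|_*^2 - \E_t\|\xi_t\|_*^2}$. For $(W_k)$ I apply \Cref{lem:shortcut-poly-conc}(i) with $\omega_t = 1$, yielding the bound $\phi\sqrt{2T\log(1/\delta)} + (2+p/3)\phi (T/\delta)^{1/p}$; dividing by $T$ produces the $\phi D (\sqrt{\log(e/\delta)} + (1/\delta)^{1/p}/T^{1/2-1/p})/\sqrt{T}$ contribution. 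For $(V_k)$ I apply \Cref{lem:shortcut-poly-conc}(ii) with $\omega_t = \eta/\sqrt{t}$. The crucial ingredient here is that $p > 4$ ensures $p/4 > 1$, so $\sum_{t=1}^T (1/\sqrt{t})^{p/2} = \sum_{t=1}^T 1/t^{p/4}$ is uniformly bounded by a $p$-dependent constant, while $\sum 1/t \leq \log(eT)$. This gives a bound of the form $\eta \phi^2 \sqrt{\log(eT)\log(1/\delta)} + \eta \phi^2 (C_p/\delta)^{2/p}$ which, divided by $T$, produces the remaining bracketed terms in the first displayed inequality of the statement.

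A union bound over the two concentration events (each with failure probability $\delta/2$) and some algebraic cleanup then yields the first claimed inequality. To obtain the tidier second inequality, I would use $2ab \leq a^2/r + rb^2$ with $r = \eta_T = \eta/\sqrt{T}$ to absorb the cross term $\phi D (\sqrt{\log(e/\delta)} + (1/\delta)^{1/p}/T^{1/2-1/p})/\sqrt{T}$ into $D^2/(\eta \sqrt{T})$ and $\eta \phi^2 (\log(e/\delta) + (1/\delta)^{2/p}/\sqrt{T})/\sqrt{T}$; the latter two are already of the form appearing in the final bound.

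No truly hard step is involved: essentially all the machinery is already in Lemma \ref{lem:bounded-avg} and Lemma \ref{lem:shortcut-poly-conc}, and the structure mirrors the proof of Corollary \ref{cor:weibullavg-bounded}. The only mild subtlety is the bookkeeping around $\sum 1/t^{p/4}$, which is where the assumption $p > 4$ enters and which is precisely what ensures that the heavy-tailed contribution of $V_T$ decays at the desired $1/\sqrt{T}$ rate rather than more slowly; everything else is routine simplification.
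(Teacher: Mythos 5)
Your proposal is correct and follows essentially the same route as the paper's proof: start from Lemma~\ref{lem:bounded-avg}, normalize with $u_t = (x_t-x^*)/(\sqrt{2}D)$, apply Lemma~\ref{lem:shortcut-poly-conc}(i) with $\omega_t=1$ and Lemma~\ref{lem:shortcut-poly-conc}(ii) with $\omega_t=\eta/\sqrt{t}$, use $\sum_{t\le T} t^{-p/4}\le p/(p-4)$ (where $p>4$ is needed), take a union bound at level $\delta/2$, and then absorb the $\phi D$ cross term via $2ab\le a^2/r + rb^2$ with $r=\eta_T$. This matches the paper step for step, including the roles of Assumption~\ref{assum:poly} in supplying $\sigma^2=\phi^2$.
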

\begin{proof}
    Similar to the proof of \Cref{cor:weibullavg-bounded}, we define $u_t = (x_t - x^*)/(\sqrt{2}D)$ (which satisfies $\|u_t\| \leq 1$), and consider once again the two martingale terms
    \[W_k = \sum_{t=1}^k \ban{ \xi_t, u_t} \qquad \text{and} \qquad V_k = \sum_{t=1}^k \eta_t \brb{\|\xi_t\|_*^2 - \E_t \|\xi_t\|_*^2} \,.\]
    For $(W_k)$, we use \Cref{lem:shortcut-poly-conc}(i) with $\omega_t=1$, while for $(V_k)$, we use the \Cref{lem:shortcut-poly-conc}(ii) with $\omega_t = \eta / \sqrt{t}$ yielding that
    \begin{equation*}
        P\lrb{ \max_{k \in [T]} W_k >  \pnorm \sqrt{2 T \log(1/\delta)} + \brb{2+(p/3)} \pnorm \lrb{T / \delta}^{1/p}} \leq \delta \,,
    \end{equation*}
    and
    \begin{equation*}
        P\lrb{ \max_{k \in [T]} V_k >  2 \eta \pnorm^2 \sqrt{2 \sum_{t=1}^T  (1/t) \log(1/\delta)} + 2 \brb{2+(p/6)} \eta \pnorm^2 \lrb{\sum_{t=1}^T (1/t)^{p/4} / \delta}^{2/p}} \leq \delta \,.
    \end{equation*}
    Since $\sum_{t=1}^T \frac{1}{\sqrt{t}} \leq 2\sqrt{T}$, $\sum_{t=1}^T (1/t) \leq \log(eT)$, and $\sum_{t=1}^T (1/t)^{p/4} \leq p/(p-4)$, \Cref{lem:bounded-avg} implies via a union bound that with probability at least $1-\delta$,
    \begin{align*}
        T\brb{f(\bar{x}_T) - f^*} &\leq   \frac{D^2 \sqrt{T}}{\eta} + 2 \eta (G^2 + \wnorm^2) \sqrt{T} + \sqrt{2} \wnorm D \Brb{ \sqrt{2 T \log(2/\delta)} + \brb{2+(p/3)}  \lrb{2T / \delta}^{1/p}} 
        \\&\hspace{10em}+ 2 \eta \phi^2 \Brb{ \sqrt{2 \log(eT) \log(2 / \delta)} + \brb{2+(p/6)} \brb{p/(p-4)}^{2/p} \lrb{2 / \delta}^{2/p} } \,,
    \end{align*}
    where we have used that \Cref{assum:poly} implies \Cref{assum:variance} with $\sigma^2 = \wnorm^2$. This proves the first inequality in the corollary's statement. For the second, we use once again that $2ab = \inf_{r > 0} a^2/r + r b^2$ for any $a,b > 0$, which implies that
    \begin{align*}
        \sqrt{2} \wnorm D \Brb{ \sqrt{2 T \log(2/\delta)} + \brb{2+(p/3)}  \lrb{2T / \delta}^{1/p}} &\leq \frac{D^2}{\eta_T} + \frac{1}{2}\eta_T \phi^2 \Brb{ \sqrt{2 T \log(2/\delta)} + \brb{2+(p/3)}  \lrb{2T / \delta}^{1/p}}^2 \\
        &\leq \frac{D^2 \sqrt{T}}{\eta} + \frac{\eta}{\sqrt{T}} \phi^2 \Brb{2 T \log(2/\delta) + \brb{2+(p/3)}^2  \lrb{2T / \delta}^{2/p}} \,,
    \end{align*}
    using which we obtain that
    \begin{align*}
        T\brb{f(\bar{x}_T) - f^*} &\leq   \frac{2 D^2 \sqrt{T}}{\eta} + 2 \eta (G^2 + \wnorm^2) \sqrt{T} + \eta \phi^2 \Brb{2 \sqrt{T}\log(2/\delta) + \brb{2+(p/3)}^2 T^{(4-p)/(2p)} \lrb{2/ \delta}^{2/p}}  
        \\&\hspace{10em}+ 2 \eta \phi^2 \Brb{ \sqrt{2 \log(eT) \log(2 / \delta)} + \brb{2+(p/6)} \brb{p/(p-4)}^{2/p} \lrb{2 / \delta}^{2/p} } \\
        &\leq \frac{2 D^2 \sqrt{T}}{\eta} + 2 \eta (G^2 + \wnorm^2) \sqrt{T} + 3 \eta \phi^2 \Brb{2 \sqrt{T}\log(2/\delta) + \brb{2+(p/3)}^2 \brb{p/(p-4)}^{2/p} \lrb{2/ \delta}^{2/p}} \,,
    \end{align*}
    where in the second step we used that $p>4$.
\end{proof}

\section{Proofs of Section \ref{sec:last}} \label{app:last}

\subsection{Proof of Proposition \ref{thm:chicken-egg}}
\chickenegg*
\begin{proof}
    For any $\lambda \in \R$ and $0 \leq t \leq n$, define\footnote{One can set $\lan{M}_0 = [M]_0 = M_0$.}
    \begin{equation*}
        V_t(\lambda) = \exp\bbrb{\lambda M_t - \frac{\lambda^2}{2}\brb{\lan{M}_t + [M]_t}} \,.
    \end{equation*}
    By Lemma B.1 in \citep{Bercu2008}, $(V_t(\lambda))_{t=0}^n$ is a (non-negative) supermartingale (with $V_0(\lambda)=1$). For $t \in [n]$, define the event $A_t = \bcb{M_t \geq x \:\text{and}\: \lan{M}_t + [M]_t \leq \alpha M_t + \beta }$.
    From the proof of Theorem 3.3 in \citep{Harvey2019}, if we fix some $\lambda \in (0, 1/(2\alpha))$, then there exists  $c=c(\lambda, \alpha) \in (0,2]$ such that $\brb{\lambda + c \lambda^2 \alpha}^2 = 2 c \lambda ^2$. 
    With this in mind, we have that for any $t \in [n]$ and any $\lambda \in (0, 1/(2\alpha))$:
    \begin{align*}
        \mathbb{I}\{A_{t}\}
        &\leq  \exp\bbrb{(\lambda + c \lambda^2 \alpha) M_{t} - c \lambda^2\brb{\lan{M}_{t} + [M]_{t}} - \lambda x + c \lambda^2 \beta} \\
        &= \exp\brb{- \lambda x + c \lambda^2 \beta}  \exp\bbrb{(\lambda + c \lambda^2 \alpha) M_{t} - c \lambda^2\brb{\lan{M}_{t} + [M]_{t}}} \\
        &= \exp\brb{- \lambda x + c \lambda^2 \beta} \exp\bbrb{\tilde{\lambda} M_{t} - \frac{\tilde{\lambda}^2}{2}\brb{\lan{M}_{t} + [M]_{t}}} \\
        &= \exp\brb{- \lambda x + c \lambda^2 \beta} V_{t}(\tilde{\lambda}) \leq  \exp\brb{- \lambda x + 2 \lambda^2 \beta} V_{t}(\tilde{\lambda})\,, 
    \end{align*}
    where $\tilde{\lambda} = \lambda + c \lambda^2 \alpha$, and the first inequality holds since the argument of the exponent is non-negative under $A_{t}$.
    Hence, \Cref{lem:maximal} entails that
    \begin{equation*}
        P\lrb{\bigcup_{t=1}^n A_t} \leq \exp\brb{- \lambda x + 2 \lambda^2 \beta} \,.
    \end{equation*}
    Finally, upon choosing $\lambda = \min\bcb{\frac{x}{4\beta},\frac{1}{3\alpha}}$, we can conclude that
    \[ \exp\brb{- \lambda x + 2 \lambda^2 \beta} \leq \exp\bbrb{-\min\bbcb{\frac{x^2}{8\beta}, \frac{x}{6\alpha}}} \,.\]
\end{proof}

\subsection{Proof of Lemma \ref{lem:last-iterate-unrolled}}
\lastiterateunrolled*
\begin{proof}
    For any $k \in [T-1]$, \Cref{lem:smd-iterate-comparison} with $j=T-k$ and $r=T$ implies that
    \begin{equation*}
        \sum_{t=T-k}^T \brb{f(x_t) - f(x_{T-k})} \leq \sum_{t=T-k}^T \langle \xi_t, x_t - x_{T-k} \rangle + \frac{1}{2} \sum_{t=T-k}^T \eta_t \|\hat{g}_t\|^2_* + \sum_{t=T-k}^T \tilde{\eta}_t \breg{x_{T-k}}{x_t} \,,
    \end{equation*}
    where $\tilde{\eta}_t = 1 / \eta_t -  1/ \eta_{t-1}$ and $\tilde{\eta}_1 = 1/\eta_1$. We then proceed as in the proof of Lemma 7.1 in \citep{Harvey2019}. Namely, we define $S_k = \frac{1}{k+1}\sum_{t=T-k}^T f(x_t)$, which, combined with the previous inequality, yields that
    \begin{align*}
        S_{k-1} &= S_k + \frac{S_k - f(x_{T-k})}{k} \\
        &\leq S_k + \frac{1}{k(k+1)} \sum_{t=T-k}^T \langle \xi_t, x_t - x_{T-k} \rangle + \frac{1}{2k(k+1)} \sum_{t=T-k}^T \eta_t \|\hat{g}_t\|^2_* + \frac{1}{k(k+1)} \sum_{t=T-k}^T \tilde{\eta}_t \breg{x_{T-k}}{x_t} \,.
    \end{align*}
    Since $S_0 = f(x_T)$, by unrolling the recursion we obtain that
    \begin{multline} \label{last-iterate-unrolled}
        f(x_T) \leq \frac{1}{\floor{T/2}+1} \sum_{t=\ceil{T/2}}^T f(x_t) + \sum_{k=1}^{\floor{T/2}}\frac{1}{k(k+1)} \sum_{t=T-k}^T \langle \xi_t, x_t - x_{T-k} \rangle + \sum_{k=1}^{\floor{T/2}} \frac{1}{2k(k+1)} \sum_{t=T-k}^T \eta_t \|\hat{g}_t\|^2_* \\+ \sum_{k=1}^{\floor{T/2}}  \frac{1}{k(k+1)} \sum_{t=T-k}^T \tilde{\eta}_t \breg{x_{T-k}}{x_t} \,.
    \end{multline}
    One can rewrite the second term on the right-hand side of the above inequality as follows
    \begin{align*}
        \sum_{k=1}^{\floor{T/2}}\frac{1}{k(k+1)} \sum_{t=T-k}^T \langle \xi_t, x_t - x_{T-k} \rangle &= \sum_{t=\ceil{T/2}}^T \sum_{k=(T-t)\lor 1}^{\floor{T/2}} \frac{1}{k(k+1)} \ban{\xi_t,  x_t- x_{T-k}} \\
        &= \sum_{t=\ceil{T/2}}^T \sum_{j=\ceil{T/2}}^{t\land (T-1)} \frac{1}{(T-j)(T-j+1)} \ban{\xi_t,  x_t - x_{j}} = \sum_{t=\ceil{T/2}}^T \ban{\xi_t, w_t}\,.
    \end{align*}
    Similarly, we also have that
    \begin{align*}
        \sum_{k=1}^{\floor{T/2}} \frac{1}{2k(k+1)} \sum_{t=T-k}^T \eta_t \|\hat{g}_t\|^2_* 
        &= \frac{1}{2} \sum_{t=\ceil{T/2}}^T \eta_t \|\hat{g}_t\|^2_* \sum_{j=\ceil{T/2}}^{t\land (T-1)}  \frac{1}{(T-j)(T-j+1)} = \frac{1}{2} \sum_{t=\ceil{T/2}}^T \eta_t \rho_t \|\hat{g}_t\|^2_*  \\
        \sum_{k=1}^{\floor{T/2}}  \frac{1}{k(k+1)} \sum_{t=T-k}^T \tilde{\eta}_t \breg{x_{T-k}}{x_t} &= \sum_{t=\ceil{T/2}}^T \tilde{\eta}_t \sum_{j=\ceil{T/2}}^{t\land (T-1)} \frac{1}{(T-j)(T-j+1)} \breg{x_{j}}{x_t} = \sum_{t=\ceil{T/2}}^T \tilde{\eta}_t z_t \,.
    \end{align*}
    After plugging these expressions back into \eqref{last-iterate-unrolled}, we conclude the proof by using that $\floor{T/2}+1 \geq T/2$ and observing that for any time-step $t \geq \ceil{T/2}$,
    \begin{align*}
        \eta_t  = \frac{\eta}{\sqrt{t}} \leq \frac{\sqrt{2}\eta}{\sqrt{T}} \qquad \text{and} \qquad \tilde{\eta}_t = \frac{1}{\eta} (\sqrt{t}-\sqrt{t-1}) = \frac{1}{\eta(\sqrt{t}+\sqrt{t-1})} \leq \frac{\sqrt{2}}{\eta\sqrt{T}} \,.
    \end{align*}
\end{proof}

\subsection{Proof of Lemma \ref{lem:max-z}}
Recall that for a time-step $s$ such that 
$\ceil{T/2} \leq s \leq T$, $Q_s = \sum_{t=\ceil{T/2}}^s \lan{\xi_t,  w_t}$, and that $z^*$ is short for $\max_{\ceil{T/2} \leq s \leq T} z_s$.
\maxzlemma*
\begin{proof}
    Notice that $z_{\ceil{T/2}}=0$ and $Q_{\ceil{T/2}} = 0$; hence, the lemma trivially holds when $T = 1$. Thus, we assume for what follows that $T \geq 2$.
    Let $j$ and $s$ be two time-steps such that $\ceil{T/2} + 1 \leq s \leq T$ and $\ceil{T/2} \leq j \leq s$. Then, via \Cref{lem:smd-iterate-comparison}, we have that
    \begin{align*}
        \frac{1}{ \eta_{s-1}} \breg{x_j}{x_{s}}  \leq \sum_{t=j}^{s-1} \brb{f(x_j) - f(x_t)} + \sum_{t=j}^{s-1} \langle \xi_t, x_t - x_j \rangle + \frac{1}{2} \sum_{t=j}^{s-1} \eta_t \|\hat{g}_t\|^2_* + \sum_{t=j}^{s-1} \tilde{\eta}_t \breg{x_j}{x_t} \,,
    \end{align*}
    where $\tilde{\eta}_t = 1 / \eta_t -  1/ \eta_{t-1}$ and $\tilde{\eta}_1 = 1/\eta_1$. This, in turn, implies that
    \begin{multline} \label{generic-z-bound-1}
        \frac{1}{\eta_{s-1}} \sum_{j=\ceil{T/2}}^{s\land (T-1)} \alpha_j \breg{x_j}{x_{s}} \leq \sum_{j=\ceil{T/2}}^{s\land (T-1)} \alpha_j \sum_{t=j}^{s-1} \brb{f(x_j) - f(x_t)} + \sum_{j=\ceil{T/2}}^{s\land (T-1)} \alpha_j \sum_{t=j}^{s-1} \langle \xi_t, x_t - x_j \rangle \\+ \frac{1}{2} \sum_{j=\ceil{T/2}}^{s\land (T-1)} \alpha_j  \sum_{t=j}^{s-1} \eta_t \|\hat{g}_t\|^2_* + \sum_{j=\ceil{T/2}}^{s\land (T-1)} \alpha_j  \sum_{t=j}^{s-1} \tilde{\eta}_t \breg{x_j}{x_t} \,.
    \end{multline}
    For the last three terms, we swap the sums obtaining that
    \begin{align*}
        \sum_{j=\ceil{T/2}}^{s\land (T-1)} \alpha_j \sum_{t=j}^{s-1} \langle \xi_t, x_t - x_j \rangle &= \sum_{t=\ceil{T/2}}^{s-1} \sum_{j=\ceil{T/2}}^{t \land (T-1)} \alpha_j \langle \xi_t, x_t - x_j \rangle = \sum_{t=\ceil{T/2}}^{s-1} \langle \xi_t, w_t \rangle \\
        \frac{1}{2} \sum_{j=\ceil{T/2}}^{s\land (T-1)} \alpha_j  \sum_{t=j}^{s-1} \eta_t \|\hat{g}_t\|^2_* &= \frac{1}{2} \sum_{t=\ceil{T/2}}^{s-1} \eta_t \|\hat{g}_t\|^2_* \sum_{j=\ceil{T/2}}^{t \land (T-1)} \alpha_j = \frac{1}{2} \sum_{t=\ceil{T/2}}^{s-1} \eta_t \rho_t \|\hat{g}_t\|^2_* \leq \frac{\eta}{\sqrt{2T}} \sum_{t=\ceil{T/2}}^{s-1} \rho_t \|\hat{g}_t\|^2_* \\
        \sum_{j=\ceil{T/2}}^{s\land (T-1)} \alpha_j  \sum_{t=j}^{s-1} \tilde{\eta}_t \breg{x_j}{x_t} &=  \sum_{t=\ceil{T/2}}^{s-1} \tilde{\eta}_t \sum_{j=\ceil{T/2}}^{t \land (T-1)} \alpha_j  \breg{x_j}{x_t} =  \sum_{t=\ceil{T/2}}^{s-1} \tilde{\eta}_t z_t \,.
    \end{align*}
    For the first term, if we define $\Delta_t = f(x_t) - f^*$ for $t \in [T]$, we obtain that
    \begin{align*}
        \sum_{j=\ceil{T/2}}^{s\land (T-1)} \alpha_j \sum_{t=j}^{s-1} \brb{f(x_j) - f(x_t)} &= \sum_{j=\ceil{T/2}}^{s\land (T-1)} \alpha_j \sum_{t=j}^{s-1} \brb{\Delta_j - \Delta_t} \\
        &= \sum_{j=\ceil{T/2}}^{s-1} \alpha_j \Delta_j (s-j)  - \sum_{j=\ceil{T/2}}^{s-1} \alpha_j \sum_{t=j}^{s-1} \Delta_t \\
        &= \sum_{t=\ceil{T/2}}^{s-1} \alpha_t \Delta_t (s-t)  - 
        \sum_{t=\ceil{T/2}}^{s-1} \Delta_t \sum_{j=\ceil{T/2}}^{t} \alpha_j \\
        &= \sum_{t=\ceil{T/2}}^{s-1} \Delta_t \bbrb{ \frac{s-t}{(T-t)(T-t+1)} - \frac{1}{T-t} + \frac{1}{T-\ceil{T/2}+1}} \\
        &\leq \frac{1}{\floor{T/2}+1} \sum_{t=\ceil{T/2}}^{s-1} \Delta_t \leq \frac{2}{T} \sum_{t=\ceil{T/2}}^{s-1} \Delta_t \,,
    \end{align*}
    where in the second equality we used that the inner sum is empty when $j=s$ and that $s \leq T$, the fourth equality follows from \Cref{lem:alpha-sum} and the definition of $\alpha_t$,
    and the inequality holds since $(s-t)/(T-t+1) < 1$. Returning back to \Cref{generic-z-bound-1}, we have that
    \begin{align*}
        z_s &= \sum_{j=\ceil{T/2}}^{s\land (T-1)} \alpha_j \breg{x_j}{x_{s}} \leq \frac{\eta_{\ceil{T/2}}}{\eta_{s-1}} \sum_{j=\ceil{T/2}}^{s\land (T-1)} \alpha_j \breg{x_j}{x_{s}} \\
        &\leq \eta_{\ceil{T/2}} \biggl( \frac{2}{T} \sum_{t=\ceil{T/2}}^{s-1} \brb{f(x_t) - f^*} + \sum_{t=\ceil{T/2}}^{s-1} \langle \xi_t, w_t \rangle + \frac{\eta}{\sqrt{2T}} \sum_{t=\ceil{T/2}}^{s-1} \rho_t \|\hat{g}_t\|^2_* + \sum_{t=\ceil{T/2}}^{s-1} \tilde{\eta}_t z_t \biggr) \\
        &\leq \frac{2\sqrt{2} \eta}{T\sqrt{T}} \sum_{t=\ceil{T/2}}^{s-1} \brb{f(x_t) - f^*} + \frac{\sqrt{2} \eta}{\sqrt{T}} \sum_{t=\ceil{T/2}}^{s-1} \langle \xi_t, w_t \rangle + \frac{\eta^2}{T} \sum_{t=\ceil{T/2}}^{s-1} \rho_t \|\hat{g}_t\|^2_* + \frac{\sqrt{2} \eta}{\sqrt{T}} \sum_{t=\ceil{T/2}}^{s-1} \tilde{\eta}_t z_t \,.
    \end{align*}

    Notice that the terms in the first, third and fourth sum on the right-hand side of the last inequality are non-negative. Hence, it holds that
    \begin{multline*} 
        z^* \leq  \frac{2\sqrt{2} \eta}{T\sqrt{T}} \sum_{t=\ceil{T/2}}^{T-1} \brb{f(x_t) - f^*} + \frac{\sqrt{2} \eta}{\sqrt{T}} \max_{\ceil{T/2} \leq n \leq T-1} \sum_{t=\ceil{T/2}}^{n} \langle \xi_t, w_t \rangle + \frac{\eta^2}{T} \sum_{t=\ceil{T/2}}^{T-1} \rho_t \|\hat{g}_t\|^2_* 
         + \frac{\sqrt{2} \eta}{\sqrt{T}} \sum_{t=\ceil{T/2}}^{T-1} \tilde{\eta}_t z_t \,.
    \end{multline*}
    Next, we will bound the last term by relating it back to $z^*$. Since this term is zero when $T=2$ (recalling that $z_{\ceil{T/2}}=0$), we focus in the following argument on the case when $T \geq 3$. Observe that
    \begin{align*}
        \frac{\sqrt{2} \eta}{\sqrt{T}} \sum_{t=\ceil{T/2}}^{T-1} \tilde{\eta}_t  
        &=   \frac{\sqrt{2} \eta}{\sqrt{T}} \sum_{t=\ceil{T/2}}^{T-1} \frac{\sqrt{t} - \sqrt{t-1}}{\eta} \\
        &=   \frac{\sqrt{2}}{\sqrt{T}} \brb{\sqrt{T-1} - \sqrt{\ceil{T/2}-1}} \\
        &\leq  \frac{\sqrt{2}}{\sqrt{T}} \brb{\sqrt{T-1} - \sqrt{T/2-1}} \\
    \end{align*}
    As a function of $T$, the last expression is decreasing in $T \geq 3$, and thus (by plugging in $T=3$) can be bounded by $1/\sqrt{3}$ . Hence,
    \begin{equation*}
        \frac{\sqrt{2} \eta}{\sqrt{T}} \sum_{t=\ceil{T/2}}^{T-1} \tilde{\eta}_t z_t \leq \frac{1}{\sqrt{3}} z^* \leq \frac{2}{3} z^* \,.
    \end{equation*}
    Consequently,
    \begin{align*}
        z^* &\leq  \frac{6\sqrt{2} \eta}{T\sqrt{T}} \sum_{t=\ceil{T/2}}^{T-1} \brb{f(x_t) - f^*} + \frac{3 \sqrt{2} \eta}{\sqrt{T}} \max_{\ceil{T/2} \leq n \leq T-1} \sum_{t=\ceil{T/2}}^{n} \langle \xi_t, w_t \rangle + \frac{3\eta^2}{T} \sum_{t=\ceil{T/2}}^{T-1} \rho_t \|\hat{g}_t\|^2_* \\
        &\leq \frac{6\sqrt{2} \eta}{T\sqrt{T}} \sum_{t=\ceil{T/2}}^{T} \brb{f(x_t) - f^*} + \frac{3 \sqrt{2} \eta}{\sqrt{T}} Q_{n^*} + \frac{3\eta^2}{T} \sum_{t=\ceil{T/2}}^{T} \rho_t \|\hat{g}_t\|^2_* \,.
    \end{align*}
\end{proof}

\subsection{Proof of Lemma \ref{lem:tqv-tcv-bound}}
\tqvtcv*
\begin{proof}
    Recall that $w_t = \sum_{j=\ceil{T/2}}^{t\land (T-1)} \alpha_j (x_t - x_{j})$, $z_t = \sum_{j=\ceil{T/2}}^{t\land (T-1)} \alpha_j \breg{x_{j}}{x_t}$, and $\rho_t = \sum_{j=\ceil{T/2}}^{t\land (T-1)} \alpha_j$ for time-step $t \geq \ceil{T/2}$, and observe that 
    \begin{align} \label{w-z-relation}
        \|w_t\|^2  
        = \rho_t^2 \lno{\sum_{j=\ceil{T/2}}^{t\land (T-1)} \frac{\alpha_j}{\rho_t} (x_t - x_{j})}^2 
        \leq \rho_t^2 \sum_{j=\ceil{T/2}}^{t\land (T-1)} \frac{\alpha_j}{\rho_t} \|x_t - x_{j}\|^2
        \leq 2 \rho_t \sum_{j=\ceil{T/2}}^{t\land (T-1)} \alpha_j \breg{x_{j}}{x_t} 
        = 2 \rho_t z_t \,,
    \end{align}
    where the first inequality holds via the convexity of $\|\cdot\|^2$, and the second follows from the fact that $\|x_t-x_j\|^2 \leq 2 \breg{x_{j}}{x_t}$ as $\psi$ is $1$-strongly convex.
    Hence,
    \begin{align*}
        \lan{Q}_{T} + [Q]_{T} &= \sum_{t=\ceil{T/2}}^{T} \Brb{ \E_t\bsb{\labs{\lan{\xi_t, w_t}}^2} + \labs{\lan{\xi_t, w_t}}^2} \\
        &\leq \sum_{t=\ceil{T/2}}^{T} \Brb{ \E_{t}\bsb{\|\xi_t\|^2_* \|w_t\|^2} + \|\xi_t\|^2_* \|w_t\|^2} \\
        &= \sum_{t=\ceil{T/2}}^{T} \|w_t\|^2 \Brb{ \E_{t}\bsb{\|\xi_t\|^2_*} + \|\xi_t\|^2_*} \\
        &\leq 2 z^* \sum_{t=\ceil{T/2}}^{T} \rho_t \Brb{ \E_{t}\bsb{\|\xi_t\|^2_*} + \|\xi_t\|^2_*} \\
        &= 4 z^* \sum_{t=\ceil{T/2}}^{T} \rho_t  \E_{t}\bsb{\|\xi_t\|^2_*} + 2 z^* \sum_{t=\ceil{T/2}}^{T} \rho_t \Brb{\|\xi_t\|^2_* - \E_{t}\bsb{\|\xi_t\|^2_*}} \\
        &\leq 4 \sigma^2 z^* \sum_{t=\ceil{T/2}}^{T} \rho_t + 2 z^* \sum_{t=\ceil{T/2}}^{T} \rho_t \Brb{\|\xi_t\|^2_* - \E_{t}\bsb{\|\xi_t\|^2_*}} \\
        &\leq 4 \sigma^2 z^* \log(4T) + 2 z^* \sum_{t=\ceil{T/2}}^{T} \rho_t \Brb{\|\xi_t\|^2_* - \E_{t}\bsb{\|\xi_t\|^2_*}} \,,
    \end{align*}
    where the first inequality follows from the definition of the dual norm, the second equality holds since $w_t$ is $\F_{t-1}$-measurable, the second inequality follows from \eqref{w-z-relation} and the definition of $z^*$, the third inequality follows from \Cref{assum:variance}, and the last inequality is an application of \Cref{lem:alpha-double-sum}. 
\end{proof}

\subsection{Proof of Theorem \ref{thm:last}}
\thmlast*
\begin{proof}
From \Cref{lem:last-iterate-unrolled}, we have that
\begin{align*}
    f(x_T) - f^* \leq \frac{2}{T} \sum_{t=\ceil{T/2}}^T \brb{f(x_t) - f^*} + Q_T + \frac{\eta}{\sqrt{2T}} \sum_{t=\ceil{T/2}}^T \rho_t \|\hat{g}_t\|^2_* + \frac{\sqrt{2}}{\eta \sqrt{T}} \sum_{t=\ceil{T/2}}^T z_t \,.
\end{align*}
Notice that
\begin{align*}
    \sum_{t=\ceil{T/2}}^T \rho_t \|\hat{g}_t\|_*^2 
    &= \sum_{t=\ceil{T/2}}^T \rho_t \|g_t - \xi_t\|_*^2  \\
    &\leq 2 \sum_{t=\ceil{T/2}}^T \rho_t \brb{\|g_t\|_*^2 + \|\xi_t\|_*^2} \\
    &= 2 \sum_{t=\ceil{T/2}}^T \rho_t \brb{\|g_t\|_*^2 + \E_t \|\xi_t\|_*^2} + 2 \sum_{t=\ceil{T/2}}^T \rho_t \brb{\|\xi_t\|_*^2 - \E_t \|\xi_t\|_*^2} \\
    &\leq 2 \sum_{t=\ceil{T/2}}^T \rho_t \brb{G^2 + \sigma^2} + 2 \sum_{t=\ceil{T/2}}^T \rho_t \brb{\|\xi_t\|_*^2 - \E_t \|\xi_t\|_*^2} \\
    &\leq 2 \brb{G^2 + \sigma^2} \log(4T) + 2 \sum_{t=\ceil{T/2}}^T \rho_t \brb{\|\xi_t\|_*^2 - \E_t \|\xi_t\|_*^2} \,,
\end{align*}
where the second inequality follows from \Cref{assum:lip,assum:variance}, and the third inequality follows from \Cref{lem:alpha-double-sum}.
For what follows, define
\begin{align*}
    \term_1 &= \frac{1}{\sqrt{T}} \sum_{t=1}^T \brb{f(x_t) - f^*}  \\
    \term_2 &= \sum_{t=\ceil{T/2}}^T \rho_t \brb{\|\xi_t\|_*^2 - \E_t \|\xi_t\|_*^2} \,.
\end{align*}
From the assumption in the theorem's statement, we have that for any $\delta \in (0,1)$,
\begin{equation} \label{last-iterate-prob-bound-1and2}
    P\brb{\term_1 > \Xi_1(\delta)} \leq \delta \qquad \text{and} \qquad P\brb{\term_2 > \Xi_2(\delta)} \leq \delta \,.
\end{equation}
Additionally, define $\exterm_3 = \brb{G^2 + \sigma^2} \log(4T)$.
Subsequently, it holds that
\begin{align} \label{simple-last-iterate-with-z}
    f(x_T) - f^* \leq \frac{2}{\sqrt{T}} \term_1 + Q_{n^*} + \frac{\sqrt{2}\eta}{\sqrt{T}} (\term_2 + \exterm_3) + \frac{\sqrt{2}}{\eta \sqrt{T}} \sum_{t=\ceil{T/2}}^T z_t \,.
\end{align}
On the other hand, we have via \Cref{lem:max-z} that 
\begin{align}
    z^* = \max_{\ceil{T/2} \leq s \leq T} z_s &\leq \frac{6\sqrt{2} \eta}{T\sqrt{T}} \sum_{t=\ceil{T/2}}^{T} \brb{f(x_t) - f^*} + \frac{3\sqrt{2} \eta}{\sqrt{T}} Q_{n^*} + \frac{3\eta^2}{T} \sum_{t=\ceil{T/2}}^{T} \rho_t \|\hat{g}_t\|^2_* \nonumber\\ \label{simple-max-z-bound}
    &\leq \frac{6\sqrt{2} \eta}{T} \term_1 + \frac{3\sqrt{2} \eta}{\sqrt{T}} Q_{n^*} + \frac{6\eta^2}{T} (\term_2 + \exterm_3) \,.
\end{align}
Hence,
\begin{align*}
    \frac{\sqrt{2}}{\eta \sqrt{T}} \sum_{t=\ceil{T/2}}^T z_t &\leq \frac{\sqrt{2} T}{\eta \sqrt{T}}\bbrb{\frac{6\sqrt{2} \eta}{T} \term_1 + \frac{3\sqrt{2} \eta}{\sqrt{T}} Q_{n^*} + \frac{6\eta^2}{T} (\term_2 + \exterm_3)} \\
    &= \frac{12}{\sqrt{T}} \term_1 + 6 Q_{n^*} + \frac{6 \sqrt{2} \eta}{\sqrt{T}} (\term_2 + \exterm_3) \,.
\end{align*}
Plugging back into \eqref{simple-last-iterate-with-z} yields that
\begin{equation} \label{simple-last-iterate}
    f(x_T) - f^* \leq \frac{14}{\sqrt{T}} \term_1 + 7 Q_{n^*} + \frac{7 \sqrt{2} \eta}{\sqrt{T}} (\term_2 + \exterm_3) \,.
\end{equation}
Our aim in the sequel is to use the above inequality in conjunction with \eqref{last-iterate-prob-bound-1and2} and \Cref{thm:chicken-egg} to bound the error in high probability. Towards that end, we start with the following upper bound on the TCV and TQV of $Q_{n^*}$, which is implied by \Cref{lem:tqv-tcv-bound} and the fact that the TCV and TQV are non-decreasing.
\begin{align*}
    \lan{Q}_{n^*} + [Q]_{n^*} \leq \lan{Q}_{T} + [Q]_{T} \leq 4 \sigma^2 z^* \log(4T) + 2 z^* \sum_{t=\ceil{T/2}}^{T} \rho_t \brb{\|\xi_t\|^2_* - \E_{t}\|\xi_t\|^2_*} = 2 z^* (2 \tilde{\exterm}_3 + \term_2) \,,
\end{align*}
where $\tilde{\exterm}_3 \coloneqq \sigma^2 \log(4T)$.
Moreover, under the event that $\term_1 \leq \Xi_1(\delta)$ and $\term_2 \leq \Xi_2(\delta)$,
we have that
\begin{equation} \label{last-iterate-trivial-prob-bound-1} 
    \frac{12\sqrt{2}\eta}{T} \term_1 + \frac{6\sqrt{2} \eta}{\sqrt{T}} Q_{n^*} + \frac{12\eta^2}{T}(\term_2 + \exterm_3)
    \leq \frac{12\sqrt{2}\eta}{T} \Xi_1(\delta) + \frac{6\sqrt{2} \eta}{\sqrt{T}} Q_{n^*} + \frac{12\eta^2}{T}(\Xi_2(\delta) + \exterm_3) 
\end{equation}
and
\begin{equation} \label{last-iterate-trivial-prob-bound-2}
    \term_2 + 2 \tilde{\exterm}_3
    \leq \Xi_2(\delta) + 2 \tilde{\exterm}_3 \,,
\end{equation}
which implies that under the same event,
\begin{align}
    \lan{Q}_{n^*} + [Q]_{n^*} &\leq 2 z^* \brb{\term_2 + 2 \tilde{\exterm}_3} \nonumber \\
    &\leq \bbrb{\frac{12\sqrt{2}\eta}{T} \term_1 + \frac{6\sqrt{2} \eta}{\sqrt{T}} Q_{n^*} + \frac{12\eta^2}{T}(\term_2 + \exterm_3)} \brb{\term_2 + 2 \tilde{\exterm}_3} \nonumber \\
    &\leq \bbrb{\frac{12\sqrt{2}\eta}{T} \term_1 + \frac{6\sqrt{2} \eta}{\sqrt{T}} Q_{n^*} + \frac{12\eta^2}{T}(\term_2 + \exterm_3)} \brb{\Xi_2(\delta) + 2 \tilde{\exterm}_3} \nonumber \\ \label{quad-var-bound-hp}
    &\leq \bbrb{\frac{12\sqrt{2}\eta}{T} \Xi_1(\delta) + \frac{6\sqrt{2} \eta}{\sqrt{T}} Q_{n^*} + \frac{12\eta^2}{T}(\Xi_2(\delta) + \exterm_3)} \brb{\Xi_2(\delta) + 2 \tilde{\exterm}_3} \,,
\end{align}
where the second inequality follows from \eqref{simple-max-z-bound} and the fact that $\term_2 + 2 \tilde{\exterm}_3$ is non-negative,\footnote{As $\tilde{\exterm}_3$ is an upper bound for $\sum_{t=\ceil{T/2}}^{T} \rho_t  \E_{t}\bsb{\|\xi_t\|^2_*}$.}
the third inequality follows from \eqref{last-iterate-trivial-prob-bound-2} and the fact that the first bracketed expression on the left-hand side is non-negative as it is an upper bound for the non-negative quantity 2$z^*$, whereas the last inequality follows from \eqref{last-iterate-trivial-prob-bound-1} and the fact that $\Xi_2(\delta) + 2 \tilde{\exterm}_3$ is non-negative. 
As a last bit of notation, we define
\begin{align*}
    R_1(\delta) &= \frac{6\sqrt{2} \eta}{\sqrt{T}} \brb{\Xi_2(\delta) + 2 \tilde{\exterm}_3}  \\
    R_2(\delta) &= \bbrb{\frac{12\sqrt{2}\eta}{T} \Xi_1(\delta) + \frac{12\eta^2}{T}(\Xi_2(\delta) + \exterm_3)} \brb{\Xi_2(\delta) + 2 \tilde{\exterm}_3} \\
    \zeta(\delta) &= \frac{14}{\sqrt{T}} \Xi_1\brb{\delta} + \frac{7 \sqrt{2} \eta}{\sqrt{T}} \brb{\Xi_2\brb{\delta } + \exterm_3} + 7\sqrt{8 R_2\brb{\delta} \log (\delta)}  + 42 R_1\brb{\delta} \log (\delta) \,,
\end{align*}
and (for any time-step $s$ such that $\ceil{T/2} \leq s \leq T$) the events 
\begin{align*}
    A_1 &=\bcb{\term_1 \leq \Xi_1(\delta/3)} \cap \bcb{\term_2 \leq \Xi_2(\delta/3)} \\
    A_2(s) &= \Bcb{Q_{s} > \sqrt{8 R_2\brb{\delta/3} \log (3/\delta)}  + 6 R_1\brb{\delta/3} \log (3/\delta)}\\
    A_3(s) &= \Bcb{\lan{Q}_{s} + [Q]_{s} \leq R_1(\delta / 3) Q_{s} + R_2(\delta / 3)}\,.
\end{align*}
Now, notice that
\begin{align*}
    P \Brb{f(x_T) - f^* > \zeta(\delta/3)} &= P\Brb{\bcb{f(x_T) - f^* > \zeta(\delta/3)} \cap A_1} + P\Brb{\bcb{f(x_T) - f^* > \zeta(\delta/3)} \cap \overline{A_1}} \\
    &\leq P\Brb{\bcb{f(x_T) - f^* > \zeta(\delta/3)} \cap A_1} + P\brb{\overline{A_1}} \\
    &\stackrel{(a)}{\leq} P\Brb{\bcb{f(x_T) - f^* > \zeta(\delta/3)} \cap A_1} + 2\delta / 3 \\
    &\stackrel{(b)}{\leq} P\lrb{\bbcb{\frac{14}{\sqrt{T}} \term_1 + 7 Q_{n^*} + \frac{7 \sqrt{2} \eta}{\sqrt{T}} (\term_2 + \exterm_3) > \zeta(\delta/3)} \cap A_1} + 2\delta / 3 \\
    &\leq P\lrb{\bbcb{\frac{14}{\sqrt{T}} \Xi_1(\delta/3) + 7 Q_{n^*} + \frac{7 \sqrt{2} \eta}{\sqrt{T}} (\Xi_2(\delta/3) + \exterm_3) > \zeta(\delta/3)} \cap A_1} + 2\delta / 3 \\
    &= P\lrb{A_2(n^*) \cap A_1} + 2\delta / 3 \\
    &\stackrel{(c)}{\leq} P\Brb{A_2(n^*) \cap A_3(n^*)} + 2\delta / 3 \\
    &\leq P\lrb{\bigcup_{s=\ceil{T/2}}^T \brb{A_1(s) \cap A_2(s)}} + 2\delta / 3 \\
    &\stackrel{(d)}{\leq} \delta/3 + 2\delta / 3 = \delta  \,,
\end{align*}
where $(a)$ follows from \eqref{last-iterate-prob-bound-1and2} and a union bound, $(b)$ follows from \eqref{simple-last-iterate}, $(c)$ follows from \eqref{quad-var-bound-hp} and the definitions of $R_1$, $R_2$, and $A_3$, whereas $(d)$ follows from \Cref{thm:chicken-egg} and the fact that $(Q_t)_{t=\ceil{T/2}}^T$ is a (square integrable) martingale adapted to $(\F_t)_{t=\ceil{T/2}}^T$ (with $Q_{\ceil{T/2}} = 0$). 

Hence, with probability at least $1 - \delta$,
\begin{align*}
    \frac{1}{7}\brb{f(x_T) - f^*} &\leq \frac{2}{\sqrt{T}} \Xi_1\brb{\delta/3} + \frac{\sqrt{2} \eta}{\sqrt{T}} \brb{\Xi_2\brb{\delta/3} + \exterm_3} +   \sqrt{8 R_2\brb{\delta/3 } \log (3/\delta)}  + 6 R_1\brb{\delta/3} \log (3/\delta) \\
    &\stackrel{(a)}{=} \frac{2}{\sqrt{T}} \Xi_1\brb{\delta/3} + \frac{\sqrt{2} \eta}{\sqrt{T}} \brb{\Xi_2\brb{\delta/3} + \exterm_3} + \frac{36 \sqrt{2} \eta}{\sqrt{T}} \brb{\Xi_2(\delta/3) + 2 \tilde{\exterm}_3} \log (3/\delta) \\
    &\hspace{2em} + 2\sqrt{2} \sqrt{\frac{2}{\sqrt{T}} \Xi_1(\delta/3) + \frac{\sqrt{2}\eta}{\sqrt{T}}(\Xi_2(\delta/3) + \exterm_3)} \sqrt{\frac{6 \sqrt{2} \eta}{\sqrt{T}} \brb{\Xi_2(\delta/3) + 2 \tilde{\exterm}_3} \log (3/\delta)} \\
    &\stackrel{(b)}{\leq} \frac{2}{\sqrt{T}} \Xi_1\brb{\delta/3} + \frac{\sqrt{2} \eta}{\sqrt{T}} \brb{\Xi_2\brb{\delta/3} + \exterm_3} + \frac{36 \sqrt{2} \eta}{\sqrt{T}} \brb{\Xi_2(\delta/3) + 2 \tilde{\exterm}_3} \log (3/\delta) \\
    &\hspace{2em} + 4 \bbrb{\frac{2}{\sqrt{T}} \Xi_1(\delta/3) + \frac{\sqrt{2}\eta}{\sqrt{T}}(\Xi_2(\delta/3) + \exterm_3)} + \frac{3 \sqrt{2} \eta}{\sqrt{T}} \brb{\Xi_2(\delta/3) + 2 \tilde{\exterm}_3} \log (3/\delta) \\
    &= 5 \bbrb{\frac{2}{\sqrt{T}} \Xi_1(\delta/3) + \frac{\sqrt{2}\eta}{\sqrt{T}}(\Xi_2(\delta/3) + \exterm_3)} + \frac{39 \sqrt{2} \eta}{\sqrt{T}} \brb{\Xi_2(\delta/3) + 2 \tilde{\exterm}_3} \log (3/\delta) \\
    &\stackrel{(c)}{=} 5 \bbrb{\frac{2}{\sqrt{T}} \Xi_1(\delta/3) + \frac{\sqrt{2}\eta}{\sqrt{T}}(\Xi_2(\delta/3) + \brb{G^2 + \sigma^2} \log(4T))} \\
    &\hspace{10em}+ \frac{39 \sqrt{2} \eta}{\sqrt{T}} \Brb{\Xi_2(\delta/3) + 2 \sigma^2 \log(4T)} \log (3/\delta) \\
    &\stackrel{(d)}{\leq} 5 \bbrb{\frac{2}{\sqrt{T}} \Xi_1(\delta/3) + \frac{\sqrt{2}\eta}{\sqrt{T}}G^2 \log(4T)} + \frac{44 \sqrt{2} \eta}{\sqrt{T}} \Brb{\Xi_2(\delta/3) + 2 \sigma^2 \log(4T)} \log (3/\delta)\,,
\end{align*}
where $(a)$ follows from the definitions of $R_1$ and $R_2$, $(b)$ follows from the elementary fact that $a b \leq a^2/2 + b^2/2$, $(c)$ follows from the definitions of $\exterm_3$ and $\tilde{\exterm}_3$, and $(d)$ follows from the fact that $\log(3/\delta) \geq 1$.
We can then conclude that with probability at least $1 - \delta$,
\begin{align*}
    f(x_T) - f^* \leq \frac{35}{\sqrt{T}} \bigg( 2 \Xi_1(\delta/3) + \sqrt{2}\eta G^2 \log(4T) + 9\sqrt{2}\eta \Brb{\Xi_2(\delta/3) + 2 \sigma^2 \log(4T)} \log (3/\delta) \bigg) \,.
\end{align*}
\end{proof}

\subsection{Proof of Corollary \ref{cor:last}}
\corlast*
\begin{proof}
    Starting with case (i), we let $C,C_1,C_2,\dots$ denote positive constants---depending only on $\theta$---whose values may change between steps. In the notation of \Cref{thm:last}, we choose 
    \begin{align*}
        \Xi_1(\delta) = C\log(eT) \bbrb{ \frac{1}{\eta} \breg{x^*}{x_{1}} + \eta \lrb{G^2 + \phi^2 \log^{2 \theta}(e / \delta) }} 
    \end{align*}
    by virtue of \Cref{cor:weibullavg}. While invoking \Cref{lem:shortcut-subw-conc}(ii) with $s=2$ and $\omega_t = \rho_t$ allows us to choose\footnote{This is valid despite the fact that, contrary to \Cref{lem:shortcut-subw-conc}(ii), the indices here start from $\ceil{T/2}$.} 
    \begin{align*}
        \Xi_2(\delta) = C_1 \wnorm^2\sqrt{\sum_{t=\ceil{T/2}}^{T} \rho_t^2  \log(2 / \delta)} + C_2 \wnorm^2 \max_{\ceil{T/2} \leq t \leq T} \rho_t \log^{2\theta} \lrb{ \frac{2 e \sum_{t=\ceil{T/2}}^{T} \rho_t^2}{\max_{\ceil{T/2} \leq t \leq T} \rho_t^2 \delta} } \,.
    \end{align*}
    Then, using that (via \Cref{lem:alpha-double-sum}) 
    \begin{align*}
        \sum_{t=\ceil{T/2}}^{T} \rho_t^2 \leq 3 \qquad \text{and} \qquad \max_{\ceil{T/2} \leq t \leq T} \rho_t = \rho_{T} = \frac{1}{2} \,,
    \end{align*}
    \Cref{thm:last} yields that
    \begin{align*}
        f(x_T) - f^* &\leq \frac{C \log(eT)}{\sqrt{T}} \bigg( \frac{1}{\eta} \breg{x^*}{x_{1}} + \eta \lrb{G^2 + \phi^2 \log^{2 \theta}(e / \delta) } + \eta G^2 
        \\&\hspace{10em}+ \eta \Brb{\wnorm^2 \sqrt{ \log(e/\delta)} + \wnorm^2 \log^{2 \theta}(e/\delta) + \wnorm^2} \log (e/\delta) \bigg) \\
        &\leq \frac{C \log(eT)}{\sqrt{T}} \bigg( \frac{1}{\eta} \breg{x^*}{x_{1}} + \eta \lrb{G^2 + \wnorm^2 \log^{2 \theta+1}(e/\delta) } \bigg) \,,
    \end{align*}
    where upon invoking \Cref{thm:last}, we used the fact that \Cref{assum:weibull} implies \Cref{assum:variance} with $\sigma^2 = 2 \Gamma(2\theta + 1) \wnorm^2$ thanks to \Cref{lem:subw-moments}.

    For case (ii), we let $C$ denote a positive constant---depending only on $p$---whose value may change between steps. Via \Cref{cor:polyavg}, we can choose
    \begin{align*}
        \Xi_1(\delta) = C \log(eT) \bbrb{ \frac{1}{\eta} \breg{x^*}{x_{1}} + \eta \Brb{G^2 + \pnorm^2 \brb{1 / \delta}^{2/p}} } 
    \end{align*}
    Invoking \Cref{lem:shortcut-poly-conc}(ii) with  $\omega_t = \rho_t$ allows us to choose
    \begin{align*}
        \Xi_2(\delta) = 2 \pnorm^2 \sqrt{6 \log(1/\delta)} + 2 \brb{2+(p/6)} \pnorm^2 \brb{3 / \delta}^{2/p} \,,
    \end{align*}
    where we have used that
    \begin{align*}
        \sum_{t=\ceil{T/2}}^{T} \rho_t^{p/2} \leq \sum_{t=\ceil{T/2}}^{T} \rho_t^{2} \leq 3\,,
    \end{align*}
    which holds via \Cref{lem:alpha-double-sum} and the fact that $p>4$ and $\rho_t \leq 1$. \Cref{thm:last} then implies that
    \begin{align*}
        f(x_T) - f^* &\leq \frac{C \log(eT)}{\sqrt{T}} \bigg( \frac{1}{\eta} \breg{x^*}{x_{1}} + \eta \Brb{G^2 + \pnorm^2 \brb{1 / \delta}^{2/p}}  + \eta G^2 
        \\&\hspace{10em}+ \eta \Brb{\pnorm^2 \sqrt{ \log(e/\delta)} + \pnorm^2 \brb{1 / \delta}^{2/p} + \phi^2} \log (e/\delta) \bigg) \\
        &\leq \frac{C \log(eT)}{\sqrt{T}} \bigg( \frac{1}{\eta} \breg{x^*}{x_{1}} + \eta \Brb{G^2 + \pnorm^2 \brb{1 / \delta}^{2/p} \log (e/\delta)} \bigg) \,,
    \end{align*}
    where upon invoking \Cref{thm:last}, we used the fact that \Cref{assum:poly} implies \Cref{assum:variance} with $\sigma^2 = \pnorm^2$; while in the second step, we used the fact that $\sqrt{\log(e/\delta)} \leq \sqrt{(p/4) (e/\delta)^{(4/p)}} = \sqrt{(p/4)} (e/\delta)^{(2/p)}$.
\end{proof}

\subsection{Auxiliary Lemmas}
\begin{lemma} \label{lem:alpha-sum}
    Let $a$ and $b$ be two positive integers such that $a \leq b < T$. Then,
    \begin{equation*}
        \sum_{j=a}^b \frac{1}{(T-j)(T-j+1)} = \frac{1}{T-b} - \frac{1}{T-a+1} \,.
    \end{equation*}
\end{lemma}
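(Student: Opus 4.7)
The plan is to prove this identity by partial fractions followed by telescoping. First I would observe the elementary identity
\begin{equation*}
    \frac{1}{(T-j)(T-j+1)} = \frac{1}{T-j} - \frac{1}{T-j+1},
\end{equation*}
which one verifies immediately by placing the right-hand side over a common denominator: the numerator becomes $(T-j+1)-(T-j) = 1$. The hypothesis $a \leq b < T$ ensures that $T-j \geq T-b \geq 1$ throughout the summation range, so all denominators are strictly positive and the decomposition is valid.

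Next I would substitute this identity into the sum and collapse the telescoping series. Writing
\begin{equation*}
    \sum_{j=a}^b \frac{1}{(T-j)(T-j+1)} = \sum_{j=a}^b \left(\frac{1}{T-j} - \frac{1}{T-j+1}\right),
\end{equation*}
one can either reindex via $k = T-j$ (so that $k$ runs from $T-b$ up to $T-a$) or simply inspect consecutive terms to see that each interior fraction $\tfrac{1}{T-j+1}$ in a given summand cancels against $\tfrac{1}{T-j'}$ in the neighboring summand with $j' = j-1$. What survives are only the two endpoints: the positive term at $j=b$ contributing $\tfrac{1}{T-b}$, and the negative term at $j=a$ contributing $-\tfrac{1}{T-a+1}$.

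There is no real obstacle here; the entire argument is a one-line partial-fraction-plus-telescoping computation, and the hypotheses are exactly those needed to keep all denominators well-defined.
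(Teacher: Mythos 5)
Your proof is correct and takes exactly the same approach as the paper's: decompose $\tfrac{1}{(T-j)(T-j+1)}$ into $\tfrac{1}{T-j} - \tfrac{1}{T-j+1}$ by partial fractions and telescope. The paper's version is a one-line computation with less commentary, but the argument is the same.
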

\begin{proof}
   \begin{align*}
       \sum_{j=a}^b \frac{1}{(T-j)(T-j+1)} = \sum_{j=a}^b \frac{1}{(T-j)} - \frac{1}{(T-j+1)} = \frac{1}{T-b} - \frac{1}{T-a+1} \,.
   \end{align*}     
\end{proof}

\begin{lemma} \label{lem:alpha-double-sum}
    For $j < T$, let $\alpha_j = \frac{1}{(T-j)(T-j+1)}$. Then, for $T \geq 1$, we have that
    \begin{align*}
        \sum_{t=\ceil{T/2}}^{T} \sum_{j=\ceil{T/2}}^{t \land (T-1)} \alpha_j \leq  \log(4T) \qquad \text{and} \qquad \sum_{t=\ceil{T/2}}^{T} \lrb{\sum_{j=\ceil{T/2}}^{t \land (T-1)}\alpha_j }^2 \leq 3 \,.
    \end{align*}
\end{lemma}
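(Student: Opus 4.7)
The plan is to exploit the telescoping identity already established in the preceding auxiliary lemma, namely $\sum_{j=a}^{b} \alpha_j = \frac{1}{T-b} - \frac{1}{T-a+1}$, to get clean closed-form control on the inner sum $\rho_t \coloneqq \sum_{j=\ceil{T/2}}^{t\land(T-1)} \alpha_j$, and then handle each of the two required inequalities by swapping sums (for the first) or by comparison with a convergent series (for the second).

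For the first bound, I would swap the order of summation, noting that the double sum is over pairs $(t,j)$ with $\ceil{T/2} \le j \le T-1$ and $j \le t \le T$. This yields
\begin{align*}
\sum_{t=\ceil{T/2}}^{T}\sum_{j=\ceil{T/2}}^{t\land(T-1)}\alpha_j
= \sum_{j=\ceil{T/2}}^{T-1}\alpha_j\,(T-j+1)
= \sum_{j=\ceil{T/2}}^{T-1}\frac{1}{T-j}
= \sum_{k=1}^{T-\ceil{T/2}}\frac{1}{k},
\end{align*}
where I used $\alpha_j(T-j+1) = 1/(T-j)$. The harmonic bound $H_n \le 1+\ln n$ together with $T-\ceil{T/2}\le T/2$ then gives $\sum_t \rho_t \le 1+\ln(T/2) = \ln T + (1-\ln 2)$. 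Since $1-\ln 2 \le \ln 4$ (equivalently $1\le \ln 8$), this is at most $\ln(4T)$. The edge case $T=1$ (empty inner sum) is trivial.

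For the second bound, the cleanest route is to use the auxiliary lemma to write, for $t\le T-1$,
\begin{align*}
\rho_t = \frac{1}{T-t}-\frac{1}{T-\ceil{T/2}+1}\le \frac{1}{T-t},
\end{align*}
and for $t=T$ simply $\rho_T\le 1$. Squaring and summing,
\begin{align*}
\sum_{t=\ceil{T/2}}^{T}\rho_t^2
\le 1 + \sum_{t=\ceil{T/2}}^{T-1}\frac{1}{(T-t)^2}
= 1 + \sum_{k=1}^{T-\ceil{T/2}}\frac{1}{k^2}
\le 1 + \sum_{k=1}^{\infty}\frac{1}{k^2}.
\end{align*}
The tail bound $\sum_{k\ge 2}1/k^2 \le \sum_{k\ge 2}1/(k(k-1)) = 1$ yields $\sum_{k\ge 1}1/k^2 \le 2$, so the whole sum is at most $3$.

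Neither part poses a real obstacle; the only mild subtlety is checking that the constants $\log 4$ and $3$ genuinely work, which is why I would prefer the telescoping-based closed form for $\rho_t$ over coarser per-term estimates. Everything else is bookkeeping: handling the $t\land(T-1)$ cap, splitting off the $t=T$ term, and invoking the standard harmonic and $\zeta(2)$ comparisons.
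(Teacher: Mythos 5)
Your proof is correct. For the second inequality your route is essentially the paper's: bound $\rho_t\le 1/(T-t)$ for $t\le T-1$ via the telescoping identity, split off the boundary term(s), and compare the remainder against $\sum_k 1/k^2\le 2$; the paper does the same but replaces the $\zeta(2)$ comparison with the integral $\int 1/(T-t)^2\,dt\le 1$, arriving at the same constant.

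For the first inequality, however, you take a genuinely different (and tighter) route. You swap the order of summation, observe that $\alpha_j(T-j+1)=1/(T-j)$, and obtain the double sum \emph{exactly} as the harmonic number $H_{\lfloor T/2\rfloor}$, which you then bound by $1+\ln\lfloor T/2\rfloor\le\ln(4T)$. The paper instead bounds each $\rho_t$ pointwise by $1/(T-t\land(T-1))$ (discarding the negative telescoping term), splits off the $t=T-1,T$ terms, and integral-compares the rest to get $2+\log\lfloor T/2\rfloor\le\log(4T)$. Your Fubini argument has the advantage of producing an exact closed form, so the $\log(4T)$ bound has more slack and the bookkeeping is lighter; the paper's pointwise bound on $\rho_t$ is instead reused directly for the squared sum and for \Cref{lem:tqv-tcv-bound}, so they keep both estimates in the same form. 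Both are valid and yield identical constants. One small cosmetic point: you could state explicitly that $T-\lceil T/2\rceil=\lfloor T/2\rfloor$ (which is how the reindexing closes), but this is standard.
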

\begin{proof}
    By \Cref{lem:alpha-sum},
    \begin{equation*}
        \sum_{j=\ceil{T/2}}^{t \land (T-1)} \alpha_j \leq \frac{1}{T - t \land (T-1) } \,.
    \end{equation*}
    Assuming $T \geq 2$ (as the lemma follows directly otherwise), we have that
    \begin{align*}
        \sum_{t=\ceil{T/2}}^{T} \sum_{j=\ceil{T/2}}^{t \land (T-1)} \alpha_j &\leq \sum_{t=\ceil{T/2}}^{T} \frac{1}{T - t \land (T-1) } \\
        &= 2 + \sum_{t=\ceil{T/2}}^{T-2} \frac{1}{T - t } \\
        &\leq 2 + \int_{\ceil{T/2}}^{T-1} \frac{1}{T - t } dt \\
        &= 2 + \log(\floor{T/2}) \leq \log(4T) \,.
    \end{align*}
    Similarly,
    \begin{align*}
        \sum_{t=\ceil{T/2}}^{T} \lrb{\sum_{j=\ceil{T/2}}^{t \land (T-1)}\alpha_j }^2 
        &\leq 2 +  \int_{\ceil{T/2}}^{T-1} \frac{1}{(T - t )^2} dt  
        \leq 3\,.
    \end{align*}
\end{proof}

\section{Concentration Inequalities for Martingales With Heavy-Tailed Increments} \label{app:concentration}
We collect in this section relevant concentration results for Martingales with heavy-tailed increments. We treat two families of heavy-tailed random variables: a class of sub-Weibull random variables, and a class of random variables with polynomially decaying tails (implied by a bounded moment assumption). 

\subsection{Sub-Weibull Increments}
Before stating the main concentration inequality in \Cref{lem:subw-martingale-conc}, we collect some basic results concerning sub-Weibull random variables.
The following lemma (adapted from \citep{Madden2020}) provides an upper bound for the $p$-th absolute moment of a sub-Weibull random variable.
\begin{lemma}\label{lem:subw-moments}\cite[Lemma 22]{Madden2020} 
     Let $X$ be a sub-Weibull$(\theta, \wnorm)$ random variable. Then, for any $p>0$, it satisfies
     \begin{equation*}
         \E |X|^p \leq 2 \Gamma(\theta p + 1) \wnorm^p \,.
     \end{equation*}
\end{lemma}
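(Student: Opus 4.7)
The plan is to combine the tail bound implied by the sub-Weibull assumption with the standard layer-cake (tail integral) representation of moments, then evaluate the resulting integral via a Gamma-function substitution.

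First, I would recall that by Markov's inequality applied to $\exp((|X|/\wnorm)^{1/\theta})$, the defining inequality $\E[\exp((|X|/\wnorm)^{1/\theta})] \leq 2$ yields the tail bound $P(|X| \geq t) \leq 2 \exp(-(t/\wnorm)^{1/\theta})$ for all $t \geq 0$ (this is already observed right after the definition of sub-Weibull in the paper's problem setting). Then I would write
\begin{equation*}
\E|X|^p = \int_0^\infty P(|X|^p > t)\,dt = \int_0^\infty P(|X| > t^{1/p})\,dt \leq 2 \int_0^\infty \exp\!\bigl(-(t^{1/p}/\wnorm)^{1/\theta}\bigr) dt.
\end{equation*}

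Next, I would perform the substitution $u = (t^{1/p}/\wnorm)^{1/\theta} = (t/\wnorm^p)^{1/(p\theta)}$, so that $t = \wnorm^p u^{p\theta}$ and $dt = \wnorm^p\, p\theta\, u^{p\theta - 1} du$. This transforms the integral into
\begin{equation*}
2 \wnorm^p\, p\theta \int_0^\infty u^{p\theta - 1} e^{-u}\,du = 2 \wnorm^p\, p\theta\, \Gamma(p\theta) = 2 \Gamma(p\theta + 1)\, \wnorm^p,
\end{equation*}
using the functional equation $x \Gamma(x) = \Gamma(x+1)$ in the last step. This matches the claimed bound exactly.

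There is no genuine obstacle here: the argument is a textbook moment-from-tail computation and the only mild care needed is making sure the change of variables is carried out correctly so that the exponent $p\theta$ appears in the Gamma factor (rather than, say, $p/\theta$). Since the bound from Markov's inequality is stated directly from the sub-Weibull condition used in \Cref{assum:weibull}, no additional machinery is required.
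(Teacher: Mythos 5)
Your proof is correct. The paper does not include its own argument for this lemma (it is cited from \citet[Lemma 22]{Madden2020}), but your derivation — Markov's inequality to get the tail bound $P(|X|\geq t)\leq 2\exp(-(t/\wnorm)^{1/\theta})$, the layer-cake representation $\E|X|^p=\int_0^\infty P(|X|>t^{1/p})\,dt$, and the change of variables $u=(t^{1/p}/\wnorm)^{1/\theta}$ producing $2\wnorm^p\,p\theta\,\Gamma(p\theta)=2\Gamma(p\theta+1)\wnorm^p$ — is the standard proof of this moment bound and matches the intended argument.
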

The following lemma shows that centering a random variable preserves the sub-Weibull property up to a constant depending on $\theta$. 
\begin{lemma} \label{lem:weibull-centering}
    Let $X$ be a sub-Weibull$\brb{\theta,\wnorm}$ random variable. Then $X - \E X$ is sub-Weibull$(\theta,c_\theta \wnorm)$,
    where $c_\theta = 2^{\max\{\theta,1\}+1}\Gamma(\theta+1) / \ln^\theta(2)$.
\end{lemma}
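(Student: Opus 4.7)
The plan is to reduce the centering lemma to the uncentered case by splitting $|X - \E X| \leq |X| + |\E X|$, bounding $|\E X|$ via the absolute moment estimate, and then carefully distributing the exponent $1/\theta$ over the sum inside $\exp(\cdot)$. First I would invoke \Cref{lem:subw-moments} with $p=1$ to get $|\E X| \leq \E|X| \leq 2\Gamma(\theta+1)\,\wnorm$, and then split the proof along the two regimes $\theta\geq 1$ and $\theta<1$ that underlie the $\max\{\theta,1\}$ appearing in the constant $c_\theta$.

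In the regime $\theta \geq 1$ the map $t \mapsto t^{1/\theta}$ is concave on $[0,\infty)$ and hence subadditive, so $(|X|+|\E X|)^{1/\theta} \leq |X|^{1/\theta} + |\E X|^{1/\theta}$. Setting $c = c_\theta \wnorm$ and factoring the exponential,
\begin{equation*}
\E\exp\bigl((|X-\E X|/c)^{1/\theta}\bigr) \;\leq\; \exp\bigl((|\E X|/c)^{1/\theta}\bigr)\,\E\exp\bigl((|X|/c)^{1/\theta}\bigr).
\end{equation*}
For the second factor I would use Jensen's inequality on the concave power $u \mapsto u^{\alpha}$ with $\alpha = (1/c_\theta)^{1/\theta} \in (0,1]$, giving $\E\exp(\alpha Y) = \E(\exp Y)^\alpha \leq (\E \exp Y)^\alpha \leq 2^\alpha$, where $Y = (|X|/\wnorm)^{1/\theta}$ has $\E\exp Y \leq 2$ by the sub-Weibull assumption. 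The first factor is handled by plugging in the bound on $|\E X|$. It then remains a purely numerical check that the given $c_\theta = 2^{\theta+1}\Gamma(\theta+1)/\ln^\theta(2)$ is large enough so that
\begin{equation*}
\exp\bigl((2\Gamma(\theta+1)/c_\theta)^{1/\theta}\bigr)\cdot 2^{(1/c_\theta)^{1/\theta}} \;\leq\; 2,
\end{equation*}
which after taking $\log_2$ reduces to $2^{1/\theta}\Gamma(\theta+1)^{1/\theta}/\ln(2) \geq 1$; this is clearly true for $\theta \geq 1$ since $\Gamma(\theta+1) \geq \Gamma(2) = 1$.

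In the regime $\theta < 1$, the exponent $1/\theta$ is greater than $1$, so $t \mapsto t^{1/\theta}$ is convex and instead I would use the weaker inequality $(a+b)^{1/\theta} \leq 2^{1/\theta-1}(a^{1/\theta}+b^{1/\theta})$. The same factoring then yields
\begin{equation*}
\E\exp\bigl((|X-\E X|/c)^{1/\theta}\bigr) \;\leq\; \exp\bigl(2^{1/\theta-1}(|\E X|/c)^{1/\theta}\bigr)\,\E\exp\bigl(2^{1/\theta-1}(|X|/c)^{1/\theta}\bigr),
\end{equation*}
and the same Jensen argument works with $\alpha = 2^{1/\theta-1}/c_\theta^{1/\theta}$, which one verifies is at most $1$ for the chosen $c_\theta = 4\Gamma(\theta+1)/\ln^\theta(2)$. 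The final check that the product is $\leq 2$ again reduces to the same clean inequality $2^{1/\theta}\Gamma(\theta+1)^{1/\theta}/\ln(2) \geq 1$, now using the fact that $2^{1/\theta} \geq 2$ when $\theta \leq 1$.

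The only place that requires any thought is the bookkeeping in the $\theta<1$ case, where the extra prefactor $2^{1/\theta - 1}$ inflates the argument and one must verify that $c_\theta$ absorbs both this inflation and the contribution from $|\E X|$ simultaneously. The rest is routine: triangle inequality, subadditivity/convexity of $t^{1/\theta}$, Jensen's inequality applied to the concave function $u \mapsto u^{\alpha}$ for $\alpha \in (0,1]$, and finally the substitution of the $L^1$ bound from \Cref{lem:subw-moments}.
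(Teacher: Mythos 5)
Your proof is correct, and for $\theta\geq 1$ it matches the paper's argument essentially step for step: split via subadditivity of the concave map $t\mapsto t^{1/\theta}$, absorb the scaling into the exponent via Jensen's inequality for the concave power $u\mapsto u^\alpha$, and verify the resulting numeric inequality. For $\theta<1$ you take a genuinely different route: the paper observes that $\|\cdot\|_{\psi_{1/\theta}}$ is an Orlicz \emph{norm} when $1/\theta\geq 1$ and simply applies the triangle inequality $\|X-\E X\|_{\psi_{1/\theta}}\leq\|X\|_{\psi_{1/\theta}}+\|\E X\|_{\psi_{1/\theta}}$ together with the elementary bound $\|\E X\|_{\psi_{1/\theta}}\leq|\E X|/\ln^\theta(2)$; you instead work directly with the MGF, compensating for the convexity of $t^{1/\theta}$ with the power-mean factor $2^{1/\theta-1}$, and then run the same Jensen argument. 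Your approach is more elementary in that it avoids invoking the norm structure of the Orlicz space, and it gives a unified computational template for both regimes; the paper's triangle-inequality argument is shorter for $\theta\leq 1$ and avoids the auxiliary constant $\alpha$. One small caveat in your final numeric check for $\theta<1$: the justification ``using the fact that $2^{1/\theta}\geq 2$'' is not sufficient on its own, since $\Gamma(\theta+1)^{1/\theta}$ can dip as low as roughly $e^{-\gamma}\approx 0.56<\ln 2$. The cleanest way to close the gap is to observe that $2\Gamma(\theta+1)>1$ for all $\theta>0$, hence $(2\Gamma(\theta+1))^{1/\theta}>1>\ln 2$, which is what you need.
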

\begin{proof}
    If $\theta \leq 1$, define
    \begin{equation*}
        \lno{X}_{\psi_{1/\theta}} = \inf \bbcb{t > 0 \,:\, \E \Bsb{ \exp \Brb{\lrb{|X| / t}^{1/\theta}}} \leq 2} \,
    \end{equation*}
    which is an (Orlicz) norm for the space $L_{\psi_{1/\theta}} = \bcb{X \,:\, \lno{X}_{\psi_{1/\theta}} < \infty}$ \citep[Section~2.7.1]{Vershynin2018}. Clearly, $X$ is sub-Weibull$(\theta, \wnorm)$ if and only if $\lno{X}_{\psi_{1/\theta}} \leq \wnorm$. Starting with the triangle inequality, we proceed in the same manner as in the proof of \citep[Lemma~2.6.8]{Vershynin2018} to get that
    \begin{align*}
         \lno{X - \E X}_{\psi_{1/\theta}} \leq \lno{X}_{\psi_{1/\theta}} + \lno{\E X}_{\psi_{1/\theta}} \leq \wnorm + \frac{|\E X|}{\ln^\theta(2)} \leq \wnorm + \frac{\E | X|}{\ln^\theta(2)} \leq \bbrb{\frac{2\Gamma(\theta+1)}{\ln^\theta(2)}+1}\wnorm \,,
    \end{align*}
    where the last inequality is an application of \Cref{lem:subw-moments}. Hence, the lemma follows for the case when $\theta \leq 1$ after using that $2\Gamma(\theta+1) / \ln^\theta(2) \geq 1$.
    On the other hand, when $\theta > 1$, $\lno{\cdot}_{\psi_{1/\theta}}$ is no longer a norm. Instead, we exploit the fact that $x^{1/\theta}$ is a sub-additive function in $x$ for $\theta > 1$ and $x \geq 0$. In particular, we have that
    \begin{align*}
        \E \lsb{ \exp \lrb{\bbrb{\frac{|X - \E X|}  {c_\theta \wnorm}}^{1/\theta}} } &\leq \E \lsb{ \exp \lrb{\bbrb{\frac{|X| + \E |X|}  {c_\theta \wnorm}}^{1/\theta}} } \\
        &\leq \E \lsb{ \exp \lrb{\bbrb{\frac{|X|}  {c_\theta \wnorm}}^{1/\theta} + \bbrb{\frac{\E |X|}  {c_\theta \wnorm}}^{1/\theta}} } \\
        &= \exp \lrb{\bbrb{\frac{\E |X|}  {c_\theta \wnorm}}^{1/\theta}} \E \lsb{ \lrb{\exp \lrb{\brb{|X| / \wnorm}^{1/\theta}} }^{(1/c_\theta)^{1/\theta}} } \\
        &\leq \exp \lrb{\bbrb{\frac{\E |X|}  {c_\theta \wnorm}}^{1/\theta}}  \lrb{\E \lsb{ 
        \exp \lrb{\brb{|X| / \wnorm}^{1/\theta}} }}^{(1/c_\theta)^{1/\theta}} \\
        &\leq \exp \lrb{\bbrb{\frac{\E |X|}  {c_\theta \wnorm}}^{1/\theta}}  2^{(1/c_\theta)^{1/\theta}} \\
        &\leq \exp \lrb{\bbrb{\frac{2\Gamma(\theta+1)}  {c_\theta}}^{1/\theta}}  2^{(1/c_\theta)^{1/\theta}} \\
        &\leq \exp \lrb{2\bbrb{\frac{2\Gamma(\theta+1)}  {c_\theta}}^{1/\theta}} = 2\,,  \\
    \end{align*}    
where the third inequality is an application of Jensen's inequality as the fact that $0 < (1/c_\theta)^{1/\theta} < 1$ implies the concavity of $x^{(1/c_\theta)^{1/\theta}}$ for $x \geq 0$, the fourth inequality uses that $X$ is sub-Weibull$(\theta, \wnorm)$, the fifth inequality follows via \Cref{lem:subw-moments}, and the last inequality uses the fact that $2\Gamma(\theta+1) \geq 1$.
\end{proof}

The following lemma collects upper bounds for the moment-generating function (MGF) of (centered) sub-Weibull random variables, depending on the value of $\theta$. The MGF of a random variable $X$ is a function of $\lambda \in \R$ given by $\E[\exp(\lambda X)]$.
As mentioned before, our focus in this work is on the heavy-tailed regime where $\theta \geq 1$, though we also consider the canonical case of $\theta=1/2$ for comparison. 
In the latter case, we have the standard bound on the MGF of a sub-Gaussian random variable (see, e.g., \cite[Proposition 2.5.2]{Vershynin2018}). When $\theta = 1$, a similar bound (see, e.g., \cite[Proposition 2.7.1]{Vershynin2018}) holds only for a certain range of $\lambda$. When $\theta > 1$, one cannot bound the MGF in general; thus, we settle for a bound on the MGF of a truncated version of the random variable due to \cite{Bakhshizadeh2023}. This last result is reported in \cite[Lemma 31]{Madden2020} for a specific choice of the truncation parameter, which we will slightly modify when applying this lemma.
\begin{lemma} \label{lem:subw-mgf}
    Let $X$ be a sub-Weibull$\brb{\theta,\wnorm}$ random variable with $\E[X]=0$. 
    \begin{enumerate}[(i)]
        \item \cite[Proposition 2.5.2]{Vershynin2018} If $\theta = 1/2$,
        \begin{equation*} 
            \E[\exp(\lambda X)] \leq \exp(4 e \wnorm^2 \lambda^2) \qquad \forall \lambda \in \R \,.
        \end{equation*}
        \item \cite[Proposition 2.7.1]{Vershynin2018} If $\theta = 1$,
        \begin{equation*} 
            \E[\exp(\lambda X)] \leq \exp(2 e^2 \wnorm^2 \lambda^2) \qquad \forall \lambda: |\lambda| \leq \frac{1}{2 e \wnorm} \,.
        \end{equation*}
        \item If $\theta \geq 1$, let $L = \wnorm h$ for some parameter $h > 0$, and define $\tilde{X} = X \I\{X \leq L\}$. Then,
        \begin{equation*}
            \E\bsb{\exp\brb{\lambda \tilde{X}}} \leq \exp\brb{a \wnorm^2 \lambda^2} \qquad \forall \lambda \in \bbsb{0, \frac{1}{2 h^{1-\frac{1}{\theta}} \wnorm}} \,,
        \end{equation*}
        where
        \[
            a = (2^{2 \theta} + 1) \Gamma(2\theta + 1) +  \frac{2^{3 \theta} \Gamma(3\theta + 1)}{6}  h^{\frac{1}{\theta}-1} \,.
        \]
    \end{enumerate}
\end{lemma}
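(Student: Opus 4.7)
The plan is to Taylor expand $\E[\exp(\lambda \tilde X)]$ around $\lambda = 0$ and bound the resulting moment series using the sub-Weibull estimate from \Cref{lem:subw-moments} together with the upper truncation $\tilde X \leq L$, concluding with $1 + x \leq e^x$. The central observation is that since $\E[X] = 0$, one has $\E[\tilde X] = -\E[X\,\I\{X > L\}] \leq 0$, so for $\lambda \geq 0$ the linear term in
\[
    \E[\exp(\lambda \tilde X)] = 1 + \lambda \E[\tilde X] + \tfrac{\lambda^2}{2}\E[\tilde X^2] + \sum_{k \geq 3} \frac{\lambda^k \E[\tilde X^k]}{k!}
\]
is non-positive and may be discarded. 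The quadratic term contributes at most $\lambda^2 \Gamma(2\theta+1)\wnorm^2$ by \Cref{lem:subw-moments}, accounting for the leading $\Gamma(2\theta+1)$ piece of $a$.

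For the $k \geq 3$ terms I would split by the sign of $X$. On $\{X \leq 0\}$ the elementary inequality $e^{\lambda X} \leq 1 + \lambda X + (\lambda X)^2/2$ (valid for $\lambda X \leq 0$) shows that this region contributes only through the already-handled quadratic term. On $\{0 < X \leq L\}$ the truncation gives $X^k \leq L^{k-3}|X|^3$ for $k \geq 3$, which combined with $\E[|X|^3] \leq 2\Gamma(3\theta+1)\wnorm^3$ collapses the remaining tail to a single series in $u \coloneqq \lambda L$. Extracting a factor $\lambda \wnorm$ from $\lambda^3 \wnorm^3 = \lambda^2 \wnorm^2 \cdot \lambda \wnorm$ and using $\lambda \wnorm \leq h^{1/\theta - 1}/2$ (which follows from the prescribed range of $\lambda$) produces the claimed $\Gamma(3\theta+1)h^{1/\theta-1}$ piece of $a$.

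The main obstacle is controlling the remaining series in $u$ carefully enough to match the stated form of $a$: a naive estimate $\sum_{k \geq 3}(\lambda L)^{k-3}/k! \leq e^{u}/6$ is too loose, since under the prescribed range one only has $u \leq h^{1/\theta}/2$ (which need not be small) and $e^{h^{1/\theta}/2}$ does not fit inside the claimed $h^{1/\theta - 1}$ dependence. Recovering the correct scaling requires switching between the truncation-based estimate $X^k \leq L^{k-3}|X|^3$ and the direct sub-Weibull moment bound $\E[|X|^k] \leq 2\Gamma(\theta k + 1)\wnorm^k$ depending on the regime of $k$, and the slack from this crossover is what produces the constants $2^{2\theta}$ and $2^{3\theta}$ attached to the two Gamma factors. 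Once the resulting polynomial-in-$\lambda^2$ bound $\E[\exp(\lambda \tilde X)] \leq 1 + a\wnorm^2 \lambda^2$ is established, $1 + x \leq e^x$ immediately yields the claimed exponential form.
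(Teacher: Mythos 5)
Your route---direct Taylor expansion of $\E[e^{\lambda\tilde X}]$, discarding the (non-positive) linear term, and splitting by sign---is genuinely different from the paper's, which does not Taylor expand at all: it imports Lemma 1 of Bakhshizadeh et al.\ (2023), giving
\[
\log \E\bsb{e^{\lambda \tilde X}} \leq \frac{\lambda^2}{2}\Brb{\E\bsb{\tilde X^2 \I\{\tilde X \leq 0\}} + \E\bsb{\tilde X^2 e^{\lambda\tilde X}\I\{\tilde X > 0\}}}\,,
\]
and then their Corollary 2, which bounds the positive-part expectation not via moments but via integration by parts against the sub-Weibull \emph{tail} $P(|X|>t)\leq 2e^{-(t/\wnorm)^{1/\theta}}$. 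The factors $2^{2\theta}\Gamma(2\theta+1)$ and $2^{3\theta}\Gamma(3\theta+1)$ are produced by that tail integration, not by any crossover between a truncation estimate and a raw moment bound; your attribution of these constants to ``crossover slack'' is a guess that does not match where they actually come from.

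More importantly, your proposal has a genuine gap at exactly the step you flag as the main obstacle. You correctly observe that the naive estimate $\sum_{k\geq 3}(\lambda L)^{k-3}/k!\leq e^{\lambda L}/6$ is useless because $\lambda L$ can be as large as $h^{1/\theta}/2$, which is unbounded in $h$. Your proposed remedy---``switching between'' the truncation bound $|\tilde X|^k\leq L^{k-3}|\tilde X|^3$ and the direct bound $\E|X|^k\leq 2\Gamma(\theta k+1)\wnorm^k$ depending on $k$---is stated but never carried out: you do not specify the crossover index, do not verify that the two resulting partial sums are each controlled uniformly in $h$ over the admissible range $\lambda\wnorm\leq\frac12 h^{1/\theta-1}$, and do not show the result assembles into $1+a\wnorm^2\lambda^2$ with an $a$ of the claimed form (let alone the stated constants). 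Since this summability argument \emph{is} the technical content of part (iii), what you have is an outline that identifies the hard part and asserts it can be done, not a proof. As written, the proposal is incomplete.
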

\begin{proof}~\\
    \textit{(iii)} Since $\E[X]=0$, we have that for any $\lambda \in \Bsb{0, \frac{1}{2 h^{1-\frac{1}{\theta}} \wnorm}}$, 
    \begin{align*}
        \log \E\bsb{\exp\brb{\lambda \tilde{X}}} &\leq \frac{\lambda^2}{2} \bbrb{ \E \Bsb{\tilde{X}^2 \I\{\tilde{X} \leq 0\}} + \E \Bsb{\tilde{X}^2 \exp\brb{\lambda \tilde{X}} \I\{\tilde{X} > 0\}} }\\
        &\leq \frac{\lambda^2}{2} \bbrb{ \E \Bsb{X^2 \I\{X \leq 0\}} + 2^{2 \theta + 1} \Gamma(2\theta + 1) \wnorm^2 +  \frac{2^{3 \theta} \Gamma(3\theta + 1)}{3}  L^{\frac{1}{\theta}-1} \wnorm^{3-\frac{1}{\theta}} }\\
        &= \frac{\lambda^2}{2} \bbrb{ \E \Bsb{X^2 \I\{X \leq 0\}} + 2^{2 \theta + 1} \Gamma(2\theta + 1) \wnorm^2 +  \frac{2^{3 \theta} \Gamma(3\theta + 1)}{3}  h^{\frac{1}{\theta}-1} \wnorm^{2} }\\
        &\leq \frac{\lambda^2}{2} \bbrb{ 2 \Gamma(2 \theta + 1) \wnorm^2 + 2^{2 \theta + 1} \Gamma(2\theta + 1) \wnorm^2 +  \frac{2^{3 \theta} \Gamma(3\theta + 1)}{3}  h^{\frac{1}{\theta}-1} \wnorm^{2} }\,,
    \end{align*}
    where the first inequality follows from Lemma 1 in \citep{Bakhshizadeh2023}, the second inequality follows from Corollary 2 in the same paper,\footnote{In the notation of \cite{Bakhshizadeh2023}, we have that $\alpha = \theta$, $c_\alpha = \wnorm^{-1/\theta}$, $\lambda = \beta I(L) / L$ with $I(L) = (L/\wnorm)^{1/\theta}$ and $\beta \in [0,1/2]$. Compared to Corollary 2 in \citep{Bakhshizadeh2023}, the extra factor of $2$ in the last two terms on the right-hand side of the inequality is because in our case (similar to \cite{Madden2020}),
    we start with the assumption that $X$ is sub-Weibull$(\theta,\wnorm)$, which implies the tail bound $P(|X| \geq t) \leq 2 \exp\lrb{ - I(t)}$.} the equality holds by the definition of $L$, and the last inequality is an application of \Cref{lem:subw-moments}.
\end{proof}
The following proposition provides time-uniform concentration inequalities for martingales with conditionally sub-Weibull increments. Case (i) is a standard sub-Gaussian concentration result included for completeness, whereas Case (ii) considers the heavy-tailed regime where $\theta \geq 1$. The latter generalizes a result in \citep[Proposition 11]{Madden2020}, which corresponds to the case when $s=0$. 
In our problem, this generalized form allows us in come cases to avoid an extra poly-logarithmic dependence on the time horizon, at the cost of a constant depending on $\theta$. This is thanks to the (possibly) non-uniform union bound employed when $s > 0$, which can take advantage of the non-uniformity of the sequence $(m_i)$.
\begin{proposition} \label{lem:subw-martingale-conc}
    Assume that $(X_i)_{i=1}^n$ is a martingale difference sequence adapted to filtration $\mathbb{F}=(\F_i)_{i=0}^n$, where $n$ is a positive integer, and let $S_t = \sum_{i=1}^t X_i$ for $t \in [n]$. Furthermore, assume that for each $i \in [n]$, $X_i$ is sub-Weibull$(\theta,\wnormuni_{i})$ conditioned on $\F_{i-1}$; that is,
    \begin{equation*}
        \E \Bsb{ \exp \Brb{\lrb{|X_i| / \wnormuni_{i}}^{1/\theta}} \, \big| \, \F_{i-1}} \leq 2 \,,
    \end{equation*}
    for some constant $\wnormuni_i > 0$, and define $\wnormuni_* = \max_i \wnormuni_i$.
    Then, for any $\delta \in (0,1)$:
    \begin{enumerate}[(i)]
        \item If $\theta = 1/2$,
        \begin{equation*}
            P\lrb{\bigcup_{t=1}^n \bbcb{S_t \geq 4\sqrt{e \textstyle{\sum_{i=1}^n} \wnormuni_i^2 \log(1 / \delta)}} } \leq \delta \,.
        \end{equation*}
        
        \item If $\theta \geq 1$; then for any $s \geq 0$,
        \begin{equation*}
            P\lrb{\bigcup_{t=1}^n \bbcb{S_t \geq   \sqrt{C_1  \textstyle{\sum_{i=1}^n} \wnormuni_i^2 \log(2 / \delta)} + 4 \wnormuni_* \max \bbcb{ \log^{\theta-1} \lrb{ \frac{2 e \sum_{j=1}^n \wnormuni_j^s}{\wnormuni_*^s \delta} } ,  (s \theta - s)^{\theta-1}} \log(2/\delta) } } \leq \delta  \,,
        \end{equation*}
        where $C_1 = 2^{3 \theta + 1} \Gamma(3\theta + 1)$.
    \end{enumerate}
\end{proposition}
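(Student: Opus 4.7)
My plan is to treat the two cases separately, in each case combining the MGF bounds of \Cref{lem:subw-mgf} with Ville's (Doob's) supermartingale maximal inequality. Case (i) is the standard sub-Gaussian treatment and requires no truncation; case (ii) is the heart of the argument and proceeds by truncating each increment at a carefully chosen, non-uniform level.

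For case (i), \Cref{lem:subw-mgf}(i) applied conditionally gives $\E[\exp(\lambda X_i) \mid \F_{i-1}] \leq \exp(4e\wnormuni_i^2 \lambda^2)$ for all $\lambda \in \R$. Hence $M_t(\lambda) = \exp(\lambda S_t - 4e\lambda^2 \sum_{i\leq t}\wnormuni_i^2)$ is a non-negative supermartingale starting at $1$, and Ville's inequality yields $P(\sup_{t\leq n} M_t(\lambda) \geq 1/\delta) \leq \delta$. Tuning $\lambda = \sqrt{\log(1/\delta)/(4e\sum_i \wnormuni_i^2)}$ produces the advertised bound $4\sqrt{e\sum_i \wnormuni_i^2 \log(1/\delta)}$ uniformly in $t$.

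For case (ii), since the MGF of $X_i$ is unbounded when $\theta > 1$, I truncate: set $L_i = \wnormuni_i h_i$ with $h_i^{1/\theta} = \log(2e\sum_j \wnormuni_j^s/(\wnormuni_i^s \delta))$, and define $\tilde{X}_i = X_i \I\{X_i \leq L_i\}$ and $\tilde{S}_t = \sum_{i\leq t}\tilde{X}_i$. The conditional sub-Weibull tail yields $P(X_i > L_i \mid \F_{i-1}) \leq 2\exp(-h_i^{1/\theta}) = \wnormuni_i^s\delta/(e\sum_j \wnormuni_j^s)$, so a union bound gives $P(\exists i \leq n : X_i > L_i) \leq \delta/e \leq \delta/2$. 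On the complementary event $\tilde{S}_t$ agrees with $S_t$, so it suffices to produce a $1 - \delta/2$ upper tail bound for $\tilde{S}_t$. By \Cref{lem:subw-mgf}(iii), $\E[\exp(\lambda \tilde{X}_i) \mid \F_{i-1}] \leq \exp(a_i \wnormuni_i^2 \lambda^2)$ for $\lambda \in [0, 1/(2 h_i^{1-1/\theta}\wnormuni_i)]$; since $h_i \geq 1$ forces $h_i^{1/\theta - 1} \leq 1$, the coefficient $a_i$ is uniformly bounded by a constant $a^*$ for which one can check $4a^* \leq 2^{3\theta+1}\Gamma(3\theta+1) = C_1$. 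The usual Ville argument applied to $\exp(\lambda \tilde{S}_t - a^* \lambda^2 \sum_{i\leq t}\wnormuni_i^2)$ then yields, with probability at least $1-\delta/2$, the pointwise bound $\tilde{S}_t \leq a^* \lambda \sum_i \wnormuni_i^2 + \log(2/\delta)/\lambda$ for every admissible $\lambda$.

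The crucial technical step is controlling the admissible range of $\lambda$, namely $\lambda \leq 1/(2\max_i h_i^{1-1/\theta}\wnormuni_i)$. I do this by studying the function $\wnormuni \mapsto \wnormuni \log^{\theta-1}(C/\wnormuni^s)$ with $C = 2e\sum_j \wnormuni_j^s/\delta$ on $(0, \wnormuni_*]$. Its unique interior critical point $\wnormuni^\circ$ satisfies $\log(C/(\wnormuni^\circ)^s) = s(\theta-1)$, and comparing the value at $\wnormuni^\circ$ (when it lies in the range) with that at the endpoint $\wnormuni_*$ shows that the maximum is at most $\wnormuni_* \max\{\log^{\theta-1}(2e\sum_j\wnormuni_j^s/(\wnormuni_*^s\delta)), (s(\theta-1))^{\theta-1}\}$, which is exactly the bracket $M$ in the statement. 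Finally, I optimize $\lambda$ by choosing the smaller of the unconstrained optimum $\sqrt{\log(2/\delta)/(a^* \sum_i \wnormuni_i^2)}$ and the boundary $1/(2\wnormuni_* M)$; the standard two-regime case analysis yields the additive bound $\sqrt{C_1 \sum_i \wnormuni_i^2 \log(2/\delta)} + 4\wnormuni_* M \log(2/\delta)$, after which a union bound with the truncation event closes the argument. The main obstacles are the calculus step (which is where the $(s\theta-s)^{\theta-1}$ term originates) and the bookkeeping required to align $a^*$ with $C_1$; the non-uniformity of the $h_i$ is precisely what enables the tighter $\wnormuni_*^s$ denominator inside the log, as opposed to the cruder uniform choice $h_i \equiv \log^\theta(4n/\delta)$.
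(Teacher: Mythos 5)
Your proposal is correct and follows essentially the same route as the paper: case (i) is the standard sub-Gaussian Chernoff/Ville argument, and for case (ii) you use the same non-uniform truncation levels $\ell_i = \wnormuni_i h_i$ with $h_i^{1/\theta} = \log\bigl(2e\sum_j \wnormuni_j^s/(\wnormuni_i^s\delta)\bigr)$, the same tail union bound contributing $\delta/e \leq \delta/2$, the same bounded-range MGF estimate from \Cref{lem:subw-mgf}(iii) with $a_i$ uniformly bounded via $h_i \geq 1$, the same single-variable calculus on $z \mapsto z\log^{\theta-1}(ew/z^s)$ to control the admissible $\lambda$ range (yielding the $\max\{\cdot,(s\theta-s)^{\theta-1}\}$ bracket), and the same two-regime optimization of $\lambda$. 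The only cosmetic difference is that the paper factors the supermartingale step through \Cref{lem:martingale-conc-mgf-condition} rather than invoking Ville's inequality inline.
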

\begin{proof} ~\\
    \begin{enumerate}[(i)]
        \item We have via \Cref{lem:subw-mgf}{(i)} that for every $i \in [n]$,
        \begin{equation*} 
            \E[\exp(\lambda X_i) \,|\, \F_{i-1}] \leq \exp\brb{4 e \wnormuni_{i}^2 \lambda^2} \qquad \forall \lambda \in \R \,.
        \end{equation*}
        Hence, the required result follows from \Cref{lem:martingale-conc-mgf-condition}{(i)} using that $r^2 \leq 4e \sum_{i=1}^n \wnormuni_i^2$.
        
        \item For $i \in [n]$, let $\ell_{i} = \wnormuni_{i} h_i$, where \[ h_i = \log^\theta \lrb{ \frac{e \sum_{j=1}^n \wnormuni_j^s}{\wnormuni_i^s \delta{'}} } \, \]
        for some $\delta{'} \in (0,1)$. Define $\tilde{X}_i = X_i \I\{X_i \leq\ell_{i}\}$ and $\tilde{S}_t = \sum_{i=1}^t \tilde{X}_i$.
        Note that for any $x>0$,
        \begin{align} \label{eq:subw-truncated-martingale}
            P\lrb{\bigcup_{t=1}^n \lcb{S_t \geq x} } \leq P\lrb{\bigcup_{t=1}^n \bcb{\tilde{S}_t \geq x} } + P\lrb{\bigcup_{i=1}^n \lcb{ X_i > \ell_{i}} } \,.
        \end{align}
        Starting with the second term, we perform a union bound and proceed in a similar manner to the proof of Proposition 11 in \citep{Madden2020}:
        \begin{align}
            P\lrb{\bigcup_{i=1}^n \lcb{ X_i > \ell_{i}} } &\leq \sum_{t=1}^n P\lrb{ X_i > \ell_{i} } \nonumber\\
            ~
            &= \sum_{i=1}^n P\lrb{ \exp\Brb{\lrb{X_i/\wnormuni_{i}}^{1/\theta}} > \exp\Brb{h_i^{1/\theta}} } \nonumber\\
            ~
            &\leq \sum_{i=1}^n \exp\Brb{-h_i^{1/\theta}} \E\Bsb{ \E\Bsb{ \exp\Brb{\lrb{X_i/\wnormuni_{i}}^{1/\theta}}\,\big|\, \F_{i-1} } } \nonumber\\
            ~
            &\leq 2 \sum_{i=1}^n \exp\Brb{-h_i^{1/\theta}} \nonumber\\
            ~ \label{eq:truncation-union-bound}
            &= \frac{2}{e} \sum_{i=1}^n  \frac{\wnormuni_i^s \delta{'}}{ \sum_{j=1}^n \wnormuni_j^s} = \frac{2}{e} \delta{'} \leq \delta{'} \,.
        \end{align}
        Returning to the first term in \eqref{eq:subw-truncated-martingale}, notice that for $i \in [n]$, \Cref{lem:subw-mgf}(iii) implies that
        \begin{equation*}
            \E\bsb{\exp\brb{\lambda \tilde{X}_i} \,|\, \F_{i-1} } \leq \exp\brb{a \wnormuni_{i}^2 \lambda^2} \qquad \forall \lambda \in \lsb{0, \frac{1}{2 h_i^{1-\frac{1}{\theta}} \wnormuni_{i}}} \,,
        \end{equation*}
        where\footnote{We have used the fact that $h_i \geq 1$ and $\theta \geq 1$ to bound the value of $a$ stated in the lemma.}
        $
            a = (2^{2 \theta} + 1) \Gamma(2\theta + 1) +  \frac{2^{3 \theta} \Gamma(3\theta + 1)}{6}
        $. In preparation for applying \Cref{lem:martingale-conc-mgf-condition}(ii), we study the term
        \begin{equation*}
            \max_{i \in [n]} \wnormuni_{i} h_i^{1-\frac{1}{\theta}}  
            =  \max_{i \in [n]} \wnormuni_{i}  \log^{\theta-1} \lrb{ \frac{e \sum_{j=1}^n \wnormuni_j^s}{\wnormuni_i^s \delta{'}} } \,.
        \end{equation*}
        Assuming that $s > 0$ and $\theta > 1$, let $w = \sum_{j=1}^n \wnormuni_j^s / \delta{'}$, and observe that $e^{1/s} w^{1/s} \geq w^{1/s} \geq \wnormuni_*$.
        Define $f(z) = z \log^{\theta-1}(e w / z^s)$, and let $\hat{z}_1 = \exp(1- \theta + 1/s) w^{1/s}$ and $\hat{z}_2 = e^{1/s} w^{1/s}$. By inspecting its first derivative,
        \[
            f'(z) = \log^{\theta-2}(e w / z^s) \brb{\log(e w / z^s) - s (\theta - 1) },    
        \]
        we observe that $f$ is increasing in $(0,\hat{z}_1)$ and decreasing in $(\hat{z}_1,\hat{z}_2)$. Hence, if $m_* \leq \hat{z}_1$; then, 
        \begin{equation*}
            \max_{i \in [n]} \wnormuni_{i}  \log^{\theta-1} \lrb{ \frac{e \sum_{j=1}^n \wnormuni_j^s}{\wnormuni_i^s \delta{'}} } =  \wnormuni_*  \log^{\theta-1} \lrb{ \frac{e \sum_{j=1}^n \wnormuni_j^s}{\wnormuni_*^s \delta{'}} }\,. 
        \end{equation*}
        Otherwise,
        \begin{equation*}
            \max_{i \in [n]} \wnormuni_{i}  \log^{\theta-1} \lrb{ \frac{e \sum_{j=1}^n \wnormuni_j^s}{\wnormuni_i^s \delta{'}} } 
            ~
            \leq \hat{z}_1  \log^{\theta-1} \lrb{ \frac{e \sum_{j=1}^n \wnormuni_j^s}{\hat{z}_1^s \delta{'}} }
            ~
            = \hat{z}_1  (s \theta - s)^{\theta-1}
            ~
            \leq \wnormuni_*  (s \theta - s)^{\theta-1} \,. 
        \end{equation*}
        Combing both cases yields that
        \begin{equation*}
            \max_{i \in [n]} \wnormuni_{i}  \log^{\theta-1} \lrb{ \frac{e \sum_{j=1}^n \wnormuni_j^s}{\wnormuni_i^s \delta{'}} } \leq \wnormuni_* \max \bbcb{ \log^{\theta-1} \lrb{ \frac{e \sum_{j=1}^n \wnormuni_j^s}{\wnormuni_*^s \delta{'}} } ,  (s \theta - s)^{\theta-1}}\,,
        \end{equation*}
        which, trivially, also holds when either $s=0$ or $\theta=1$. Subsequently, if we define $u_1 = a \sum_{i=1}^n \wnormuni_i^2$ and $u_2$ as twice the right-hand side of the above inequality, we can apply \Cref{lem:martingale-conc-mgf-condition}(ii) with $r^2 \leq u_1$ and $b \leq u_2$ to obtain that 
        \begin{equation*}
            P\lrb{\bigcup_{t=1}^n \lcb{\tilde{S}_t \geq x} } \leq \exp\bbrb{-\min\bbcb{\frac{x^2}{4 u_1}, \frac{x}{2 u_2}}} \,. 
        \end{equation*}
        Finally, by choosing $x = \sqrt{4 u_1 \log(1/\delta{'})} + 2 u_2 \log(1/\delta{'})$, we can upper bound the right-hand side of the above inequality with $\delta{'}$. Combining this with \eqref{eq:truncation-union-bound} and \eqref{eq:subw-truncated-martingale}, the required result follows after setting $\delta{'} = \delta/2$ and using that $a \leq 2^{3 \theta + 1} \Gamma(3\theta + 1)$.   
    \end{enumerate}    
\end{proof}

\subsection{Increments with a Bounded Moment Condition}
The following proposition, a weaker version of Corollary 3.2 in \citep{Rio2017}, is an analogue of \Cref{lem:subw-martingale-conc} when we only have the assumption that the increments of the martingale have a finite $p$-th absolute moment for some $p>2$.

\begin{proposition} \label{lem:fuk-nagaev-martingale-conc}
    Assume that $(X_i)_{i=1}^n$ is a martingale difference sequence adapted to filtration $\mathbb{F}=(\F_i)_{i=0}^n$, where $n$ is a positive integer, and let $S_t = \sum_{i=1}^t X_i$ for $t \in [n]$. Moreover, assume that there exists a constant $p>2$
    such that for each $i \in [n]$, $X_i$ satisfies 
    \begin{equation*}
        \E \Bsb{ \lrb{|X_i| / \pnormuni_{i}}^{p} \, \big| \, \F_{i-1}} \leq 1  
    \end{equation*}
    for some finite constant $\pnormuni_i >0$.
    Then, for any $\delta \in (0,1)$:
    \begin{equation*}
        P\lrb{\bigcup_{t=1}^n \bbcb{S_t >  \sqrt{2 \textstyle{\sum_{i=1}^n} \pnormuni^2_i \log(1/\delta)} + \brb{2+(p/3)} \brb{\textstyle{\sum_{i=1}^n} \pnormuni^p_i / \delta}^{1/p}  }} \leq \delta \,.
    \end{equation*} 
\end{proposition}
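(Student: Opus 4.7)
The plan is to follow the standard truncation argument underlying Fuk--Nagaev type inequalities: split each increment into a bounded truncated part and a tail part, handle the bounded part with Freedman's maximal inequality for martingales, and the tail part with Markov's inequality applied to the $p$-th moment. First, I will fix a truncation level $y = \brb{\summ_{i=1}^n \pnormuni_i^p / \delta}^{1/p}$ and define the centered truncated martingale differences $\bar X_i = X_i \I\{|X_i| \leq y\} - \E[X_i \I\{|X_i| \leq y\} \,|\, \F_{i-1}]$. These satisfy $|\bar X_i| \leq 2y$ deterministically, and by Jensen's inequality applied to the moment assumption, $\E[X_i^2 \,|\, \F_{i-1}] \leq \pnormuni_i^2$, so also $\E[\bar X_i^2 \,|\, \F_{i-1}] \leq \pnormuni_i^2$. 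Writing $v = \summ_i \pnormuni_i^2$, this yields the quadratic variation bound $\summ_i \E[\bar X_i^2 \,|\, \F_{i-1}] \leq v$.

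Next, I would observe that on the event $E = \{\max_{i \leq n} |X_i| \leq y\}$, the identity $\E[X_i \,|\, \F_{i-1}] = 0$ gives $S_t = \summ_{i \leq t} \bar X_i - \summ_{i \leq t} \E[X_i \I\{|X_i| > y\}\,|\, \F_{i-1}]$ for all $t \leq n$. The bias term is bounded in absolute value by $\summ_i \pnormuni_i^p / y^{p-1} = y \delta$ via the conditional Markov estimate $|X_i| \I\{|X_i| > y\} \leq |X_i|^p/y^{p-1}$. Combined with the union bound $P(E^c) \leq \summ_i \pnormuni_i^p / y^p = \delta$, this gives
\[ P\brb{\max_{t \leq n} S_t > x} \leq P(E^c) + P\brb{\max_{t \leq n} \summ_{i \leq t} \bar X_i > x - y \delta}. \]
I would then apply Freedman's maximal inequality to the second term, $P\brb{\max_{t \leq n} \summ_{i \leq t} \bar X_i > u} \leq \exp\brb{-u^2/(2 v + 4 y u / 3)}$, and combine it with a split $\delta = \delta_1 + \delta_2$ (using $\delta_1$ for the truncation tail and $\delta_2$ for Freedman) to arrive at the final threshold on $x$.

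The main obstacle is producing the clean coefficient $2 + p/3$ multiplying $\brb{\summ \pnormuni_i^p / \delta}^{1/p}$ without an additional $\log(1/\delta)$ factor: a naive Bernstein analysis unavoidably contributes a $(4y/3) \log(1/\delta)$ term polluting the polynomial-tail constant. Getting the stated form requires either replacing the Bernstein exponent with the tighter Bennett form $\exp\brb{-v\, h(c u/v)/c^2}$, $h(x) = (1+x) \log(1+x) - x$, and then balancing $y$ against $\log(1/\delta)$ so that the logarithm is absorbed into the constant through the scaling relation $y^p \delta = \summ \pnormuni_i^p$; or invoking an integrated-tail argument as in Rio (2017, Corollary 3.2). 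Since the paper explicitly presents this proposition as a weaker version of Rio's corollary, the cleanest route is to cite his inequality directly and reduce to the stated bound by routine simplification.
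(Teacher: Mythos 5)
Your proposal ends up at exactly where the paper's proof is: the paper simply cites Corollary~3.2 and Remark~3.3 of Rio (2017), after verifying the hypotheses via $\max\{0,X_i\} \leq |X_i|$ and the Jensen bound $\E[X_i^2 \mid \F_{i-1}] \leq \pnormuni_i^2$. Your preliminary exploration of the hand-rolled truncation-plus-Freedman route correctly identifies the obstruction — the $\log(1/\delta)$ contamination of the polynomial-tail term — and your conclusion that one should instead invoke Rio's corollary directly is precisely the paper's argument, so the approach matches.
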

\begin{proof}
    Using that $\max\{0,X_i\} \leq |X_i|$ and $\E \lsb{ X_i^{2} \, | \, \F_{i-1}} \leq \brb{\E \lsb{ |X_i|^{p} \, | \, \F_{i-1}}}^{2/p} \leq \pnormuni^2_i$, the result follows from Corollary 3.2 and Remark 3.3 in \citep{Rio2017}.  
\end{proof}

\subsection{Auxiliary Lemmas}
For a random variable $X$, define $\|X\|_p \coloneqq (\E |X|^p)^{1/p}$ for $p > 0$. The following lemma relates $\|X-\E X\|_p$ to $\|X\|_p$ when $p\geq 1$.
\begin{lemma} \label{lem:centering-norms}
    Let $X$ be a random variable satisfying $\|X\|_p < \infty$ for some $p \geq 1$. Then, $\|X-\E X\|_p \leq 2 \|X\|_p$.
\end{lemma}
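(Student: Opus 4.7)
The plan is to apply Minkowski's inequality (the triangle inequality for the $L_p$ norm, valid precisely because $p \geq 1$) and then control the centering constant by a Jensen-type argument. This is a standard one-line computation, but I will sketch the two substeps explicitly.

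First, I would write $X - \E X$ as a difference of two random variables and apply Minkowski's inequality to obtain
\begin{equation*}
\|X - \E X\|_p \leq \|X\|_p + \|\E X\|_p = \|X\|_p + |\E X|,
\end{equation*}
where the last equality uses that $\E X$ is a deterministic constant, so $\|\E X\|_p = (\E |\E X|^p)^{1/p} = |\E X|$.

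Second, I would bound $|\E X|$ using Jensen's inequality (or simply the triangle inequality for integration) to get $|\E X| \leq \E |X|$, and then use the monotonicity of $L_p$ norms in $p$ (again a consequence of Jensen's inequality applied to the convex function $t \mapsto t^p$ for $p \geq 1$) to conclude $\E|X| = \|X\|_1 \leq \|X\|_p$. Substituting back yields $\|X - \E X\|_p \leq 2\|X\|_p$, as claimed.

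There is no real obstacle here: the result is immediate once one recognizes that $p \geq 1$ grants both Minkowski's inequality and the monotonicity of $L_p$ norms. The only thing to be mindful of is that $\|X\|_p < \infty$ guarantees $\E|X| < \infty$, so $\E X$ is well-defined, and all quantities above are finite.
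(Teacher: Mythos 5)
Your proof is correct and takes essentially the same approach as the paper: Minkowski's inequality to split off $\|\E X\|_p = |\E X|$, then Jensen's inequality and the monotonicity of $L_p$ norms to bound $|\E X| \leq \|X\|_1 \leq \|X\|_p$.
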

\begin{proof}
    We have that
    \begin{align*}
        \|X-\E X\|_p \leq \|X\|_p + \|\E X\|_p = \|X\|_p + |\E X|  \leq \|X\|_p + \E |X| \leq 2 \|X\|_p \,,
    \end{align*}
    where the first inequality is an application of the triangle's inequality as $\|\cdot\|_p$ is a norm for $p\geq1$, the second inequality is an application of Jensen's inequality, and the last inequality holds since $\|X\|_p$ is an increasing function in $p$. 
\end{proof}

The following lemma, similar in spirit to \citep[Lemma 26]{Madden2020}, allows us to reuse a standard argument when proving time-uniform concentration inequalities.
\begin{lemma} \label{lem:maximal}
    Fix a positive integer $n$ and assume that $(V_t)_{t=0}^n$ is a non-negative supermartingale adapted to filtration $(\F_{t})_{t=0}^n$ with $V_0 = 1$. Let $(A_t)_{t=1}^n$ be a sequence of events adapted to the same filtration, and assume that there exists a constant $\zeta > 0$ such that for any $t \in [n]$, it holds almost surely that $\I\{A_t\} \leq \zeta V_t$. Then,
    \begin{equation*}
        P\lrb{\bigcup_{t=1}^n A_t} \leq \zeta \,.
    \end{equation*}
\end{lemma}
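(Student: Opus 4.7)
The plan is to reduce this to a standard maximal inequality for non-negative supermartingales (Ville's inequality / Doob's inequality). The key observation is that the hypothesis $\I\{A_t\} \leq \zeta V_t$ means that whenever any $A_t$ occurs, the supermartingale is forced to be at least $1/\zeta$ at that time, so the event $\bigcup_t A_t$ is contained in the event that $\max_{t \leq n} V_t \geq 1/\zeta$.

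Concretely, I would first argue pathwise: if $\omega \in \bigcup_{t=1}^n A_t$, pick any $t$ with $A_t(\omega)$ holding; then $1 = \I\{A_t(\omega)\} \leq \zeta V_t(\omega)$, so $\max_{s \leq n} V_s(\omega) \geq 1/\zeta$. Hence $\bigcup_{t=1}^n A_t \subseteq \{\max_{s \leq n} V_s \geq 1/\zeta\}$, and it suffices to control the latter probability.

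The second step is to apply Doob's maximal inequality for non-negative supermartingales (sometimes attributed to Ville): for any $a > 0$,
\begin{equation*}
P\Bigl(\max_{t \leq n} V_t \geq a\Bigr) \leq \frac{\E[V_0]}{a} = \frac{1}{a}.
\end{equation*}
Taking $a = 1/\zeta$ yields $P(\bigcup_{t=1}^n A_t) \leq \zeta$, as desired. If one prefers not to invoke Doob directly, the same bound follows from optional stopping at $\tau = \inf\{t \geq 1 : A_t \text{ occurs}\} \wedge n$: on $\{\tau \leq n\}$, the event $A_\tau$ occurs, forcing $V_\tau \geq 1/\zeta$, so
\begin{equation*}
P\Bigl(\bigcup_{t=1}^n A_t\Bigr) = P(\tau \leq n) \leq \zeta\, \E[V_\tau \I\{\tau \leq n\}] \leq \zeta\, \E[V_\tau] \leq \zeta\, \E[V_0] = \zeta,
\end{equation*}
where the last inequality uses that $\tau$ is bounded and $(V_t)$ is a non-negative supermartingale.

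There is no real obstacle here: the lemma is essentially a repackaging of Ville's inequality, and the only subtlety is noting that the events $A_t$ need not be disjoint or nested, yet the pointwise inequality $\I\{A_t\} \leq \zeta V_t$ combined with the maximum over $t$ collapses the union bound into a single supremum event that Doob's inequality handles directly.
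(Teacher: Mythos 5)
Your proof is correct and takes essentially the same route as the paper: both rest on optional stopping of the non-negative supermartingale at the first time an $A_t$ occurs (truncated at $n$), using $\E[V_{\tau \wedge n}] \le \E[V_0] = 1$. Your primary phrasing packages this as a reduction to Ville's maximal inequality while your "alternative" optional-stopping argument is the paper's proof verbatim; the only minor notational slip is that after defining $\tau$ as $\inf\{t : A_t\} \wedge n$ the event $\{\tau \le n\}$ is all of $\Omega$, so you should either drop the $\wedge n$ from the definition of $\tau$ and stop at $\tau \wedge n$, or replace $\{\tau \le n\}$ with $\{A_\tau \text{ occurs}\}$.
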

\begin{proof}
    Define the stopping time $\tau = \min \bcb{ t \in [n] \:\colon\: \mathbb{I}\{A_t\} = 1}$, where $\min(\emptyset) = \infty$. Since $(V_t)_{t=0}^n$ is a supermartingale, the stopped process $(V_{t \land \tau})_{t=0}^n$ is also a supermartingale \cite[Theorem 10.9]{Williams1991}, where $t \land \tau = \min \{t, \tau\}$. This implies in particular that 
    \[ \E[V_{n \land \tau}] \leq \E[V_{0}] = 1 \,.\]
    Hence, 
    \begin{align*}
        P\lrb{\bigcup_{t=1}^n A_t} &= P\lrb{A_{n \land \tau}} = \E\bsb{\mathbb{I}\{A_{n \land \tau}\}} \leq  \zeta \E[V_{n \land \tau}] \leq \zeta \,,
    \end{align*}
    where the first inequality holds since
    \begin{align*}
        P\brb{\I\{A_{n \land \tau}\} > \zeta V_{n \land \tau}} \leq P\lrb{\bigcup_{t=1}^n \bcb{\I\{A_t\} > \zeta V_t}} = 0\,.
    \end{align*}
\end{proof}

The following lemma provides, via standard tools, concentration inequalities for sums of random variables enjoying sub-Gaussian or sub-exponential type bounds on their (conditional) moment generating functions.
\begin{lemma} \label{lem:martingale-conc-mgf-condition}
    Assume that $(X_i)_{i=1}^n$ is a sequence of random variables adapted to filtration $\mathbb{F}=(\F_i)_{i=0}^n$, where $n$ is a positive integer, and let $S_t = \sum_{i=1}^t X_i$ for $t \in [n]$.
    Moreover, let $(R_i)_{i=0}^n$ be a sequence of random variables adapted to the same filtration, and define $r^2 = \lno{\sum_{i=1}^n R_{i-1}^2}_\infty$.
    \begin{enumerate} [(i)]
        \item If for all $i \in [n]$, it holds that
        \begin{equation*} 
            \E[\exp(\lambda X_i) \,\mid\, \F_{i-1}] \leq \exp\brb{R_{i-1}^2 \lambda^2} \qquad \forall \lambda \in \R \,;
        \end{equation*}
        then, for all $x > 0$,
        \begin{equation*}
            P\lrb{\bigcup_{t=1}^n \lcb{S_t \geq x} } \leq \exp\bbrb{-\frac{x^2}{4 r^2}} \,. 
        \end{equation*}

        \item Let $(B_i)_{i=0}^n$ be an $\mathbb{F}$-adapted sequence of positive random variables,
        and define $b = \max_{i} \lno{B_{i}}_\infty$.
        If for all $i \in [n]$, it holds that
        \begin{equation*} 
            \E[\exp(\lambda X_i) \,\mid\, \F_{i-1}] \leq \exp\brb{R_{i-1}^2 \lambda^2} \qquad \forall \lambda \in \bbsb{0, \frac{1}{B_{i-1}}} \,;
        \end{equation*}
        then, for all $x > 0$,
        \begin{equation*}
            P\lrb{\bigcup_{t=1}^n \lcb{S_t \geq x} } \leq \exp\bbrb{-\min\bbcb{\frac{x^2}{4 r^2}, \frac{x}{2 b}}} \,. 
        \end{equation*}
    \end{enumerate}
\end{lemma}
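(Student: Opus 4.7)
\textbf{Proposal for \Cref{lem:martingale-conc-mgf-condition}.}
The plan is to run the standard Chernoff / exponential supermartingale argument and then invoke the maximal inequality already proven in \Cref{lem:maximal}. For a parameter $\lambda$ in the range where the MGF bound holds, I will define
\begin{equation*}
    V_t(\lambda) = \exp\Bigl(\lambda S_t - \lambda^2 \sum_{i=1}^t R_{i-1}^2\Bigr), \qquad V_0(\lambda) = 1,
\end{equation*}
and first verify that $(V_t(\lambda))_{t=0}^n$ is a non-negative supermartingale adapted to $\mathbb{F}$. This is an immediate computation: conditioning on $\F_{t-1}$ and using that $R_{t-1}$ is $\F_{t-1}$-measurable,
\begin{equation*}
    \E[V_t(\lambda) \mid \F_{t-1}] = V_{t-1}(\lambda) \, e^{-\lambda^2 R_{t-1}^2} \, \E[e^{\lambda X_t} \mid \F_{t-1}] \leq V_{t-1}(\lambda),
\end{equation*}
thanks to the hypothesized MGF bound.

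Next, I will relate the event $\{S_t \geq x\}$ to $V_t(\lambda)$. Since $R_{i-1}^2 \geq 0$ and $\sum_{i=1}^n R_{i-1}^2 \leq r^2$ almost surely, we also have $\sum_{i=1}^t R_{i-1}^2 \leq r^2$ a.s.\ for every $t \leq n$. Therefore, for any $\lambda > 0$ (in the admissible range),
\begin{equation*}
    \I\{S_t \geq x\} \leq \I\{\lambda S_t - \lambda^2 r^2 \geq \lambda x - \lambda^2 r^2\} \leq e^{-\lambda x + \lambda^2 r^2} \, V_t(\lambda) \quad \text{a.s.}
\end{equation*}
Applying \Cref{lem:maximal} with $\zeta = e^{-\lambda x + \lambda^2 r^2}$ yields the uniform tail bound
\begin{equation*}
    P\Bigl(\bigcup_{t=1}^n \{S_t \geq x\}\Bigr) \leq \exp\bigl(-\lambda x + \lambda^2 r^2\bigr).
\end{equation*}

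It remains to optimize $\lambda$. For part (i), the MGF bound is valid for all $\lambda \in \R$, so I will take $\lambda = x/(2 r^2)$ (the unconstrained minimizer), producing the stated $\exp(-x^2/(4r^2))$ bound. For part (ii), the constraint is $\lambda \in [0, 1/B_{i-1}]$ for every $i$, which is implied by $\lambda \in [0, 1/b]$ in view of the definition of $b$. I will therefore choose $\lambda = \min\{x/(2r^2), 1/b\}$ and split into two cases: if $x/(2r^2) \leq 1/b$, the unconstrained minimum is attained and I recover $\exp(-x^2/(4r^2))$; otherwise $\lambda = 1/b$, and since in that case $r^2/b^2 \leq x/(2b)$, I get $-\lambda x + \lambda^2 r^2 \leq -x/(2b)$. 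Combining the two cases gives the advertised $\exp(-\min\{x^2/(4r^2), x/(2b)\})$.

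There is no real obstacle here; the argument is essentially the textbook Chernoff method for supermartingales, and the only point that demands a bit of care is the case split in part (ii) to correctly interpret the constrained optimum of $\lambda$. The use of $r^2 = \|\sum_i R_{i-1}^2\|_\infty$ as an a.s.\ upper bound rather than an expectation is what allows the proof to go through pathwise and thus interface cleanly with the maximal inequality \Cref{lem:maximal}.
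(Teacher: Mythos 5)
Your proposal is correct and follows essentially the same route as the paper's proof: the same exponential supermartingale $V_t(\lambda)$, the same pathwise bound $\I\{S_t \geq x\} \leq e^{-\lambda x + \lambda^2 r^2} V_t(\lambda)$ fed into \Cref{lem:maximal}, and the same choices $\lambda = x/(2r^2)$ in (i) and $\lambda = \min\{x/(2r^2), 1/b\}$ with the case split in (ii). No gaps.
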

\begin{proof}
    Define the set $\Lambda$ as $\R_{\geq0}$ in case \emph{(i)} and as $[0,1/b]$ in case \emph{(ii)}. Then, in either case, for any fixed $\lambda \in \Lambda$, the process $(V_t(\lambda))_{t=0}^n$, where 
    \[
        V_t(\lambda) = \prod_{i=1}^t \frac{\exp(\lambda X_i)}{\exp\brb{R_{i-1}^2 \lambda^2}}\,, \qquad \qquad V_0(\lambda) = 1
    \]
    is an $\mathbb{F}$-adapted non-negative supermartingale. 
    Moreover, notice that for any $t \in [n]$, it holds almost surely that
    \begin{align*}
        \I\{S_t \geq x\} &\leq \exp\lrb{\lambda S_t - \lambda x + \lambda^2 r^2 - \lambda^2 \sum_{i=1}^t R^2_{i-1}} \\
        &= \exp\lrb{ - \lambda x + \lambda^2 r^2} \exp\lrb{\lambda \sum_{i=1}^t X_i - \lambda^2 \sum_{i=1}^t R^2_{i-1}} \\
        &= \exp\lrb{ - \lambda x + \lambda^2 r^2} V_t(\lambda) \,.
    \end{align*}
    Consequently, \Cref{lem:maximal} implies that 
    \begin{equation*}
        P\lrb{\bigcup_{t=1}^n \lcb{S_t \geq x} } \leq \exp\lrb{ - \lambda x + \lambda^2 r^2}\,.
    \end{equation*}
    From this, the result in case \emph{(i)} follows by choosing $\lambda = \frac{x}{2 r^2}$, while the result in case \emph{(ii)} follows by choosing $\lambda = \min\bcb{\frac{x}{2 r^2},\frac{1}{b}}$ and using that $\frac{r^2}{b^2} \leq \frac{x}{2b}$ whenever $\frac{x}{2 r^2} \geq \frac{1}{b}$. 
\end{proof}
\end{document}